\theoremstyle{plain}
\newtheorem{theorem}{Theorem}[section]
\newtheorem{lemma}[theorem]{Lemma}
\theoremstyle{definition}
\title{A Theoretical Comparison of Graph Neural Network Extensions \vspace{80pt}}
\author{\Large Pál András Papp \footnotesize \vspace{8pt} \\ ETH Zürich \vspace{2pt} \\ apapp@ethz.ch
\and \and \Large Roger Wattenhofer \footnotesize \vspace{8pt} \\ ETH Zürich \vspace{2pt} \\ wattenhofer@ethz.ch}
\date{\vspace{35pt}}
\begin{document}

\begin{titlingpage}
\maketitle
\vspace{90pt}
\begin{abstract}
	{We study and compare different Graph Neural Network extensions that increase the expressive power of GNNs beyond the Weisfeiler-Leman test. We focus on (i) GNNs based on higher order WL methods, (ii) GNNs that preprocess small substructures in the graph, (iii) GNNs that preprocess the graph up to a small radius, and (iv) GNNs that slightly perturb the graph to compute an embedding. We begin by presenting a simple improvement for this last extension that strictly increases the expressive power of this GNN variant. Then, as our main result, we compare the expressiveness of these extensions to each other through a series of example constructions that can be distinguished by one of the extensions, but not by another one. We also show negative examples that are particularly challenging for each of the extensions, and we prove several claims about the ability of these extensions to count cliques and cycles in the graph.} 
\end{abstract}
	
\vspace{5pt}
	
\end{titlingpage}

\setcounter{page}{2}

\pagenumbering{arabic}

\section{Introduction}

Due to the prominence of graph-structured data in numerous applications, Graph Neural Networks (GNNs) have been one of the main success stories in machine learning in the past few years. GNNs have produced state-of-the-art results in a wide range of areas, including quantum chemistry, molecule recognition, recommendation systems or social networks \cite{gilmer2017neural, fout2017protein, ying2018graph, sanchez2020learning}.

From a theoretical perspective, one of the most fundamental questions about GNNs is their expressive power, i.e. what are the things that GNNs can and cannot compute. In this sense, the most important limitation of standard GNNs is that their expressiveness is upper bounded by the so-called Weisfeiler-Leman (or $1$-WL) test. This implies that GNNs can sometimes not even distinguish very simple graphs. Hence there were various suggestions to develop GNN extensions with expressive power beyond $1$-WL, e.g. by augmenting the GNN with subgraph counts, or by letting the GNN also observe perturbed variants of the graph.

However, many open questions remain regarding the expressive power of these GNN extensions, and in particular, about how the different extensions relate to each other in terms of expressiveness. Whenever a new GNN variant is introduced, the corresponding theoretical analysis usually shows it to be more powerful than $1$-WL, and sometimes also compares it to the classical $k$-WL hierarchy. However, it is unclear whether the WL hierarchy is the correct tool to measure the expressiveness of these extensions, since in contrast to the locality of GNNs, $k$-WL is a concept of global comparison over two graphs. Furthermore, the hierarchy is too coarse to be useful in practice: any meaningful GNN extension is already beyond $1$-WL, whereas $3$-WL is already so powerful that it essentially recognizes all graphs apart from some highly artificial counterexamples. Indeed, many GNN extensions are only compared directly to $2$-WL \cite{bouritsas2020improving, simplicial, reconstruction}.

This raises a natural question: Can we find a more meaningful way to measure the expressiveness of GNN extensions? In fact, the ideas behind the extensions themselves already point out a straightforward way to do this. That is, if we focus on approaches that preserve the locality and permutation-equivariance of standard GNNs, we find that GNN extensions in the literature are essentially based around three main ideas, and each of these ideas provides a natural way to define an alternative hierarchy of expressiveness for GNNs.

Our main goal in the paper is to study and compare these alternative hierarchies, and hence indirectly to compare the expressive power of the different GNN extensions themselves. In particular, we consider: (i) GNNs where subgraphs up to size $k$ are preprocessed initially ($S_k$), (ii) GNNs where the $k$-hop induced neighborhood of each node is preprocessed initially ($N_k$), and (iii) GNNs where $k$ nodes are removed or marked for symmetry breaking ($M_k$). Each of these is a prominent approach to increase the expressiveness of GNNs, and has been studied before in several theoretical or empirical works. We compare the expressive power of these GNN extensions to each other and to the classical WL hierarchy.

Our main contributions are as follows:
\begin{itemize}[topsep=5pt,itemsep=0pt,partopsep=5pt,parsep=5pt]
    \item We identify three main approaches used in the literature to increase the expressive power of GNNs, and we define alternative hierarchies of expressiveness based on the largest expressive power that can be achieved with each approach.
    \item For the symmetry breaking approach, we introduce and study a new GNN extension (named GNNs with markings), and we show that it is strictly more expressive than previous extensions of this type.
    \item As our main result, we compare the expressive power of different GNN extensions by showing specific graph constructions that can be distinguished by one extension, but not by another one. We illustrate a summary of our findings in Figure \ref{fig:sum}. Note that in many cases, there is no strict ordering of expressiveness among the GNN variants: for a specific pair of extensions, we find that both of them can distinguish some graphs that the other one cannot. We also point out some cases where one extension is strictly superior to another one in terms of expressiveness.
    \item As an alternative measure of expressiveness, we prove several (positive and negative) results on the ability of the extensions to count simple substructures in a graph, such as cliques or cycles.
\end{itemize}

\begin{figure}
\centering
\resizebox{0.48\textwidth}{!}{\begin{tikzpicture}

	\draw(100pt,100pt) rectangle (150pt,150pt);
	
	\node[anchor=center] at (125pt,125pt) {\normalsize $1$-WL};
	
	\draw(105pt,150pt) rectangle (145pt,165pt);
	\draw(105pt,165pt) rectangle (145pt,180pt);
	\draw(105pt,180pt) rectangle (145pt,195pt);
	\draw(105pt,195pt) rectangle (145pt,210pt);
	\draw(105pt,210pt) rectangle (145pt,225pt);
	\draw[ultra thick](105pt,210pt) -- (145pt,210pt);
	
	\node[anchor=center] at (125pt,157.5pt) {\normalsize $S_3$};
	\node[anchor=center] at (125pt,172.5pt) {\normalsize $S_4$};
	\node[anchor=center] at (125pt,187.5pt) {\normalsize $S_5$};
	\node[anchor=center] at (125pt,202.5pt) {\normalsize $S_6$};
	\node[anchor=center] at (125pt,217.5pt) {\normalsize $S_k$};
	
	\draw(105pt,100pt) rectangle (145pt,85pt);
	\draw[ultra thick](105pt,85pt) -- (145pt,85pt);
	\draw(105pt,85pt) rectangle (145pt,70pt);
	
	\node[anchor=center] at (125pt,92.5pt) {\normalsize $2$-WL};
	\node[anchor=center] at (125pt,77.5pt) {\normalsize $k$-WL};
	
	\draw(85pt,105pt) rectangle (100pt,145pt);
	\draw(70pt,105pt) rectangle (85pt,145pt);
	\draw(55pt,105pt) rectangle (70pt,145pt);
	\draw[ultra thick](55pt,105pt) -- (55pt,145pt);
	\draw(40pt,105pt) rectangle (55pt,145pt);
	
	\node[anchor=center] at (92.5pt,125pt) {\normalsize $N_1$};
	\node[anchor=center] at (77.4pt,125pt) {\normalsize $N_2$};
	\node[anchor=center] at (62.4pt,125pt) {\normalsize $N_3$};
	\node[anchor=center] at (47.3pt,125pt) {\normalsize $N_k$};
	
	\draw(150pt,105pt) rectangle (165pt,145pt);
	\draw(165pt,105pt) rectangle (180pt,145pt);
	\draw[ultra thick](180pt,105pt) -- (180pt,145pt);
	\draw(180pt,105pt) rectangle (195pt,145pt);
	
	\node[anchor=center] at (157.4pt,125pt) {\normalsize $M_1$};
	\node[anchor=center] at (172.3pt,125pt) {\normalsize $M_2$};
	\node[anchor=center] at (187.4pt,125pt) {\normalsize $M_k$};
	
	\draw[very thick, gray, arrows=-latex] (92.5pt,105pt) -- (92.5pt,60pt) -- (184pt,60pt)  -- (184pt,105pt);
	\draw[very thick, gray, arrows=-latex] (92.5pt,77.5pt) -- (105pt,77.5pt);
	
	\draw[white, fill=white](182pt,88pt) rectangle (186pt,92pt);
	\draw[very thick, gray, arrows=-latex] (145pt,90pt) -- (205pt,90pt) -- (205pt,221pt)  -- (145pt,221pt);
	\draw[very thick, gray, arrows=-latex] (191pt,90pt) -- (191pt,105pt);
	
	\draw[white, fill=white](91pt,94pt) rectangle (93pt,98pt);
	\draw[very thick, gray, arrows=-latex] (105pt,96pt) -- (47.5pt,96pt) -- (47.5pt,105pt);
	
	\draw[very thick, gray, arrows=-latex] (145pt,202.5pt) -- (172.5pt,202.5pt) -- (172.5pt,145pt);
	\draw[very thick, gray, arrows=-latex] (145pt,172.5pt) -- (155pt,172.5pt) -- (155pt,145pt);
	
	\draw[very thick, gray, arrows=-latex] (105pt,172.5pt) -- (89pt,172.5pt) -- (89pt,145pt);
	\draw[very thick, gray, arrows=-latex] (105pt,187.5pt) -- (77.5pt,187.5pt) -- (77.5pt,145pt);
	\draw[very thick, gray, arrows=-latex] (105pt,202.5pt) -- (62.5pt,202.5pt) -- (62.5pt,145pt);
	\draw[white, fill=white](94pt,170pt) rectangle (98pt,205pt);
	\draw[very thick, gray, arrows=-latex] (96pt,145pt) -- (96pt,217.5pt)  -- (105pt,217.5pt);
	
	\draw[white, fill=white](159pt,170pt) rectangle (163pt,225pt);
	\draw[very thick, gray, arrows=-latex] (161pt,145pt) -- (161pt,235pt) -- (47.5pt,235pt)  -- (47.5pt,145pt);
	\draw[very thick, gray, arrows=-latex] (161pt,214pt) -- (145pt,214pt);
	
	\draw[white, fill=white](80pt,170.5pt) rectangle (40pt,174.5pt);
	\draw[very thick, gray, arrows=-latex] (89pt,172.5pt) -- (30pt,172.5pt) -- (30pt,89pt)  -- (105pt,89pt);
	\draw[white, fill=white](90.5pt,90pt) rectangle (94.5pt,88pt);
	\draw[very thick, gray] (92.5pt,90.5pt)  -- (92.5pt,87.5pt);
	
	%\draw[very thick, gray, arrows=-latex] (157.5pt,105pt) -- (157.5pt,95pt) -- (145pt,95pt);
	\draw[very thick, gray, arrows=-latex] (172.5pt,105pt) -- (172.5pt,95pt) -- (145pt,95pt);
	
\end{tikzpicture}}
%\input{pics/summary.tikz}
%\vspace{-4pt}
\caption{Illustration of our main result: the expressiveness relations between different extensions. We draw an arrow from extension $A_i$ to extension $B_j$ when $A_i$ can distinguish a pair of graphs that $B_j$ cannot (denoted by $A_i \succ B_j$). If $A_i$ is more expressive than $B_j$ for multiple parameters $j$, we only draw an arrow to the highest such $j$, with a specific box (labeled `$B_k$') denoting that $A_i \succ B_j$ for any choice of $j$.}
\label{fig:sum}
\end{figure}
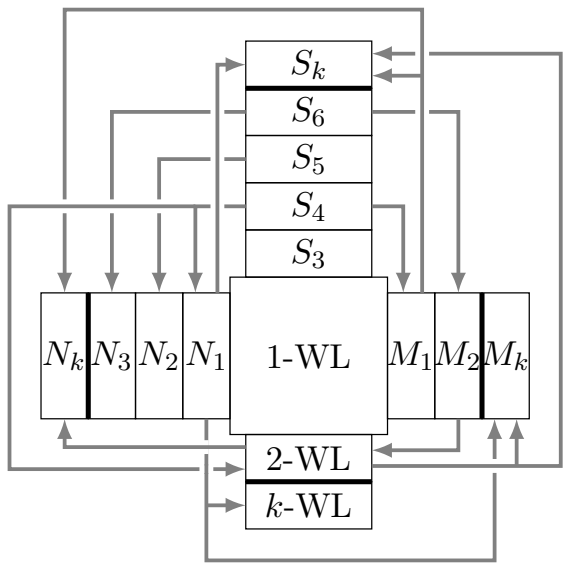

\section{Related Work}

Graph Neural Networks have been extensively studied throughout the last decade from various perspectives \cite{scarselli2008graph, wu2020comprehensive}. One of the most fundamental questions about GNNs is their theoretical expressiveness (and limitations), i.e. what GNNs can (or cannot) compute; this has also received a lot of attention in the last few years \cite{limits, limits2, limits3}.

In terms of expressive power, the most well-known limitation of message passing GNNs is that they are at most as powerful as the $1$-WL test (see Section \ref{sec:GNN}). Due to this, there were numerous studies in recent years on extending the base GNN model in different ways in order to increase its expressiveness beyond this upper bound. There are several natural ideas that appear repeatedly among these works. In this paper, we focus on the ideas that keep both the locality and permutation-equivariance properties of standard GNNs, e.g. by directly processing some substructures in the graph, or introducing small structural modifications to the graph. Since these methods are the main focus of our paper, we discuss them in more detail in Section \ref{sec:models}.

Another line of work considers higher order methods; unlike standard GNNs, these process the graph in a more global fashion, and require significantly more time/memory. We also discuss the corresponding hierarchy ($k$-WL) briefly, and include it in our comparisons for reference. 

There are various further works that develop more expressive GNN variants, e.g. by extending the graph with random node features or IDs \cite{randomFeatures2, limits3, randomFeatures1}, or port numbers for edges \cite{ports}. However, these approaches also lose one of the fundamental properties of standard GNNs, namely permutation-equivariance: if the nodes are presented in a different order, then the assignment of features/ports (and hence the final embeddings) might also be different. As such, these GNNs either require training over an unreasonably large sample (observing e.g.\ all possible ID assignments to nodes), or they might not generalize so well to new data.

More detailed surveys of different GNN extensions are available in \cite{survey1, survey2}.

\section{Standard GNNs} \label{sec:GNN}

\subsection{Graphs and GNNs}

Our GNNs always operate on a simple undirected graph $G$. The number of nodes in $G$ is denoted by $n$, the neighbors of a node $u$ by $\text{N}(u)$. Our graph $G$ is potentially also equipped with a vector of features for each node; however, the hardest cases for distinguishing two graphs are usually when each node begins with identical features or no features at all.

Under the $d$-hop neighborhood of a node $u$, we understand the subgraph that $u$ can access in $d$ synchronous rounds of message passing: that is, all nodes at distance at most $d$ from $u$, and all edges where at least one endpoint is at distance at most $(d-1)$ from $u$.

Most state-of-the-art GNNs are so-called \textit{Message Passing Neural Networks}; we will refer to these as standard GNNs. Such a GNN begins with the node features as the initial embedding $h_u\!^{(0)}$ of a node $u$, and it operates in synchronous rounds. In each round $t$, every node $u$ computes a new embedding from (i) its own current embedding and (ii) the multiset of embeddings in its neighborhood; formally, the GNN is defined by the functions
\[ a_u\!^{(t)} = \textsc{aggregate} \: ( \, \{ \! \{  h_v\!^{(t-1)} \, | \, v \in \text{N}(u) \} \! \} \, ), \]
\[ h_u\!^{(t)} = \textsc{update} \: ( \, h_u\!^{(t-1)}, \, a_u\!^{(t)} \, ) \, , \]
where \textsc{aggregate} is a permutation-invariant function.

We assume that the GNN executes $d$ rounds of message passing (also called \textit{layers}), where $d$ is a small constant value in most cases. This produces a final embedding $h_u\!^{(d)}$ for each node $u$. In case of a graph classification task, the final embeddings of the nodes are also combined with a further \textsc{readout} function in the end to obtain an embedding that represents the entire graph.

When presenting our results in the paper, we mostly take a node classification perspective: that is, we consider a specific node $u$ in the graph, and the $d$-hop neighborhood of $u$. We say that a GNN variant can \textit{distinguish} two $d$-hop neighborhoods if there exists a realization of the GNN that computes a different final embedding for $u$ in the two cases.
However, most of our results also carry over to a graph classification setting: whenever two $d$-hop neighborhoods are indistinguishable in our examples, then the entire graphs are also indistinguishable by the given GNN model.

\subsection{The limits of GNNs}

It is known that the expressive power of message passing GNNs is upper bounded by the so-called \textit{Weisfeiler-Leman test}, also known as the WL test or color refinement algorithm \cite{gin, morris2019weisfeiler}.

The WL algorithm is a heuristic for isomorphism testing, where nodes in a graph are colored according to their features initially. Then in each iteration, this node coloring is further refined: every node uses a hash function to select a new color based on its current color and the multiset of colors in its immediate neighborhood. The refinement process stops whenever the number of different colors in the graph does not increase anymore.

Let us consider the example graphs on Figure \ref{fig:WLcounter}, which are clearly not isomorphic: e.g.\ one of them contains a triangle, while the other one does not. However, the graphs are not distinguishable by the WL test: the final coloring after refinement is shown in the figure. This shows that the two graphs are also not distinguishable by a GNN: the gray and the white nodes in both graphs will always compute the same final embedding in any GNN realization. One can check that the corresponding nodes indeed observe the same tree representation of their respective graph in the two cases.

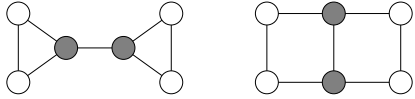
\begin{figure}
\centering
\resizebox{0.35\textwidth}{!}{%\definecolor{dgray}{gray}{0.35}

\begin{tikzpicture}

	\draw (0pt,0pt) -- (0pt,36pt);
	\draw (80pt,0pt) -- (80pt,36pt);
	\draw (25pt,18pt) -- (55pt,18pt);
	\draw (0pt,0pt) -- (25pt,18pt);
	\draw (0pt,36pt) -- (25pt,18pt);
	\draw (80pt,0pt) -- (55pt,18pt);
	\draw (80pt,36pt) -- (55pt,18pt);

	\draw[black, fill=white] (0pt,0pt) circle (6pt);
	\draw[black, fill=white] (0pt,36pt) circle (6pt);
	\draw[black, fill=gray] (25pt,18pt) circle (6pt);
	\draw[black, fill=gray] (55pt,18pt) circle (6pt);
	\draw[black, fill=white] (80pt,0pt) circle (6pt);
	\draw[black, fill=white] (80pt,36pt) circle (6pt);

	%---
	
	\draw (130pt,0pt) -- (130pt,36pt);
	\draw (165pt,0pt) -- (165pt,36pt);
	\draw (200pt,0pt) -- (200pt,36pt);
	\draw (130pt,0pt) -- (200pt,0pt);
	\draw (130pt,36pt) -- (200pt,36pt);
	
	\draw[black, fill=white] (130pt,0pt) circle (6pt);
	\draw[black, fill=white] (130pt,36pt) circle (6pt);
	\draw[black, fill=gray] (165pt,0pt) circle (6pt);
	\draw[black, fill=gray] (165pt,36pt) circle (6pt);
	\draw[black, fill=white] (200pt,0pt) circle (6pt);
	\draw[black, fill=white] (200pt,36pt) circle (6pt);

\end{tikzpicture}}
\caption{Graphs that are not distinguishable by $1$-WL, and hence also not by standard GNNs.}
\label{fig:WLcounter}
\end{figure}

It is also known that one can construct a sufficiently powerful GNN that has equivalent expressive power to the WL test, by devising an injective aggregate and update function \cite{gin}. This means that the GNN computes a different embedding for any two $d$-hop neighborhoods that can be separated by the WL test; intuitively speaking, such a GNN is as powerful as a general-purpose distributed algorithm in the same message passing model (i.e.\ without node IDs or port numbers).

\section{GNN Extensions Hierarchies} \label{sec:models}

We now define four different approaches for increasing the expressiveness of GNNs that we will study and compare in this paper. Due to space constraints, we only outline the main idea of these approaches here; we discuss their further advantages and drawbacks in more detail in Appendix \ref{app:models}.

\subsection{$k$-WL: The baseline hierarchy}

In most studies on the expressiveness of GNNs, the baseline hierarchy of expressive power is the so-called $k$-dimensional Weisfeiler-Leman algorithm. Note that there are two versions of this hierarchy in the literature, with slightly different indexing; here we consider the so-called ``Folklore'' indexing, sometimes also denoted by FWL.

In this hierarchy, $1$-WL simply corresponds to the WL test discussed before. For $k \geq 2$, a detailed description of the $k$-WL method is beyond the scope of this paper; intuitively speaking, the main idea behind the approach is to execute color refinement on the $k$-tuples of nodes in the original graph. This results in a framework with increasing expressive power: it is known that $(k+1)$-WL is always strictly more powerful than $k$-WL.

On the other hand, $k$-WL also has both time and space complexity that is lower bounded by $\Omega(n^k)$, i.e. it scales polynomially with the size of the entire graph; this heavily limits its usability in practice. In particular, most GNN extensions in the literature are only compared to $2$-WL, since it is already highly non-trivial to come up with graphs that are not distinguished by $2$-WL.

Recent works have also introduced GNNs variants based on these higher-order WL algorithms, which essentially inherit both the strengths and the weaknesses of $k$-WL \cite{maron2019provably, morris2019weisfeiler}.

\subsection{$S_k$: Counting substructures}

Since one of the most straightforward differences between the two graphs in Figure \ref{fig:WLcounter} is that only one of them has a triangle, it is a natural idea to directly extend our GNNs by subgraph counts up to a specific size $k$. This approach is most prominently applied in the work of \cite{bouritsas2020improving}, but it is also loosely connected to other GNN extensions \cite{Iaware, barcelo2021graph}.

\begin{figure*}
\centering
\hspace*{-0.01\textwidth}
\resizebox{1.02\textwidth}{!}{\input{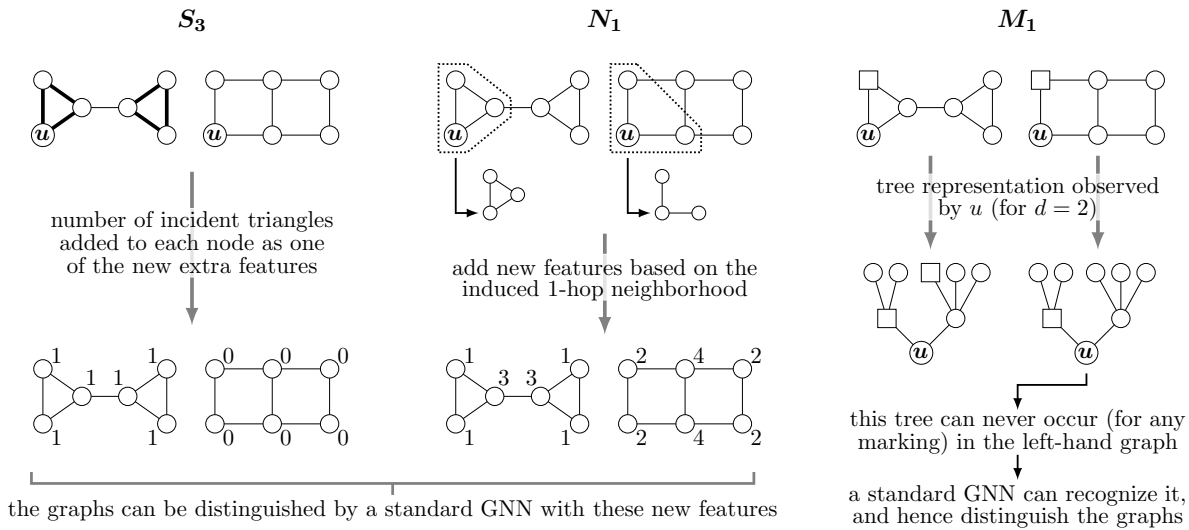}}
\caption{An illustration of how different extensions can distinguish the graphs in Figure \ref{fig:WLcounter}.}
\label{fig:examples}
\end{figure*}

In particular, let us define an $S_k$ GNN as follows: we assume that there is a preprocessing phase where for each $k' \in \{ 1, 2, ..., k \}$, we consider every different connected graph $G'$ on $k'$ nodes (up to isomorphism), and we count the number of times this graph $G'$ appears as an induced subgraph such that $u$ is one of the nodes of $G'$. We add these numbers as new features to each node $u$ in the graph, and then we run a standard GNN on the graph with these extended features.

Note that incident subgraphs of size $1$ and $2$ are also easy to compute in a standard GNN, so the method is only meaningful for $k \geq 3$. For $k=3$, the only two connected graphs on $3$ nodes are the triangle and the path of length $2$, so we add $2$ new features to each node before running a standard GNN.

We also point out that $S_k$ is the most expressive possible implementation of this subgraph-counting approach, since it considers all subgraphs of size up to $k$. In contrast to this, practical GNNs may only consider specific substructures (such as cliques or cycles), as the number of all non-isomorphic subgraphs increases rapidly as $k$ grows.

Finally, note that if $k \geq (d+2)$, then the newly added features may also contain information about a part of the graph that is not reachable by $u$ in the message passing phase. We will avoid this degenerate case, and only consider situations when any $k$-node subgraph is fully contained in the $d$-hop neighborhood of $u$; that is, we always ensure that either $k \leq (d+1)$, or the entire graph is contained within the $d$-hop neighborhood of $u$.

\subsection{$N_k$: Knowledge up to radius $k$}

Another similar approach is to not count the subgraphs incident to $u$, but to explicitly compute the isomorphism class of the $k$-hop induced neighborhood of $u$.

More formally, in the $N_k$ hierarchy, we assume that there is a mapping from all possible induced $k$-hop neighborhoods (that is, all graphs of radius at most $k$ up to isomorphism) to the real numbers, and each node $u$ is equipped with this number as an extra feature in a preprocessing step. This is then followed by regular message passing (i.e. a standard GNN) for $d$ rounds. As before, we will assume that $k < d$; otherwise, the message passing phase provides no extra information.

Note that for consistency with related work, our definition assumes that $N_k$ processes the \textit{induced} $k$-hop neighborhood (the graph induced by nodes at distance at most $k$ from $u$) instead of the $k$-hop neighborhood; that is, the preprocessing step is also aware of edges that have both endpoints at distance $k$ from $u$. For example, any triangle containing $u$ is entirely within the induced $1$-hop neighborhood of $u$. As such, the induced $1$-hop neighborhoods are different (for any node) in the two graphs of Figure \ref{fig:WLcounter}, so $N_1$ can already distinguish them from the new features.

Unless our graphs are very sparse, this GNN variant is not easy to apply in practice, since the preprocessing step already requires us indirectly to solve smaller instances of the graph isomorphism problem. Nonetheless, $N_k$ is still a valuable theoretical tool to study the power of GNNs when they are augmented by a complete understanding of the graph up to a small radius $k$.

The GNN variant of \cite{zhang2021nested} is directly based on a practical implementation of this idea; however, a similar use of ego networks also appears is more complex GNN extensions \cite{ego}.

\subsection{$M_k$: GNNs with markings}

Another approach to extend GNNs is to consider multiple, slightly perturbed variants of the input graph, and then use the collection of these to identify the original graph. That is, this model essentially executes multiple runs of a standard GNN on slightly changed variants of the graph, and in the end, it aggregates the final embeddings obtained in each run with a separate run-aggregation function.

One straightforward implementation of this idea is to remove some of the nodes (and their incident edges) from the graph in each run, and then execute message passing (i.e. a standard GNN) in the resulting network. A simple implementation of this idea with a randomized dropout of nodes is analyzed in \cite{nips}, whereas a more complex deterministic variant is discussed in \cite{reconstruction}. The approach also appears as one of the subcases in the framework of \cite{ESAN}.

Instead of directly studying this extension with node removals, we introduce a more general version of this idea, which we call \textit{GNNs with markings}. This extension inherits most of the properties of the node removal approach, but it has slightly larger expressive power. Intuitively, the main idea of markings is that the selected nodes are still distinguished from the remaining ones, but they are not removed from the graph; instead, the GNN is directly allowed to handle these nodes differently. We analyze GNNs with markings in more detail in Section \ref{sec:mark}.

If exactly $k$ nodes are marked in the $d$-hop neighborhood of $u$ in a run, then we refer to this run a $k$-marking of $u$. We define $M_k$ as the GNN which combines a standard GNN run over all distinct $k'$-markings of $u$, for every $k' \in \{ 0, ..., k\} $. That is, $M_k$ considers every version of the $d$-hop neighborhood around $u$ obtained by marking at most $k$ distinct nodes, computes an embedding for $u$ with the same GNN in each case, and then combines these into a final embedding for $u$.

In the example of Figure \ref{fig:WLcounter}, the two graphs can already be distinguished from a single marking (or alternatively, a single node removal). If $u$ is one of the nodes of degree $2$, then there will be a $1$-marking when its immediate neighbor of degree $2$ is marked (indicated by a square-shaped node in Figure \ref{fig:examples}). In left-hand graph, this also means that a node at distance $2$ from $u$ is also marked within the $d$-hop neighborhood of $u$, while in the right-hand graph, the nodes at distance $2$ will all be unmarked. These situations can all be recognized in a standard message passing phase, and thus $M_1$ separates the two graphs.

Note that the approach is most practical for small $k$ values, where the number of different $k$-markings is still relatively small.

\section{Discussion of GNNs with markings} \label{sec:mark}

As outlined before, GNNs with markings execute multiple runs of a standard GNN, with some of the nodes selected and marked in the beginning of each such run. The marked nodes are then treated differently from the rest: (i) in every round, nodes use a different function to aggregate from their marked an unmarked neighbors, and (ii) marked nodes also apply a different update function. 

That is, if $\text{N}_M(u)$ and $\text{N}_U(u)$ denote the marked and unmarked neighbors of $u$, respectively, then the new formula for message aggregation is
\begin{gather*}
a_u\!^{(t)} = \textsc{aggr}_{\text{marked}} \: ( \, \{ \! \{  h_v\!^{(t-1)} \, | \, v \in \text{N}_M(u) \} \! \} \, ) \, + \, \textsc{aggr}_{\text{unmarked}} \: ( \, \{ \! \{  h_v\!^{(t-1)} \, | \, v \in \text{N}_U(u) \} \! \} \, ) \, ,
\end{gather*}
where $\textsc{aggr}_{\text{marked}}$ and $\textsc{aggr}_{\text{unmarked}}$ are both permutation-invariant aggregation functions. Furthermore, marked and unmarked nodes learn a different update function ($\textsc{upd}_{\text{marked}}$ and $\textsc{upd}_{\text{unmarked}}$, respectively) to combine $h_u\!^{(t-1)}$ and $a_u\!^{(t)}$ into the new embedding $h_u\!^{(t)}$.

Similarly to GNNs with node removals, the final embeddings of $u$ in each run are combined in the end with a permutation-invariant run-aggregation function.

One can easily observe that the marking idea is a generalization of the node removal approach.

\begin{lemma}
Whenever two graphs $G_1$, $G_2$ are distinguishable by GNNs with node removals, they are also distinguishable by GNNs with markings.
\end{lemma}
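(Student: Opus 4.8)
The plan is to show that \emph{every} realization of a GNN with node removals can be simulated, run for run, by a suitably chosen realization of a GNN with markings. The guiding principle is that marking a node and then configuring the marked-node functions to be ``inert'' has exactly the same effect on the rest of the graph as deleting that node, both during message passing and at readout time.

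Concretely, I would fix an arbitrary realization $\mathcal{A}$ of a node-removal GNN that distinguishes $G_1$ and $G_2$, with aggregation/update functions $\textsc{aggregate}$, $\textsc{update}$, readout $\textsc{readout}$, and a run-aggregation $\rho$ ranging over removals of at most $k$ nodes. I then construct a marking GNN $\mathcal{A}'$: enlarge the embedding space by a fresh symbol $\bot$; set $\textsc{aggr}_{\text{unmarked}} := \textsc{aggregate}$ and $\textsc{aggr}_{\text{marked}} :=$ the constant map sending every multiset to the additive identity $0$; set $\textsc{upd}_{\text{unmarked}} := \textsc{update}$ and $\textsc{upd}_{\text{marked}} :=$ the constant map $\bot$. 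Both new aggregation functions are permutation-invariant, so this is a legal realization. Now associate to each run of $\mathcal{A}$ removing a set $R$ (with $|R|\le k$) the run of $\mathcal{A}'$ that marks the same set $R$; this gives a bijection onto the $k'$-markings with $k'\le k$.

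Next I would prove, by induction on the layer index $t$, that in each pair of corresponding runs the node embeddings agree on all nodes outside $R$, while every node in $R$ has embedding $\bot$ after the first layer. The base case is immediate, since at $t=0$ every node outside $R$ carries its original features in both runs. For the inductive step, an unmarked node's marked neighbours contribute $0$ to its aggregated message, so the message it receives is exactly the $\textsc{aggregate}$ over its neighbours in $G\setminus R$, and it applies $\textsc{update}$ to it; a marked node is reset to $\bot$ regardless of its inputs and, crucially, never influences an unmarked node because $\textsc{aggr}_{\text{marked}}$ discards it. Hence after $d$ layers the multiset of embeddings of the nodes outside $R$ in the marking run equals the multiset of all node embeddings produced by $\mathcal{A}$ on $G\setminus R$, plus exactly $|R|$ extra copies of $\bot$. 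I then let the readout of $\mathcal{A}'$ first delete all occurrences of $\bot$ from its input multiset and then apply $\textsc{readout}$, so each run of $\mathcal{A}'$ outputs the same value as the corresponding run of $\mathcal{A}$; finally I set the run-aggregation of $\mathcal{A}'$ equal to $\rho$, applied to the runs with at most $k$ marks (and ignoring any others). Then $\mathcal{A}'$ and $\mathcal{A}$ produce identical final embeddings on every graph, so $\mathcal{A}'$ distinguishes $G_1$ and $G_2$ as well.

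The only real subtlety — the part I would write out in full — is the ``no leakage'' point: one must verify that a marked node is genuinely invisible to the rest of the graph, both during message passing (ensured by $\textsc{aggr}_{\text{marked}}\equiv 0$) and at readout (ensured by tagging it with $\bot$ and stripping those entries out). A minor secondary point is aligning the run sets of the two models; since $M_k$ ranges over all $k'$-markings with $k'\le k$, it contains a run for every removal set used by $\mathcal{A}$, and the run-aggregation of $\mathcal{A}'$ can be made to depend only on those runs, which keeps it permutation-invariant.
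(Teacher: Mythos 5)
Your proposal is correct and is essentially the paper's own argument, just written out in full detail: the paper's proof is the one-line observation that choosing $\textsc{aggr}_{\text{marked}} \equiv 0$ and $\textsc{upd}_{\text{marked}} \equiv 0$ makes the marking GNN behave as if the marked nodes were removed. Your induction on layers, the $\bot$-tagging at readout, and the run-for-run bijection are exactly the details implicit in that one line.
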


\begin{proof}
Let us choose $\textsc{aggr}_{\text{marked}} \equiv 0$ and $\textsc{upd}_{\text{marked}} \equiv 0$. The resulting GNN with markings behaves as if the marked nodes were removed from the graph entirely.
\end{proof}

Furthermore, one can also show that this generalization is strict, i.e. GNNs with markings are strictly more expressive. Intuitively speaking, markings provide they same symmetry breaking opportunities for the GNN, but without an unnecessary loss of information. In particular, GNNs with node removals are unable to pass information through a missing node, and they are also unable to deduce whether two missing nodes were adjacent originally. The same problems do not appear in case of markings, where the underlying graph structure remains intact.

\begin{theorem} \label{th:markdrop}
There exists a pair of graphs that can be distinguished by GNNs with markings, but not by GNNs with node removals.
\end{theorem}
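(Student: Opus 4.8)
The plan is to build an explicit pair of graphs whose only difference is \emph{whether two specific nodes $a$ and $b$ are adjacent}, embedded into enough surrounding structure that this difference is (i) invisible to plain $1$-WL, (ii) erased by every node removal, but (iii) exposed by a single marking of $\{a,b\}$. Concretely, I would take the $d$-hop neighborhood of a distinguished node $u$ (for a suitable constant $d$ with $a,b$ inside it), let $G_1$ contain the edge $ab$ and $G_2$ omit it, and add a small symmetric ``compensating'' modification elsewhere -- in the spirit of how the triangle of Figure~\ref{fig:WLcounter} is made invisible to $1$-WL -- so that $u$ sees the same computation tree of depth $d$ in $G_1$ and in $G_2$. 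Step~(i) then just amounts to verifying this tree equality, which rules out $1$-WL (equivalently, the $0$-removal / $0$-marking special case) and is therefore needed for the statement to be non-trivial.

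The core of the argument is step~(ii): showing that GNNs with node removals cannot distinguish $G_1$ from $G_2$. Since such a GNN computes, for $u$, the multiset over all admissible removal-sets $S$ of the embedding of $u$ in $G-S$, it suffices to produce a bijection $S \mapsto \sigma(S)$ between the removal-sets of $G_1$ and of $G_2$ for which $u$ has identical $1$-WL-views in $G_1 - S$ and in $G_2 - \sigma(S)$. I would split on how $S$ meets $\{a,b\}$: if $S \supseteq \{a,b\}$, deleting both endpoints of the edge $ab$ makes $G_1 - S$ and $G_2 - \sigma(S)$ isomorphic as graphs rooted at $u$ (this is exactly the ``cannot tell whether two removed nodes were adjacent'' phenomenon noted before the theorem); if $S$ contains exactly one of $a,b$, the removal destroys the very node that would carry the edge information, and the residual graphs are again $1$-WL-equivalent once the compensating modification is accounted for; and if $S$ avoids both, then $G_1 - S$ and $G_2 - S$ differ only in $ab$, so one argues that deleting these ``generic'' nodes preserves the $1$-WL-equivalence established in step~(i). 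The main obstacle lies precisely here: the construction has to be rigid enough that \emph{no} node deletion ever brings the edge $ab$ into $u$'s depth-$d$ view, which pins down how the compensating structure must be placed and turns step~(ii) into a finite but delicate case analysis on computation trees (including the trivial case where $u$ itself would be removed, which we simply exclude from the admissible sets).

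Finally, step~(iii): mark the two nodes $a$ and $b$. Because the marking is visible to the aggregation functions, a GNN with markings can have $a$ detect whether it has a \emph{marked} neighbour; in $G_1$ this neighbour ($b$) exists and in $G_2$ it does not, so $1$-WL run on the marking-augmented features already assigns $a$ different colours in the two graphs, and -- since $a$ lies inside the $d$-hop neighborhood of $u$ -- this discrepancy propagates to the final embedding of $u$. One has to check in addition that the surrounding structure offers no other marked pair in $G_2$ reproducing the same view of $u$, which the construction ensures. Invoking a sufficiently powerful GNN with markings (injective \textsc{aggr} and \textsc{upd} functions, exactly as in the $1$-WL-equivalent GNN of \cite{gin}) then realizes a computation that separates $G_1$ from $G_2$, completing the proof.
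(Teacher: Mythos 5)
Your proposal is built on exactly the right mechanism --- the one the paper itself uses: node removals lose the information of whether two removed nodes were adjacent, while markings preserve it, so the target construction should differ precisely in the adjacency of a pair of nodes that a $2$-marking can expose (a marked node detecting a marked neighbour). The overall architecture (a bijection between removal sets under which $u$'s view is preserved, versus an injective GNN with markings that detects the marked--marked edge) also matches the paper's proof. However, there is a genuine gap in how you propose to carry out step (ii), and it is not just a matter of filling in details. If $G_1$ contains the edge $ab$ and $G_2$ omits it, then in the case where the removal set contains exactly one of $a,b$, the surviving endpoint has strictly smaller degree in $G_1$ than in $G_2$; this is immediately visible to the GNN, so your claim that ``the removal destroys the very node that would carry the edge information'' is false as stated. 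Likewise, in the case where the removal set avoids both $a$ and $b$, the residual graphs still differ by the edge $ab$, and a ``small symmetric compensating modification elsewhere'' that is independent of the bijection cannot repair this: the compensating structure must itself be an edge (or edges) whose removal patterns mirror those of $ab$, and the bijection must route removal sets touching $\{a,b\}$ in one graph to removal sets touching the compensating edge in the other. In other words, the two graphs must have identical degree sequences and, more strongly, identical multisets of removal ``signatures'' for every admissible removal-set size, which a pointwise, identity-like bijection organized around how $S$ meets $\{a,b\}$ does not deliver.

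The paper resolves this with a concrete, carefully balanced construction: a $4$-cycle and a $10$-cycle joined to a universal node $u$, with \emph{two} chords placed differently in the two graphs (two short chords in the $10$-cycle of $G_1$ versus one long chord in the $10$-cycle and one chord in the $4$-cycle of $G_2$), plus pendant leaves to equalize degrees. Indistinguishability under removals is then established not by a structural bijection of the kind you describe but by exhaustively counting, for each of the $0$-, $1$- and $2$-removals, the multiset of degree-decrease signatures seen by $u$, and checking these multisets coincide in the two graphs; the single configuration that removals cannot see --- two adjacent ``crossing'' nodes --- is exactly what a $2$-marking of the chord endpoints in the $4$-cycle of $G_2$ detects. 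To complete your proof you would need to exhibit such a balanced construction and perform the corresponding verification; the case analysis you sketch would fail on the constructions your plan describes.
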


However, we note that our results in Section \ref{sec:compare} also carry over to the weaker model with node removals.

Finally, we prove that markings are indeed the most powerful GNNs that can be developed with this general symmetry-breaking idea, in similar sense as GINs were shown to be the most powerful standard GNNs \cite{gin}.

In order to characterize the maximal expressive power of this approach, we can again turn to the color refinement ($1$-WL) algorithm, but now from an initialization where marked nodes receive a different initial color than unmarked nodes. One can show by induction that if two nodes receive the same color (after $d$ iterations) in this algorithm, then a GNN with markings will compute the same final embedding for these two nodes. On the other hand, if we collect the final colors assigned by $1$-WL under the different markings, and this multiset is different for two nodes, then a sufficiently powerful GNN with markings can indeed distinguish the two cases.

More formally, let $G_1$ and $G_2$ be two $d$-hop neighborhoods around $u$, and let $\Gamma_1$ and $\Gamma_2$ denote the set of all possible markings (of at most $k$ nodes) in $G_1$ and $G_2$, respectively. We say that $m_1 \in \Gamma_1$ and $m_2 \in \Gamma_2$ are \textit{inseparable} markings of $G_1$ and $G_{2\,}$ if $1$-WL assigns the same color to $u$ in the two graphs when initialized according to $m_1$ and $m_2$. Finally, $G_1$ and $G_2$ are \textit{inseparable under} $k$-\textit{markings} if there is a bijection $\sigma: \Gamma_1 \rightarrow \Gamma_2$ such that $m_1$ and $\sigma(m_1)$ are inseparable for all $m_1 \in \Gamma_1$.

\begin{theorem} \label{th:inj}
There exists an injective implementation of GNNs with markings. That is, if $G_1$ and $G_2$ are not inseparable under $k$-markings, then the GNN computes a different final embedding for $u$ in the two graphs.
\end{theorem}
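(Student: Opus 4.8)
The plan is to mimic the GIN construction \cite{gin}, adapted to the two–channel aggregation and the split update functions of GNNs with markings. Fix a marking $m$ of $G_i$. I would argue by induction on the round $t$ that there is a realization of the GNN in which the embedding $h_u\!^{(t)}$ \emph{injectively encodes} the depth‑$t$ unfolding tree of $u$ in the $m$‑marked graph --- equivalently, the color assigned to $u$ after $t$ iterations of $1$-WL started from the ``marked / unmarked'' two–coloring. Since all initial colors come from a two–element set and the graphs are finite, every embedding space arising in this process is countable, so (exactly as in \cite{gin}) there are injective maps from the relevant multiset domains into $\mathbb{R}$, which I may freely use.

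For the base case, let $h_u\!^{(0)}$ be one of two distinct constants according to whether $u$ is marked. For the inductive step I need $\textsc{aggr}_{\text{marked}}$, $\textsc{aggr}_{\text{unmarked}}$ and the two update functions $\textsc{upd}_{\text{marked}}$, $\textsc{upd}_{\text{unmarked}}$ such that the map sending $\bigl(h_u\!^{(t-1)},\ \mathbf{1}[u\text{ marked}],\ \{\!\{h_v\!^{(t-1)} : v \in \text{N}_M(u)\}\!\},\ \{\!\{h_v\!^{(t-1)} : v \in \text{N}_U(u)\}\!\}\bigr)$ to $h_u\!^{(t)}$ is injective, and moreover so that the mark bit of $u$ remains recoverable from $h_u\!^{(t)}$, keeping the invariant alive. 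The one delicate point is that the model exposes only the \emph{sum} $a_u\!^{(t)} = \textsc{aggr}_{\text{marked}}(\cdot) + \textsc{aggr}_{\text{unmarked}}(\cdot)$, not the two summands. I would resolve this precisely as the multiset‑encoding lemma of \cite{gin} does: identify embeddings with natural numbers and let each aggregation be a suitably scaled prime‑power (or base‑$N$) encoding of its multiset, placed in a numeric range disjoint from the other's, so that $a_u\!^{(t)}$ determines both neighbor‑multisets uniquely. Composing with an injective update --- using a different injective map on the marked and the unmarked branch, with disjoint images so the mark bit is recoverable --- finishes the step.

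Iterating $d$ rounds, for each marking $m$ the value $h_u\!^{(d)}$ is an injective function of the depth‑$d$ $1$-WL color of $u$ in the $m$‑marked graph; in particular two markings give the same $h_u\!^{(d)}$ iff they are inseparable in the sense of the statement (and this holds across the two graphs as well, since the unfolding‑tree encoding is canonical). Finally I would apply an injective run‑aggregation (readout) to the multiset $\{\!\{ h_u\!^{(d)}(m) : m \in \Gamma_i \}\!\}$. If the $d$-hop neighborhoods of $u$ have different numbers of nodes then $|\Gamma_1| \neq |\Gamma_2|$ and these multisets already have different cardinalities; otherwise a color‑preserving bijection $\sigma : \Gamma_1 \to \Gamma_2$ exists exactly when the two multisets of $1$-WL colors --- hence of final embeddings --- coincide. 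So if $G_1$ and $G_2$ are \emph{not} inseparable under $k$-markings, these multisets differ and the injective readout returns different final embeddings for $u$.

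I expect the main obstacle to be the second paragraph: designing $\textsc{aggr}_{\text{marked}}$ and $\textsc{aggr}_{\text{unmarked}}$ so that injectivity survives the forced summation of the two channels, while simultaneously propagating the mark bit and the recursively built unfolding tree through each round. Everything else --- the base case, the readout, and the reduction of the bijection condition to multiset equality --- is a routine adaptation of the GIN argument.
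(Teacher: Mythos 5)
Your proposal follows essentially the same route as the paper's proof: both reduce to the GIN-style injective multiset encoding, resolve the forced summation of the two aggregation channels by placing the marked and unmarked encodings in disjoint digit positions (the paper uses $f_i(x)=L^{-4\cdot Z(x)+i}$ for $i\in\{0,1,2,3\}$ to interleave the two aggregations and the two updates), keep the mark bit recoverable via disjoint images of the update maps, and finish with an injective run-aggregation over the multiset of per-marking embeddings. The step you flag as the main obstacle is handled in the paper exactly as you sketch it, so no gap remains.
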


\section{Comparison of expressiveness} \label{sec:compare}

As our main result, we compare the expressive power of the different GNN extensions to each other in this section, with the details of the proofs discussed in Appendices \ref{app:CvC}$-$\ref{app:CFI}.

When comparing two GNN extensions $A$ and $B$, we will say that $A$ is more expressive than $B$ (denoted $A \succ\! B$) if there exists a pair of $d$-hop neighborhoods which can be distinguished by extension $A$, but not by extension $B$. Note that this not a strict ordering of extensions: in many cases, we will have both $A \succ\!B$ and $B \succ\! A$, i.e. both extensions can be superior to the other on different kinds of graphs. For strict superiority, we will use $A \subseteq B$ to show that any pair of graphs distinguishable by $A$ is also distinguishable by $B$.

A summary of our results is illustrated concisely in Figure \ref{fig:sum}. Note that there is an inherent offset in the indexing of the different hierarchies: recall that for the standard GNN model, we have $1$-WL $=$ $N_0$ $=$ $M_0$ $=$ $S_2$.

Finally, note that our constructions used in the proofs are all of reasonable size, in the sense that the number of nodes, edges and the maximal degree are all in $O(k)$.

\subsection{When $N_k$ and WL are superior} \label{sec:comp1}

We first present graph constructions where $N_k$ and the standard WL hierarchy outperform the remaining GNN extensions. We discuss $N_k$ and WL together because we can actually use the same construction to analyze their expressive power. 

\begin{theorem} \label{th:CvC}
For any $k \geq 1$, we have
\begin{itemize}[topsep=5pt,itemsep=0pt,partopsep=5pt,parsep=5pt]
    \item $N_1$ $\succ$ $S_k$ $\;$and$\;$ $N_1$ $\succ$ $M_k$,
    \item $2$-WL $\succ$ $S_k$ $\;$and$\;$ $2$-WL $\succ$ $M_k$.
\end{itemize}
\end{theorem}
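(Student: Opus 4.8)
I want to exhibit a single pair of graphs (really, a single pair of $d$-hop neighborhoods around a distinguished node $u$) that $N_1$ and $2$-WL can tell apart, but which $S_k$ and $M_k$ cannot — for every $k$. The natural candidate is a construction built around cycles, since the title of the subsection ("$N_k$ and WL are superior") and the appendix label \texttt{app:CvC} ("C versus C", presumably "cycle versus cycle") both point there. Concretely, I would take $G_1$ to be one long cycle $C_{2n}$ and $G_2$ to be two disjoint copies of $C_n$ (with $n$ chosen large relative to $k$ and $d$), each made into a rooted neighborhood by picking an arbitrary node $u$; more likely, to keep everything inside a $d$-hop ball and to have the distinguished node see the same local picture, I would instead attach $u$ to these cycles in a symmetric way, or use two cycles of slightly different even length whose $d$-neighborhoods of $u$ agree. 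The key properties I need are: (a) locally (up to radius $d$, and in fact up to any bounded radius once $n$ is large) the two graphs look identical, so that all bounded-radius substructure counts coincide and standard GNNs fail; (b) $N_1$ — or even a global property like connectivity, girth, or number of components — distinguishes them, which is why $2$-WL also succeeds (it is known that $2$-WL determines such global cycle statistics).

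**Why $S_k$ fails.** For the $S_k$ direction I would argue that, since every connected subgraph on at most $k \le d+1$ vertices is contained in the $d$-hop neighborhood and both graphs are locally path-like (every bounded-radius ball is a path), the only connected induced subgraphs on $\le k$ vertices occurring in either graph are paths, and each node is incident to exactly the same number of induced paths of each length in both graphs. Hence the $S_k$-preprocessing assigns identical feature vectors to corresponding nodes, and after that we are running a standard GNN on two graphs with identical local structure (identical for the whole $d$-ball), which $1$-WL cannot separate. One must be slightly careful that the counting is over *induced* subgraphs with $u$ as a vertex, but on a cycle every short induced connected subgraph is a path and the incidence counts depend only on $\min(k, \text{girth-related quantities})$, which I arrange to agree.

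**Why $M_k$ fails.** This is the step I expect to be the main obstacle, and the reason the whole theorem is nontrivial: marking up to $k$ nodes gives genuine symmetry-breaking power, so I cannot just say "the graphs look the same." The right tool is Theorem~\ref{th:inj}: it suffices to show $G_1$ and $G_2$ are \emph{inseparable under $k$-markings}, i.e.\ to build a bijection $\sigma:\Gamma_1\to\Gamma_2$ between the sets of $(\le k)$-markings such that $1$-WL, initialized with marked-vs-unmarked colors, gives $u$ the same color in $G_1$ under $m$ and in $G_2$ under $\sigma(m)$. Here I exploit that a marking places at most $k$ marked nodes, and a cycle (or union of cycles) with $k$ marked nodes decomposes into $\le k$ marked-node-delimited arcs; the $1$-WL color of $u$ after $d$ rounds depends only on the pattern of distances from $u$ to the nearby marked nodes within the $d$-ball. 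I would define $\sigma$ so that it preserves the cyclic arrangement of marked nodes *as seen from $u$ within radius $d$* — mapping a marking of the long cycle to the marking of the two short cycles that induces the same marked-arc pattern around $u$ — and then check (by the same local-isomorphism argument as in the unmarked case, now with colored nodes) that inseparability holds arc by arc. The delicate point is making $\sigma$ a genuine bijection on *all* markings simultaneously, including those that put marked nodes far from $u$ or split them between the two components of $G_2$; I would handle this by making $n$ large enough ($n = \Omega(k\cdot d)$, say) that marked nodes outside the $d$-ball are invisible to $u$'s final color, so the bijection only needs to match the "visible" part of each marking and can be completed arbitrarily (e.g.\ by a fixed bijection on the invisible remainder) outside the $d$-ball. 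Finally I note, per the remark in the excerpt, that the same construction works against the weaker node-removal model, since removal is a special case of marking.
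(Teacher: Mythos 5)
Your overall plan matches the paper's: the construction is indeed $C_{\ell,\ell}$ (two disjoint $\ell$-cycles) versus $C_{2\ell}$ with a universal apex node $u$ attached to all $2\ell$ cycle nodes, analyzed at $d=2$; and your strategy for $M_k$ --- reduce to inseparability under $k$-markings via Theorem \ref{th:inj}, decompose each marking into marked-delimited arcs, and build a bijection between markings that preserves the multiset of arcs --- is essentially the paper's Lemma \ref{lem2}. But you leave the construction ambiguous in a way that matters: if you use the bare cycles (no apex), then $N_1$ does \emph{not} distinguish them (every induced $1$-hop neighborhood is a $3$-vertex path in both graphs), so the theorem's positive half collapses; you must attach the apex. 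Once the apex is attached, your $S_k$ argument breaks: the graphs are no longer ``locally path-like,'' and the connected induced subgraphs of size $\le k$ incident to $u$ are not paths but graphs consisting of $u$ joined to a disjoint union of arcs of the cycle(s). Showing that the multiset of these ``apex plus tassel'' subgraphs is identical in $C_{\ell,\ell}$ and $C_{2\ell}$ is the actual content of the $S_k$ half, and it is not automatic --- the paper proves it (Lemma \ref{lem1}) with an explicit bijection between $k'$-subsets of the $2\ell$ cycle nodes that swaps an initial segment of one $\ell$-cycle with the corresponding segment of the other so that arcs crossing the ``seam'' are mapped continuously. Your one-sentence dismissal (``each node is incident to exactly the same number of induced paths of each length'') does not establish this and is the genuine gap.

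Two smaller points. First, in the $M_k$ argument you propose to make the cycles long enough that marked nodes outside the $d$-ball are ``invisible''; with the apex present, every node is within distance $1$ of $u$, so nothing is invisible and distances from $u$ carry no information --- what the bijection must preserve is exactly the multiset of arcs (maximal runs with no two consecutive unmarked nodes), which your sketch does eventually say, so this is a fixable confusion rather than an error. Second, the claim that $2$-WL succeeds needs the extra observation that adding a universal vertex to both graphs preserves $2$-WL-distinguishability (the paper's Lemma \ref{lem3}); citing that $2$-WL counts cycles or components in the bare graphs is not quite enough on its own.
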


\renewcommand*{\proofname}{Proof sketch}

\begin{proof}
Our proof is based on a generalization of the method used by \cite{nips} to show that not every graph can be distinguished with $2$ removed nodes.

Let us define two graphs $C_{\ell, \ell}$ and $C_{2\ell}$ as follows: for some parameter $\ell \geq 3$, let $C_{\ell, \ell}$ consist of two disjoint cycles of length $\ell$, and let $C_{2\ell}$ consist of a single cycle of length $2 \ell$. Finally, in both graphs, we add a single node $u$, and connect it to all the $2 \ell$ nodes in the graph (see Figure \ref{fig:CvC}).

Intuitively, the main idea of the proof is that our GNNs do not have enough rounds to go around even in the smaller cycle in $C_{\ell, \ell}$, hence they only observe smaller arcs of the cycles, which look identical in the two graphs. For standard a GNN, the tree representations are the same for each node in the two graphs. However, if $\ell \approx 2 \cdot k$, the two graphs cannot be distinguished by $S_k$ or $M_k$ either. In case of $S_k$, the subgraphs are not large enough to contain an entire $\ell$-cycle, only arcs of length at most $(k-1)$; the subgraph counts for such arcs (and their combinations) turn out to be identical in the two graphs. In case of $M_k$, we also find that a GNN essentially needs to mark nodes all around an $\ell$-cycle in order to recognize that this is a full cycle of length $\ell$, and not a smaller arc of length $\ell$ within a $2 \ell$-cycle.

On the other hand, the two graphs can be distinguished by $N_1$ for any $\ell$: since the entire graph is in the induced $1$-hop neighborhood of $u$ and the two graphs are non-isomorphic, $u$ will receive a different extra feature with $N_1$ in the two graphs, which makes them distinguishable. Similarly, one can show that $2$-WL can distinguish the two graphs.
\end{proof}

Furthermore, on two simpler constructions, one can also show that $N_1$ and $2$-WL can also be superior to each other (in fact, the entire other hierarchy) in some cases.

\begin{theorem} \label{th:NWL}
For any $k \geq 1$, we have $N_1$ $\succ$ $k$-WL.
\end{theorem}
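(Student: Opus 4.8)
The plan is to combine the classical Cai--Fürer--Immerman (CFI) construction with an apex vertex: adding one vertex adjacent to everything turns ``knowledge of the induced $1$-hop neighborhood'' into ``knowledge of the whole graph'', while (as I will argue) it does not help $k$-WL at all. Concretely, since it is known that $(k{+}1)$-WL is strictly more powerful than $k$-WL, I fix a pair of non-isomorphic graphs $G_1, G_2$ that $k$-WL cannot distinguish; by the CFI construction these can be chosen with $O(k)$ vertices and bounded degree. Let $G_1^+, G_2^+$ be obtained by adding one new vertex $u$ joined to all other vertices, and run the GNNs for $d = 2$ layers (so that $k = 1 < d$, as required by the definition of $N_1$, and the whole graph is contained in the $2$-hop neighborhood of $u$). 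This pair of $d$-hop neighborhoods will witness $N_1 \succ k$-WL.

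For the $N_1$ side: since $u$ is adjacent to every other vertex, its induced $1$-hop neighborhood in $G_i^+$ is the entire graph $G_i^+$. As $G_1 \not\cong G_2$, also $G_1^+ \not\cong G_2^+$ (the apex is the unique vertex adjacent to all others, so an isomorphism of $G_i^+$ would restrict to an isomorphism of $G_i$). Hence the $N_1$ preprocessing assigns $u$ two different extra features, and already the standard GNN that simply outputs $h_u^{(0)}$ computes different final embeddings for $u$ in the two cases, so $N_1$ distinguishes the pair.

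For the $k$-WL side --- the crux --- I would use the pebble-game characterization: $k$-WL (in folklore indexing) fails to distinguish two graphs exactly when Duplicator wins the bijective $(k{+}1)$-pebble game on them. Duplicator wins this game on $(G_1, G_2)$ by assumption; extend the strategy to $(G_1^+, G_2^+)$ by always mapping $u$ in $G_1^+$ to $u$ in $G_2^+$ and, on the remaining vertices, following a winning strategy for the $(G_1, G_2)$ game played with whichever pebbles currently lie on $V(G_i)$ (at most $k{+}1$ of them). Every pebbled pair containing an apex is consistent because the apex is adjacent to all other vertices in both graphs; every pebbled pair avoiding the apices is consistent by the $(G_1, G_2)$ strategy. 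Thus Duplicator wins on $(G_1^+, G_2^+)$, i.e.\ $k$-WL does not distinguish them; together with the previous paragraph this gives $N_1 \succ k$-WL. One also checks that the construction has the promised size: $O(k)$ nodes and edges, and maximum degree $O(k)$, attained only at $u$.

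I expect the only real obstacle to be this last step, i.e.\ justifying that inserting an apex vertex cannot increase the discriminating power of $k$-WL. The bijective-pebble-game route above is the cleanest; one could instead argue directly on the stable $k$-WL colouring, but then one has to track how the colour classes of tuples that mix the apex with the original vertices stabilize, which is considerably more tedious. Everything else --- the $N_1$ side, the size bookkeeping, and the reduction to a known $k$-WL-equivalent pair --- is routine.
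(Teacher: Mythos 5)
Your proposal is correct and follows essentially the same route as the paper: take a CFI pair indistinguishable by $k$-WL, attach an apex vertex $u$ adjacent to everything so that the induced $1$-hop neighborhood of $u$ is the whole (non-isomorphic) graph, and argue that the apex does not help $k$-WL. The only difference is cosmetic: the paper justifies the apex-invariance step by the same color-refinement argument it sketches for $2$-WL (tuples containing the apex separate immediately and never interfere with the original tuples), whereas you use the bijective $(k{+}1)$-pebble-game characterization, which is a clean and equally valid way to make that step rigorous.
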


\begin{theorem} \label{th:cycfirst}
For any $k \geq 1$, we have $2$-WL $\succ$ $N_k$.
\end{theorem}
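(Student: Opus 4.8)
The plan is to build, for every $k$, a pair of $d$-hop neighborhoods that $N_k$ cannot tell apart but $2$-WL can, by taking two graphs that are locally identical out to radius $k$ around every node yet differ in a global, spectrally visible way. A natural candidate is $G_1 = C_n \times K_2$ (the prism) and $G_2$ the M\"obius ladder on $2n$ nodes, for a suitable $n = \Theta(k)$: large enough that the two graphs agree on all $k$-hop neighborhoods (say $n \ge 2k+2$), and small enough that every node has eccentricity at most $d$ (possible whenever $d$ exceeds $k$ by a small constant, which is consistent with the standing assumption $k<d$). Both are connected, $3$-regular, vertex-transitive and of size $O(k)$; fixing any node $u$, its $d$-hop neighborhood is then the whole graph, so the two $d$-hop neighborhoods are genuinely different rooted graphs.

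First I would show that already a standard GNN, and hence $N_k$, fails to distinguish $u$. Since both graphs are $3$-regular with no initial features, the depth-$d$ computation tree at $u$ is the complete ternary tree in both, so $1$-WL (and every standard GNN) gives $u$ the same embedding. For $N_k$, the crucial claim is that when $n$ is large enough relative to $k$, the induced $k$-hop neighborhood of every node of $G_1$ and of $G_2$ is isomorphic to one fixed ``ladder segment'': up to radius $k$ both the prism and the M\"obius ladder look like two parallel arcs joined by rungs, and the twist (resp.\ its absence) only becomes detectable at radius $\approx n/2 > k$. Verifying this just means writing out the vertices and induced edges within distance $k$ explicitly and exhibiting the isomorphism. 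Hence the $N_k$ preprocessing attaches the same constant extra feature to every node of both graphs, so the $N_k$ model collapses to a standard GNN on uniformly-featured graphs, which by the previous sentence cannot distinguish $u$.

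Next I would argue that $2$-WL separates $G_1$ from $G_2$. $2$-WL (in the folklore indexing) determines the number of closed walks of each length — e.g.\ the homomorphism counts from cycles, which have treewidth $2$ — hence the spectrum, so it is enough to check that the prism and the M\"obius ladder are not cospectral. Their eigenvalues are $\{2\cos(2\pi a/n) \pm 1\}_a$ and $\{2\cos(\pi j/n) + (-1)^j\}_j$; after cancelling the equal ``$+1$'' parts, the remaining parts differ (for instance, with $n$ odd the prism has eigenvalue $1$ while the M\"obius ladder does not), so the two graphs are non-cospectral and $2$-WL tells them apart. Finally, since both graphs are vertex-transitive, distinguishing them globally also yields a different embedding for the single node $u$: the multiset of $2$-WL colors of pairs incident to $u$ equals, up to the factor $|V|$, the whole multiset of pair colors. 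Together these give $2$-WL $\succ N_k$, as claimed.

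The main difficulty is the built-in tension in the construction: to blind $N_k$ the two graphs must be isomorphic in every $k$-ball, to be caught by $2$-WL they must differ in a spectrally visible global feature, and — because a $d$-hop neighborhood is a single rooted graph — each must fit inside the $d$-hop ball of one vertex, so the whole graph must have radius $O(k)$. This forbids any construction based on large girth (whose size would be exponential in $k$) and essentially forces a ``locally a ladder/cycle, globally twisted'' design; the real work is to pin down how large $n$ must be for all $k$-balls to coincide, and to confirm that the resulting small graphs are still non-cospectral.
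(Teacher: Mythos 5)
Your proof is correct, but it takes a noticeably heavier route than the paper's. The paper simply takes the two cycles $C_{2k+2}$ and $C_{2k+3}$: every induced $k$-hop neighborhood in either graph is a path on $2k+1$ nodes, so $N_k$ adds the same feature everywhere and collapses to $1$-WL, while $2$-WL distinguishes cycles of different lengths (it identifies all graphs of treewidth $2$). Your prism-versus-M\"obius-ladder construction achieves the same two properties -- all $k$-balls isomorphic to a fixed ladder segment for $n \geq 2k+2$, and separability by $2$-WL via non-cospectrality and homomorphism counts from cycles -- and it does buy something: the two graphs have the same number of nodes, are $3$-regular and vertex-transitive, so indistinguishability by $N_k$ transfers to the graph-classification setting with no modification (the paper instead has to patch its cycle construction in the appendix by taking $\ell_2$ copies of one cycle and $\ell_1$ of the other). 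The costs are the explicit verification that the $k$-balls coincide and the spectral computation, neither of which is needed for bare cycles. One small correction to your closing remarks: the constraint you describe does not actually forbid the simplest ``different girth'' construction -- two plain cycles of lengths $2k+2$ and $2k+3$ have only $O(k)$ nodes, fit in a $d$-hop ball for $d = k+2$, and already suffice; large girth only becomes expensive when you additionally insist on higher degree or on fooling stronger tests.
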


\begin{figure}
\centering
\hspace{0.005\textwidth}
\minipage{0.41\textwidth}
\centering
    \vspace{29pt}
    \resizebox{1.0\textwidth}{!}{\definecolor{dgray}{gray}{0.35}

\begin{tikzpicture}

	\draw (0pt,0pt) -- (0pt,30pt);
	\draw (30pt,0pt) -- (30pt,30pt);
	\draw (60pt,0pt) -- (60pt,30pt);
	\draw (90pt,0pt) -- (90pt,30pt);
	\draw (0pt,0pt) -- (30pt,0pt);
	\draw (0pt,30pt) -- (30pt,30pt);
	\draw (60pt,0pt) -- (90pt,0pt);
	\draw (60pt,30pt) -- (90pt,30pt);
	
	\draw[dgray] (0pt,0pt) -- (45pt,-35pt);
	\draw[dgray] (30pt,0pt) -- (45pt,-35pt);
	\draw[dgray] (60pt,0pt) -- (45pt,-35pt);
	\draw[dgray] (90pt,0pt) -- (45pt,-35pt);
	\draw[dgray] (0pt,30pt) -- (45pt,-35pt);
	\draw[dgray] (30pt,30pt) -- (45pt,-35pt);
	\draw[dgray] (60pt,30pt) -- (45pt,-35pt);
	\draw[dgray] (90pt,30pt) -- (45pt,-35pt);

	\draw[black, fill=white] (0pt,0pt) circle (4pt);
	\draw[black, fill=white] (0pt,30pt) circle (4pt);
	\draw[black, fill=white] (30pt,0pt) circle (4pt);
	\draw[black, fill=white] (30pt,30pt) circle (4pt);
	\draw[black, fill=white] (60pt,0pt) circle (4pt);
	\draw[black, fill=white] (60pt,30pt) circle (4pt);
	\draw[black, fill=white] (90pt,0pt) circle (4pt);
	\draw[black, fill=white] (90pt,30pt) circle (4pt);
	
	\draw[black, fill=white] (45pt,-35pt) circle (4pt);
	%\node[anchor=center] at (35pt,-40pt) {\large $u$};
	
	%---
	
	\draw (170pt,0pt) -- (170pt,30pt);
	\draw (260pt,0pt) -- (260pt,30pt);
	\draw (170pt,0pt) -- (260pt,0pt);
	\draw (170pt,30pt) -- (260pt,30pt);
	
	\draw[dgray] (170pt,0pt) -- (215pt,-35pt);
	\draw[dgray] (200pt,0pt) -- (215pt,-35pt);
	\draw[dgray] (230pt,0pt) -- (215pt,-35pt);
	\draw[dgray] (260pt,0pt) -- (215pt,-35pt);
	\draw[dgray] (170pt,30pt) -- (215pt,-35pt);
	\draw[dgray] (200pt,30pt) -- (215pt,-35pt);
	\draw[dgray] (230pt,30pt) -- (215pt,-35pt);
	\draw[dgray] (260pt,30pt) -- (215pt,-35pt);

	\draw[black, fill=white] (170pt,0pt) circle (4pt);
	\draw[black, fill=white] (170pt,30pt) circle (4pt);
	\draw[black, fill=white] (200pt,0pt) circle (4pt);
	\draw[black, fill=white] (200pt,30pt) circle (4pt);
	\draw[black, fill=white] (230pt,0pt) circle (4pt);
	\draw[black, fill=white] (230pt,30pt) circle (4pt);
	\draw[black, fill=white] (260pt,0pt) circle (4pt);
	\draw[black, fill=white] (260pt,30pt) circle (4pt);
	
	\draw[black, fill=white] (215pt,-35pt) circle (4pt);
	
\end{tikzpicture}}
    \vspace{-6pt}
    \caption{Illustration of the $C_{\ell, \ell}$ and $C_{2\ell}$ graphs for $\ell=4$.}
    \label{fig:CvC}
\endminipage\hfill
\hspace{0.1\textwidth}
\minipage{0.45\textwidth}
\centering
    \resizebox{1.0\textwidth}{!}{\begin{tikzpicture}

	\draw (0pt,0pt) -- (0pt,90pt);
	\draw (30pt,0pt) -- (30pt,90pt);
	\draw (60pt,0pt) -- (60pt,90pt);
	\draw (90pt,0pt) -- (90pt,90pt);
	\draw (0pt,0pt) -- (90pt,0pt);
	\draw (0pt,30pt) -- (90pt,30pt);
	\draw (0pt,60pt) -- (90pt,60pt);
	\draw (0pt,90pt) -- (90pt,90pt);
	
	\draw (0pt,0pt) arc (216.87:143.13:75pt);
	\draw (0pt,0pt) arc (216.87:143.13:50pt);
	\draw (0pt,30pt) arc (216.87:143.13:50pt);
	
	\draw (30pt,0pt) arc (216.87:143.13:75pt);
	\draw (30pt,0pt) arc (216.87:143.13:50pt);
	\draw (30pt,30pt) arc (216.87:143.13:50pt);
	
	\draw (60pt,0pt) arc (-36.87:36.87:75pt);
	\draw (60pt,0pt) arc (-36.87:36.87:50pt);
	\draw (60pt,30pt) arc (-36.87:36.87:50pt);
	
	\draw (90pt,0pt) arc (-36.87:36.87:75pt);
	\draw (90pt,0pt) arc (-36.87:36.87:50pt);
	\draw (90pt,30pt) arc (-36.87:36.87:50pt);
	
	\draw (0pt,0pt) arc (-126.87:-53.13:75pt);
	\draw (0pt,0pt) arc (-126.87:-53.13:50pt);
	\draw (30pt,0pt) arc (-126.87:-53.13:50pt);
	
	\draw (0pt,30pt) arc (-126.87:-53.13:75pt);
	\draw (0pt,30pt) arc (-126.87:-53.13:50pt);
	\draw (30pt,30pt) arc (-126.87:-53.13:50pt);
	
	\draw (0pt,60pt) arc (126.87:53.13:75pt);
	\draw (0pt,60pt) arc (126.87:53.13:50pt);
	\draw (30pt,60pt) arc (126.87:53.13:50pt);
	
	\draw (0pt,90pt) arc (126.87:53.13:75pt);
	\draw (0pt,90pt) arc (126.87:53.13:50pt);
	\draw (30pt,90pt) arc (126.87:53.13:50pt);

	\draw[black, fill=white] (0pt,0pt) circle (3pt);
	\draw[black, fill=white] (0pt,30pt) circle (3pt);
	\draw[black, fill=white] (0pt,60pt) circle (3pt);
	\draw[black, fill=white] (0pt,90pt) circle (3pt);
	\draw[black, fill=white] (30pt,0pt) circle (3pt);
	\draw[black, fill=white] (30pt,30pt) circle (3pt);
	\draw[black, fill=white] (30pt,60pt) circle (3pt);
	\draw[black, fill=white] (30pt,90pt) circle (3pt);
	\draw[black, fill=white] (60pt,0pt) circle (3pt);
	\draw[black, fill=white] (60pt,30pt) circle (3pt);
	\draw[black, fill=white] (60pt,60pt) circle (3pt);
	\draw[black, fill=white] (60pt,90pt) circle (3pt);
	\draw[black, fill=white] (90pt,0pt) circle (3pt);
	\draw[black, fill=white] (90pt,30pt) circle (3pt);
	\draw[black, fill=white] (90pt,60pt) circle (3pt);
	\draw[black, fill=white] (90pt,90pt) circle (3pt);

	%---
	
	\draw (170pt,0pt) -- (170pt,90pt);
	\draw (200pt,0pt) -- (200pt,90pt);
	\draw (230pt,0pt) -- (230pt,90pt);
	\draw (260pt,0pt) -- (260pt,90pt);
	\draw (170pt,0pt) -- (260pt,0pt);
	\draw (170pt,30pt) -- (260pt,30pt);
	\draw (170pt,60pt) -- (260pt,60pt);
	\draw (170pt,90pt) -- (260pt,90pt);
	
	\draw (170pt,30pt) -- (200pt,0pt);
	\draw (170pt,60pt) -- (230pt,0pt);
	\draw (170pt,90pt) -- (260pt,0pt);
	\draw (200pt,90pt) -- (260pt,30pt);
	\draw (230pt,90pt) -- (260pt,60pt);
	
	\draw (170pt,0pt) -- (200pt,90pt);
	\draw (200pt,0pt) -- (230pt,90pt);
	\draw (230pt,0pt) -- (260pt,90pt);
	\draw (170pt,0pt) -- (260pt,30pt);
	\draw (170pt,30pt) -- (260pt,60pt);
	\draw (170pt,60pt) -- (260pt,90pt);
	
	\draw (170pt,0pt) arc (216.87:143.13:75pt);
	\draw (260pt,0pt) arc (-36.87:36.87:75pt);
	\draw (170pt,0pt) arc (-126.87:-53.13:75pt);
	\draw (170pt,90pt) arc (126.87:53.13:75pt);
	
	\draw (200pt,0pt) arc (216.87:143.13:75pt);
	\draw (230pt,0pt) arc (-36.87:36.87:75pt);
	\draw (170pt,30pt) arc (-126.87:-53.13:75pt);
	\draw (170pt,60pt) arc (126.87:53.13:75pt);
	
	\draw (170pt,90pt) arc (68.2:21.8:161.55pt);
	
	%\draw (195pt,45pt) circle (63.63pt);

	\draw[black, fill=white] (170pt,0pt) circle (3pt);
	\draw[black, fill=white] (170pt,30pt) circle (3pt);
	\draw[black, fill=white] (170pt,60pt) circle (3pt);
	\draw[black, fill=white] (170pt,90pt) circle (3pt);
	\draw[black, fill=white] (200pt,0pt) circle (3pt);
	\draw[black, fill=white] (200pt,30pt) circle (3pt);
	\draw[black, fill=white] (200pt,60pt) circle (3pt);
	\draw[black, fill=white] (200pt,90pt) circle (3pt);
	\draw[black, fill=white] (230pt,0pt) circle (3pt);
	\draw[black, fill=white] (230pt,30pt) circle (3pt);
	\draw[black, fill=white] (230pt,60pt) circle (3pt);
	\draw[black, fill=white] (230pt,90pt) circle (3pt);
	\draw[black, fill=white] (260pt,0pt) circle (3pt);
	\draw[black, fill=white] (260pt,30pt) circle (3pt);
	\draw[black, fill=white] (260pt,60pt) circle (3pt);
	\draw[black, fill=white] (260pt,90pt) circle (3pt);

\end{tikzpicture}}
    \vspace{-15pt}
    \caption{The Rook's $4x4$ and Shrikhande graphs.}
    \label{fig:rook}
\endminipage\hfill
\hspace{0.01\textwidth}
\end{figure}

\subsection{When $M_k$ is superior}

In a similar fashion, we show that GNNs with markings can outperform the other methods in terms of expressiveness.

\begin{theorem} \label{th:markcycle}
For any $k \geq 1$, we have $M_1$ $\succ$ $N_k$ and $M_1$ $\succ$ $S_k$.
\end{theorem}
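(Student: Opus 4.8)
The plan is to exploit the fact that a long cycle looks locally like a path---so that every vertex-wise preprocessing step is useless on it---while a single marked vertex already reveals the cycle's length. Concretely, I would take the two $d$-hop neighborhoods to be the cycles $G_1 = C_{2k+2}$ and $G_2 = C_{2k+3}$, with $u$ an arbitrary vertex of the cycle, and $d$ large enough (say $d = 2k+5$) that both graphs coincide with the $d$-hop neighborhood of $u$ and $k < d$ holds; note this construction has $O(k)$ vertices and edges and maximum degree $2$.

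First I would argue that neither $N_k$ nor $S_k$ can distinguish $G_1$ from $G_2$. Since both cycle lengths exceed $2k+1$, the induced $k$-hop neighborhood of \emph{every} vertex of either graph is exactly the path $P_{2k+1}$; likewise, in a cycle whose length is larger than $k$, the only connected induced subgraph on $k' \le k$ vertices is the path $P_{k'}$, incident to each vertex exactly $k'$ times (no triangles or other small subgraphs occur). Hence $N_k$, respectively $S_k$, assigns one and the same tuple of extra features to every vertex of $G_1$ and of $G_2$, and running a standard GNN afterwards still computes the same embedding for $u$ in the two graphs, since two cycles with uniform features have the same universal cover.

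Next I would show that $M_1$ \emph{does} distinguish $G_1$ and $G_2$. By Theorem \ref{th:inj} it suffices to check that $G_1$ and $G_2$ are not inseparable under $1$-markings, i.e.\ that there is no bijection $\sigma$ between their sets of markings (of at most one vertex) with $m$ and $\sigma(m)$ always inseparable. I would single out the marking of $G_1$ that marks $u$ itself: when $1$-WL is run from this initialization, the color $u$ receives after $d$ rounds records its marked unrolled tree, which is a path with the root marked and the nearest further marked vertices at tree-distance exactly $2k+2$ (visible since $2k+2 \le d$). The only marking of $G_2$ under which $u$ is marked is the one marking $u$ in $C_{2k+3}$, where the nearest further marked vertex sits at tree-distance $2k+3 \ne 2k+2$, so $u$ gets a different $1$-WL color; hence this marking of $G_1$ is inseparable from no marking of $G_2$, so no such bijection $\sigma$ exists. (Even more directly, $G_1$ and $G_2$ have different numbers of vertices and hence different numbers of markings, so no bijection between the marking sets can exist at all.) By Theorem \ref{th:inj}, some realization of $M_1$ therefore separates $G_1$ and $G_2$, which proves $M_1 \succ N_k$ and $M_1 \succ S_k$ simultaneously.

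The part that needs care is the $M_1$ direction: one must make precise what a single marked vertex reveals to a $d$-round message-passing GNN on a cycle---essentially that the marked vertex reappears in $u$'s unrolled view with period equal to the cycle length, and is therefore observed at a tree-distance that encodes this length---and then phrase this through the inseparability criterion so that Theorem \ref{th:inj} applies. The $N_k$/$S_k$ side is routine once one observes that the preprocessing is identical at every vertex, so the extension collapses to an ordinary GNN that is blind to a cycle's length. The pair above is stated for the node-classification setting; a graph-classification version can be obtained by padding both cycles to the same number of vertices, so that a readout cannot tell them apart by counting.
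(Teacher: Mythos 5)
Your proposal is correct and uses essentially the same construction as the paper: the cycles $C_{2k+2}$ vs.\ $C_{2k+3}$, with the observation that all vertices receive identical $N_k$/$S_k$ features (paths only) while a single marking reveals the cycle length. The only cosmetic difference is the witness marking --- you mark $u$ itself and read off the tree-distance to its next lift (with a larger $d$), whereas the paper takes $d=k+1$ and marks the node antipodal to $u$, which then appears in both branches of $u$'s tree representation in $C_{2k+2}$ but not in $C_{2k+3}$; both arguments are valid.
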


Furthermore, two markings (or node removals) are also sufficient to make a GNN more expressive than $2$-WL.

\begin{theorem} \label{th:rook}
We have $M_2$ $\succ$ $2$-WL.
\end{theorem}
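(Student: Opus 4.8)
The plan is to use the classical pair of non-isomorphic strongly regular graphs with parameters $(16,6,2,2)$ shown in Figure~\ref{fig:rook}: let $G_1$ be the rook's $4\times 4$ graph (the Cartesian product $K_4 \,\square\, K_4$, equivalently $L(K_{4,4})$) and $G_2$ the Shrikhande graph. Both are $6$-regular on $16$ vertices with diameter $2$, so once the GNN runs $d\geq 3$ layers the $d$-hop neighborhood of any node is the whole graph and we may argue about $G_1$ and $G_2$ directly. That $2$-WL fails to separate them is the standard fact that $2$-WL (i.e.\ $2$-FWL) is constant on strongly regular graphs with fixed parameters: on such a graph the partition of ordered node pairs into ``equal'', ``adjacent'', ``non-adjacent'' is already stable under folklore refinement, since the relevant refinement counts at each pair depend only on $n,k,\lambda,\mu$; hence $G_1$ and $G_2$ get identical $2$-WL colorings. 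I would cite this rather than reprove it. The feature that $M_2$ will exploit is that every edge of $G_1$ lies in a $4$-clique (each row and each column of the grid), whereas $G_2$ is $K_4$-free: in the Shrikhande graph the induced neighborhood of every vertex is a $6$-cycle, so the two common neighbors of any edge are non-adjacent and no triangle closes to a $K_4$.

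For the positive direction I would invoke Theorem~\ref{th:inj}: it suffices to show that $G_1$ and $G_2$ are \emph{not} inseparable under $2$-markings, i.e.\ that the multiset of colors assigned to $u$ by $1$-WL-with-markings, over all markings of at most two nodes, differs between the two graphs. I would exhibit one marking of $G_1$ whose color for $u$ is attained by no marking of $G_2$. Pick a $4$-clique $\{u,q,x,y\}$ of $G_1$ (a row of the grid) and mark $x$ and $y$. Running $1$-WL from this initialization, the round-$1$ color of a node records its marked status and how many of its six neighbors are marked; writing $(M,j)$ / $(U,j)$ for a marked / unmarked node with $j$ marked neighbors, we get: $x$ and $y$ both get $(M,1)$ (they are adjacent); $q$ gets $(U,2)$; and the remaining three neighbors of $u$ get $(U,0)$. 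Hence the round-$2$ color of $u$ is that of an unmarked node whose six neighbors carry the round-$1$ colors $\{(M,1),(M,1),(U,2),(U,0),(U,0),(U,0)\}$, and $d$ layers of message passing (for any constant $d\geq 3$) compute at least this color of $u$.

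Next I would show that no $2$-marking of $G_2$ reproduces this color. Suppose one did; the marked nodes are necessarily exactly two nodes $x',y'$. Matching the color of $u$ forces: (i) $u\sim x'$ and $u\sim y'$; (ii) $x'$ is marked with exactly one marked neighbor, which can only be $y'$, so $x'\sim y'$; and (iii) $u$ has an unmarked neighbor $q'$ with two marked neighbors, which can only be $x'$ and $y'$, so $q'\sim x'$, $q'\sim y'$, and $q'\sim u$. Then $\{u,q',x',y'\}$ induces a $K_4$ in $G_2$, contradicting its $K_4$-freeness. So the color produced by the chosen marking of $G_1$ occurs for $G_1$ but for no marking of $G_2$; the two color-multisets differ, no bijection $\sigma$ as in the definition of inseparability exists, and by Theorem~\ref{th:inj} the injective GNN with markings distinguishes $G_1$ from $G_2$, whereas $2$-WL cannot.

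The main obstacle is making the middle step airtight — being precise about exactly which features of $G_1$ the round-$2$ color of $u$ encodes and checking that no stray $2$-marking of the Shrikhande graph accidentally imitates it. In particular, marking the two (non-adjacent) common neighbors of an edge of $G_2$ does reproduce the coarser multiset of marked-neighbor counts at $u$, so one genuinely needs the finer round-$1$ information (the marked/unmarked status of those neighbors, which yields $(M,0)$'s in $G_2$ where $G_1$ has $(M,1)$'s); only then does $K_4$-freeness deliver the contradiction. A secondary point, if one does not want to cite it, is a short argument that $2$-FWL refinement on a strongly regular graph stabilizes at the equal/adjacent/non-adjacent partition.
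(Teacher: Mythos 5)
Your proposal is correct and follows essentially the same route as the paper: the same Rook's $4\times4$ vs.\ Shrikhande pair, a $2$-marking of two adjacent neighbors of $u$, and the presence or absence of a common unmarked neighbor of the two marked nodes in $N(u)$ (equivalently, a $K_4$ through $u$) as the distinguishing signal, with indistinguishability under $2$-WL cited from the strongly-regular-graph fact. Your explicit ``forcing'' argument that no $2$-marking of the Shrikhande graph can imitate the color of $u$ is a slightly more careful write-up of the same idea the paper states via the $C_{3,3}$ vs.\ $C_6$ structure of the induced $1$-hop neighborhoods.
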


\begin{proof}
One can show this through the most popular example graphs for indistinguishability with $2$-WL: the Rook's $4x4$ and Shrikhande graphs, shown in Figure \ref{fig:rook}.

Let $u$ be any node in these graphs, and consider $M_2$ with $d=2$. Note that the induced $1$-hop neighborhood of $u$ in the two graphs is identical to $C_{3, 3}$ and $C_{6}$, respectively. Consider a $2$-marking where the marked nodes are adjacent to both $u$ and to each other. In this case, in the Rook's graph, there will also be a neighbor of $u$ which has two marked neighbors, whereas in the Shrikhande graph, $u$ will have no such neighbor.

That is, if there is a run where (i) $u$ has two marked neighbors, (ii) both marked neighbors have a marked neighbor, and (iii) $u$ has an unmarked neighbor with two marked neighbors, then it can deduce that it is in the Rook's graph. All these properties can be verified by a GNN with $d=2$.
\end{proof}

\subsection{When $S_k$ is superior} \label{sec:comp3}

Finally, let us consider graphs that can be distinguished by $S_k$, but not by the other methods. We note that some of these results require slight adjustments to carry over to the graph classification setting; see Appendix \ref{app:global} for details.

The comparison of $S_k$ and WL has already been conducted by Bouristas \textit{et al.} \cite{bouritsas2020improving}, who show that $S_4$ can already be superior to $2$-WL. In particular, the graphs in Figure \ref{fig:rook} can be distinguished based on the incident number of $4$-cliques, but cannot be separated by $2$-WL. We also add this as an explicit theorem for completeness.

\begin{theorem} \label{th:GSN}
We have $S_4$ $\succ$ $2$-WL.
\end{theorem}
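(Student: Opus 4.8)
The plan is to reuse the pair of graphs already discussed for Theorem~\ref{th:rook} and shown in Figure~\ref{fig:rook}: the $4\times 4$ rook's graph $R$ (the Cartesian product $K_4 \,\square\, K_4$) and the Shrikhande graph $H$. Both are strongly regular with parameters $(16,6,2,2)$, they are non-isomorphic, and both have diameter $2$. Hence for any $d \ge 2$ the whole graph lies inside the $d$-hop neighborhood of every node, so the $S_4$ preprocessing (which may reference $4$-node subgraphs) is well-defined in the restricted sense required by our definition; fix such a $d$, and let $u$ be an arbitrary node.

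For the $2$-WL side I would invoke the classical fact — also underlying the construction of \cite{bouritsas2020improving} — that no two strongly regular graphs with the same parameters are distinguished by $2$-WL. Briefly: on any strongly regular graph the $2$-WL coloring stabilizes already after one refinement round to the coloring in which the color of an ordered pair $(x,y)$ depends only on whether $x=y$, $xy \in E$, or $xy \notin E$, since the only counts entering the refinement of such a pair are the number of common neighbours of an edge ($=\lambda$) and of a non-edge ($=\mu$), both fixed by the parameters. As $R$ and $H$ have the same parameters, this stable coloring is identical on the two graphs, so $2$-WL assigns $u$ the same color in $R$ and in $H$, and therefore cannot separate the two $d$-hop neighborhoods.

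For the $S_4$ side, the point is that the number of induced $K_4$'s through a vertex differs between the two graphs. In $R = K_4 \,\square\, K_4$, any clique of size $4$ must lie within a single ``row'' or a single ``column'' (two vertices in different rows and different columns are non-adjacent), so the induced $K_4$'s are precisely the $4$ rows and the $4$ columns; every vertex lies in exactly one row and one column, hence in exactly $2$ induced $K_4$'s. The Shrikhande graph has clique number $3$, so every vertex lies in $0$ induced $K_4$'s. Thus the extra feature vector that $S_4$ attaches to a node — which contains, among the $4$-node entries, the number of incident induced copies of $K_4$ — differs between $R$ and $H$ at the node $u$, so any sufficiently powerful $S_4$ GNN already distinguishes the two $d$-hop neighborhoods (the gap is present in the initial embedding of $u$ and hence survives message passing); it also survives the final \textsc{readout}, which handles the graph-classification version (see Appendix~\ref{app:global}).

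Honestly there is no substantial obstacle here: the theorem is essentially a restatement of the construction of \cite{bouritsas2020improving}, included for completeness. The only steps requiring (entirely routine) care are confirming the $K_4$-count in $R$ directly from its product description, recalling that the Shrikhande graph is $K_4$-free, and citing the standard fact that $2$-WL cannot separate strongly regular graphs with equal parameters.
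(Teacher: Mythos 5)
Your proposal is correct and follows essentially the same route as the paper: both use the Rook's $4\times 4$ versus Shrikhande pair, rely on the standard fact that $2$-WL cannot separate strongly regular graphs with identical parameters, and separate the graphs via the number of incident $4$-cliques (two per vertex in the Rook's graph, zero in the $K_4$-free Shrikhande graph). The paper merely cites this as known from \cite{bouritsas2020improving}, so your added detail on the $K_4$ counts and the stabilization of the $2$-WL coloring is a harmless elaboration of the same argument.
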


On the other hand, it is also known that $2$-WL can count triangles and paths of length $2$, so $S_3$ $\subseteq$ $2$-WL.

As for the relationship of $S_k$ to the remaining methods, we cannot prove a result as general as in the previous cases, i.e. that $S_k$ for a certain $k$ is stronger than the rest of the extensions for any parameter $j$. For example, in case of $N_j$, it is already somewhat clear intuitively that $S_k$ can only hope to be better than $N_j$ as long as it has extra information, i.e. the counted substructures are not entirely contained in the induced $j$-hop neighborhood of $u$.

\begin{theorem} \label{th:SWL}
For any $k \geq 0$, we have
\begin{itemize}[topsep=5pt,itemsep=5pt,partopsep=5pt,parsep=0pt]
    \item $S_{(k+2)}$ $\subseteq$ $N_k$,
    \item $S_{(k+3)}$ $\succ$ $N_k$.
\end{itemize}
\end{theorem}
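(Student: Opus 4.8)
\emph{The inclusion $S_{(k+2)}\subseteq N_k$.} The plan is to show that the $N_k$ model, using its preprocessing together with a bounded number $O(k)$ of message‑passing rounds, can reproduce the preprocessing features of $S_{k+2}$; since both $S_{k+2}$‑ and $N_k$‑GNNs attain their maximum expressive power through an injective (GIN‑style) realization equivalent to colour refinement on the augmented initial features, this yields that any pair separated by an $S_{k+2}$‑GNN is separated by an $N_k$‑GNN. The crux is a structural lemma describing which connected induced subgraphs on at most $k+2$ vertices through a node $v$ can leave $B_k(v)$, the subgraph induced by the nodes within distance $k$ of $v$. Since distances inside a connected subgraph on $j$ vertices are at most $j-1$ and distances in $G$ are no larger, such a subgraph $G'$ lies inside $B_{k+1}(v)$; and if some vertex of $G'$ is at distance exactly $k+1$ from $v$, then the $k+2$ breadth‑first layers of $G'$ around $v$ are all nonempty singletons, forcing $G'$ to be the induced path $P_{k+2}$ with $v$ at an endpoint. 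Hence every coordinate of the $S_{k+2}$‑feature of $v$ other than the number of induced $P_{k+2}$'s rooted at $v$ is already a function of $B_k(v)$, i.e.\ of the $N_k$‑feature of $v$.

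It remains to recover that one number. I would write such a $P_{k+2}$ as an induced $P_{k+1}$ rooted at $v$ (which lies inside $B_k(v)$, hence is already known to $v$) with one further neighbour $w$ of its far endpoint $x$ appended; a candidate $w$ at distance $k+1$ from $v$ automatically satisfies the non‑adjacency constraints, and a candidate $w$ inside $B_k(v)$ is checked against the $N_k$‑feature of $v$, so the number of valid extensions of a fixed such $P_{k+1}$ equals $\deg(x) - |N(x)\cap B_k(v)|$. Here $|N(x)\cap B_k(v)|$ is read off from $B_k(v)$, while $\deg(x)$ is a quantity that $x$ itself knows; the remaining work is to show that $v$ can combine, over $k$ rounds of message passing on the $N_k$‑labelled graph, the (locally computable) counts of induced $P_{k+1}$'s ending at each boundary vertex with the degrees of those boundary vertices.

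\emph{The separation $S_{(k+3)}\succ N_k$.} The plan is to exhibit a concrete pair of $d$‑hop neighbourhoods (with $d$ about $k+2$) — in the spirit of the $C_{\ell,\ell}$ versus $C_{2\ell}$ construction of Theorem~\ref{th:CvC}, but tuned to scale $k+3$ — that are $k$‑locally indistinguishable in a matched way: every induced $k$‑hop neighbourhood is the same path $P_{2k+1}$ in both graphs, and the two $N_k$‑coloured graphs are $1$‑WL‑equivalent (so the $N_k$‑GNN reduces to plain $1$‑WL, which cannot separate two regular graphs with identical local structure), yet the graphs differ in the number of some connected induced subgraph on $k+3$ vertices incident to a chosen node $u$. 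The $S_{k+3}$‑GNN then separates them because that count is part of $u$'s $S_{k+3}$‑feature. I would also verify the extra condition needed to transfer the example to graph classification, as flagged in Section~\ref{sec:comp3}.

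The step I expect to be the main obstacle is the recovery of the induced‑$P_{k+2}$ count in the inclusion proof. Unlike an arbitrary subgraph count, a standard GNN cannot count paths, so one cannot simply route $\deg(x)$ along each path and sum; the argument must instead exploit that the $N_k$‑feature of $v$ already encodes the whole of $B_k(v)$ (so $v$ ``knows'', for each boundary vertex $x$, how many induced $P_{k+1}$'s from $v$ end there), leaving only the degrees of boundary vertices to be transmitted over $k$ rounds — which has to be argued carefully through the injective realization and the colour‑refinement characterization, so that nothing is counted that the model cannot count. A secondary difficulty lies in the separation: the witnessing $(k+3)$‑vertex substructure must be simultaneously invisible to every node's $k$‑hop neighbourhood (and must survive $1$‑WL refinement on the $N_k$‑coloured graph) yet be visible to a single incident count — which rules out the naive choice of a short cycle (a cycle short enough for $S_{k+3}$ to see sits inside the $k$‑hop neighbourhood of each of its nodes, so $N_k$ would see it too), and is precisely why the inclusion and the separation differ by one in the index.
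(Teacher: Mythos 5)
Your containment argument is essentially the paper's: you isolate the induced path on $k+2$ vertices with $v$ at an endpoint as the only subgraph that can escape $B_k(v)$, and you recover its count as $\sum_x (\text{\#induced }P_{k+1}\text{'s from }v\text{ ending at }x)\cdot(\deg(x)-|N(x)\cap B_k(v)|)$ over boundary vertices $x$. The one step you flag but do not finish --- how $v$ pairs the degree of each distance-$k$ vertex, which arrives only through message passing, with the $P_{k+1}$-count for that same vertex, which lives in the preprocessed graph --- is exactly the technicality the paper resolves: it compares the multiset of length-$k$ walks from $u$ in the tree representation (each tagged with the degree of its endpoint) against the multiset of such walks computable inside the induced $k$-hop neighborhood, and subtracts to extract the external degrees of the distance-$k$ vertices. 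You should supply this (or an equivalent) device; without it the inclusion proof is incomplete, though the plan is sound.

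The separation $S_{(k+3)}\succ N_k$ is where you have a genuine gap: you never exhibit a pair of graphs, and the desiderata you list are mutually unsatisfiable for $k\geq 2$. If every induced $k$-hop neighborhood is the path $P_{2k+1}$, the graph is $2$-regular with all cycles of length at least $2k+2>k+3$, so every connected induced subgraph on $k+3$ vertices is a path and every vertex is incident to the same number of them; the $S_{k+3}$-features then agree everywhere and nothing is separated. (Your own closing remark, that any witness short cycle would already be visible to $N_k$, is precisely the observation that kills this family of candidates.) The paper takes a different route: it anchors the statement at a distinguished node $u$ and pushes the asymmetry to distance $k+1$ from $u$ --- a path of length $k$ from $u$ to $v$, terminated in $G_1$ by a triangle at $v$ and in $G_2$ by two pendant paths of length $2$ at $v$. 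Then $u$'s own induced $k$-hop neighborhood is the bare path in both graphs, a standard GNN with $d=k+2$ cannot tell the two terminations apart, and the induced subgraph on the $k+3$ vertices consisting of the path together with the triangle is a witness available to $S_{k+3}$. Note that your instinct to demand indistinguishability of \emph{every} node's induced $k$-hop neighborhood (rather than only $u$'s) is the right one --- the $N_k$ features of nodes near $v$ also enter $u$'s receptive field and must be accounted for, a point the paper's write-up passes over quickly and which also drives its separate treatment of the graph-classification variant in Appendix \ref{app:global} --- but imposing it through $2$-regularity over-constrains the construction; you need an asymmetric gadget of the paper's kind, and then a careful argument about what the intermediate nodes' features reveal.
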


\begin{proof}
To show the containment result, note that except for the path of length $(k+1)$, any other graph on $(k+2)$ nodes has radius at most $k$. Hence if such a subgraph is incident to $u$, then it is contained entirely in the induced $k$-hop neighborhood; thus all these subgraphs can also be counted by $N_k$. As for paths of length $(k+1)$, the first $k$ edges of such paths are also always contained in the $k$-hop neighborhood, and the number of potential edges to conclude such a path can be inferred from the degree of the penultimate node (at distance $k$) by a standard GNN.

To show $S_{(k+3)}$ $\succ$ $N_k$, we can consider a path of length $k$ from $u$ to another node $v$. In $G_1$, we add another triangle incident to $v$, whereas in $G_2$, we add two more outgoing paths of length $2$; these seem identical to a standard GNN with $d=k+2$, and the induced neighborhoods are also identical up to $k$ hops. However, the entire graph in $G_1$ (the path and the triangle) consists of only $(k+3)$ nodes, so $S_{(k+3)}$ can use it to distinguish the two graphs.
\end{proof}

Finally, the most challenging task is to compare $S_k$ to the $M_j$ hierarchy. Here our results are not necessarily tight: for smaller $k$ values, it remains an open question whether $S_k$ can still outperform markings, or if a GNN with markings can indirectly infer the number of substructures. We will briefly revisit this question in Section \ref{sec:count}.

\begin{theorem} \label{th:CvC2}
For any $k \geq 1$, we have $S_{(2k+2)}$ $\succ$ $M_k$.
\end{theorem}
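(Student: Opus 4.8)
The plan is to adapt the cycle-based construction from the proof sketch of Theorem \ref{th:CvC}, but with an extra handle that only $S_{(2k+2)}$ can see. Concretely, I would build $G_1$ and $G_2$ from two long cycles $C_{\ell,\ell}$ versus one cycle $C_{2\ell}$ with $\ell$ chosen around $2k+1$ or slightly larger, joined to a universal node $u$ exactly as in Figure \ref{fig:CvC}, so that the message-passing depth $d$ is large enough that the whole graph lies in the $d$-hop neighborhood of $u$ (we may pick $d$ as small as $2$ since $u$ is universal, but we want $d$ large enough that the subgraph-counting features are not ``degenerate'' in the sense flagged in Section~\ref{sec:models}; choosing $\ell \le d+1$ or noting the graph is fully contained in the neighborhood resolves this). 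The key point is that a connected subgraph on $2k+2$ nodes \emph{can} contain a full $\ell$-cycle when $\ell \le 2k+1$, together with $u$, so $S_{(2k+2)}$ sees a subgraph isomorphism class (``$u$ plus a chord-free $\ell$-cycle all of whose vertices are adjacent to $u$'') that is present in $G_1$ but never in $G_2$ — in $G_2$ every set of $\ell$ consecutive cycle-vertices forms an arc, not a cycle, so that induced subgraph is a path-plus-apex, not a cycle-plus-apex. Hence the $S_{(2k+2)}$ feature vector of $u$ (or of any cycle node) differs in the two graphs, giving $S_{(2k+2)} \succ$ on this pair.

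The substantive half is showing $M_k$ \emph{cannot} distinguish $G_1$ from $G_2$. By Theorem \ref{th:inj} it suffices to exhibit a bijection $\sigma : \Gamma_1 \to \Gamma_2$ between the sets of $\le k$-markings such that $1$-WL, initialized according to $m$ and $\sigma(m)$, gives $u$ the same color. I would argue this by a ``local arc'' coupling: a marking of at most $k$ nodes partitions each $\ell$-cycle (resp. the $2\ell$-cycle) into maximal runs of unmarked vertices between consecutive marked vertices; since $\ell$ is chosen strictly larger than twice the number of marked nodes available on any single cycle, at least one such run has length $\ge$ (something like $\lceil \ell/k \rceil$), but more importantly, because there are never enough marks to ``close off'' an entire $\ell$-cycle, the multiset of (marked-vertex, gap-length) patterns that $1$-WL can ever resolve is the same whether those patterns sit inside an $\ell$-cycle or inside the $2\ell$-cycle. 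Formally I would define $\sigma$ to send a marking with a given cyclic pattern of marks-and-gaps to the marking of $C_{2\ell}$ realizing the concatenation of the two patterns (one from each $\ell$-cycle), and then prove by induction on the number of $1$-WL rounds that every node at the same ``pattern position'' receives the same color in both graphs — the induction goes through precisely because no node ever observes a contradiction, i.e. no node ever sees the two ends of an $\ell$-arc ``meet,'' which is the only structural difference between $C_{\ell,\ell}$ and $C_{2\ell}$.

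The main obstacle I anticipate is making the counting tight: I need $2k+2$ to be \emph{large enough} that an $\ell$-cycle-plus-apex fits as an induced $(2k+2)$-node subgraph (forcing $\ell \le 2k+1$), while simultaneously $\ell$ must be \emph{large enough} relative to $k$ that $k$ markings provably cannot pin down a full $\ell$-cycle (for the $C_{\ell,\ell}$-vs-$C_{2\ell}$ argument from Theorem \ref{th:CvC} one wants roughly $\ell > 2k$, since an adversary could split its $k$ marks unevenly and devote all of them to one cycle). Reconciling these — showing $\ell = 2k+1$ (or $\ell$ in a narrow window) works on both sides at once — is the delicate point, and it is exactly why the theorem has $S_{(2k+2)}$ rather than a smaller index; I would also double-check the edge case where $u$ itself is marked, which is handled symmetrically since $u$ occupies the identical position in both graphs. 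The bijection-plus-induction framework of Theorem \ref{th:inj}, together with the arc-coupling intuition already sketched for Theorem \ref{th:CvC}, should then close the argument. I would also remark, as the paper does elsewhere, that the construction has $O(k)$ nodes, edges, and maximum degree.
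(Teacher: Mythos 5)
Your proposal follows essentially the same route as the paper: the $C_{\ell,\ell}$ vs.\ $C_{2\ell}$ construction with $\ell=2k+1$, the $\ell$-cycle-plus-apex as the distinguishing $(2k+2)$-node subgraph, and a bijection between marking sets (matched by their arc patterns, valid because $2k<\ell$ prevents any arc from closing a full $\ell$-cycle) under which $1$-WL colors $u$ identically. The one point needing more care is that your bijection must be anchored to concrete node positions rather than defined on cyclic patterns up to rotation (the paper does this with a canonical breakpoint index and a prefix swap), but this is a matter of detail rather than a different approach.
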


\begin{proof}
One can prove this through a more rigorous analysis of the $C_{\ell, \ell}$ vs. $C_{2\ell}$ construction with a choice of $\ell=2k+1$. Intuitively, one can show that $u$ cannot distinguish the two graphs unless it can mark every second node in one of the small cycles of $C_{\ell, \ell}$, i.e. at least $k \geq \frac{\ell}{2}$ nodes. As such, $M_k$ cannot separate the two cases for $\ell=2k+1$. On the other hand, $S_{(\ell+1)}$ can distinguish the graphs from the $(\ell+1)$-node subgraph formed by one of the $\ell$-cycles and $u$, which only appears in $C_{\ell, \ell}$.
\end{proof}

\subsection{The limits of each approach}

Finally, we point out that while our GNN extensions are rather powerful, they are still far from efficiently distinguishing any pair of graphs. Some of our previous constructions can already be used to show that there are graphs that remain indistinguishable to our extensions until the parameter $k$ is only an additive/multiplicative constant away from the size of the graph. Moreover, one can essentially combine these properties in a single example which is simultaneously challenging for all of our extensions.

\begin{theorem} \label{th:CFI}
For all of $S_k$, $N_k$ and $M_k$, there exist a construction of pairs of non-isomorphic graphs $(G_1, G_1'), \, (G_2, G_2'), \, ... \,$ (of increasing size $n_i=|G_i|=|G_i'|$), such that $G_i$ and $G_i'$ in the corresponding construction cannot be distinguished
\begin{itemize}[topsep=5pt,itemsep=0pt,partopsep=5pt,parsep=5pt]
    \item with $S_k$ unless $k \geq n_i - O(1)$,
    \item with $N_k$ unless $k \geq n_i - O(1)$,
    \item with $M_k$ unless $k \geq n_i \, / \, O(1)$.
\end{itemize}
\end{theorem}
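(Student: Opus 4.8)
The plan is to give a single parametrized construction for each of the three hierarchies, reusing the graphs that already appeared in the comparison theorems. For $S_k$ and $N_k$, I would take the Cai--Fürer--Immerman (CFI) type gadgets, or more simply, adapt the $C_{\ell,\ell}$ vs.\ $C_{2\ell}$ construction from Theorem~\ref{th:CvC}: there, with $\ell \approx 2k$ the graphs are indistinguishable, so if we let $n_i$ grow and keep $\ell = \Theta(n_i)$, the construction stays indistinguishable for $S_k$ and $N_k$ unless $k$ is within an additive constant of $n_i$. More precisely, for $N_k$ one uses that the radius of these graphs (from any node other than the apex $u$) is $\Theta(\ell)$, so knowing the induced $k$-hop neighborhood for $k = n_i - \omega(1)$ still does not reveal whether the far side closes up into one long cycle or two short ones. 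For $S_k$ one uses the argument sketched in Theorem~\ref{th:CvC2}: the only subgraph that distinguishes $C_{\ell,\ell}$ from $C_{2\ell}$ is (essentially) a full $\ell$-cycle together with $u$, which has $\ell+1 = n_i - O(1)$ nodes, and all smaller induced subgraph counts agree. I would state a clean combinatorial lemma: for every connected graph $H$ on $j \le n_i - c$ vertices (for a suitable constant $c$), the number of occurrences of $H$ as an induced subgraph incident to $u$ is the same in the two graphs. This is the technical heart for the $S_k$ part.

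For $M_k$, the relevant bound is only $k \ge n_i / O(1)$, i.e.\ linear in $n_i$ but with a constant factor loss, which matches the intuition from Theorems~\ref{th:CvC} and~\ref{th:CvC2}: to recognize an $\ell$-cycle a GNN with markings essentially needs to mark every second node of that cycle, i.e.\ $\approx \ell/2$ nodes. So again on $C_{\ell,\ell}$ vs.\ $C_{2\ell}$ with $\ell = \Theta(n_i)$, the two graphs are inseparable under $k$-markings (in the sense of the definition preceding Theorem~\ref{th:inj}) as long as $k < \ell/2 - O(1)$. To make this rigorous I would exhibit, for each marking $m_1$ of at most $k < \ell/2 - O(1)$ nodes in $C_{\ell,\ell}$, a corresponding marking $m_2$ in $C_{2\ell}$ such that $1$-WL (with $d$ rounds, $d$ chosen appropriately, say $d = 2$ or a small constant, since the apex $u$ makes everything diameter $2$) assigns the same color to $u$; then bijectivity of $\sigma: \Gamma_1 \to \Gamma_2$ follows from a counting/symmetry argument. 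The key point is that with fewer than $\ell/2$ marks on a cycle there is always a gap of $\ge 2$ consecutive unmarked nodes, and the local views ($d$-hop trees with marked/unmarked labels) around $u$ cannot tell apart "two arcs glued into two short cycles" from "two arcs glued into one long cycle."

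I would organize the write-up as three short subsections (or three lemmas inside one proof), one per hierarchy, each following the template: (i) define $G_i, G_i'$ as the apex-augmented $C_{\ell,\ell}$ and $C_{2\ell}$ with $\ell$ chosen as a function of the target $k$-threshold; (ii) quote the already-proved indistinguishability mechanism from Theorem~\ref{th:CvC}/\ref{th:CvC2} and state the precise threshold; (iii) note non-isomorphism (triangle count, or the obvious cycle-structure difference). For the claim that "one can combine these properties in a single example," I would take a disjoint-ish union or a product-like combination: a graph built from a CFI gadget (hard for $S_k$ and $N_k$ up to additive constants) in which, additionally, a long cycle is embedded so that markings are also forced to be linear in $n_i$. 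The main obstacle I expect is precisely this last combination step — ensuring that a single family simultaneously defeats all three extensions at their respective thresholds, since the CFI construction's natural "hardness" parameter and the cycle-length parameter interact, and one must verify that adding the cycle does not accidentally create a small distinguishing subgraph or shrink some induced $k$-hop neighborhood enough to break the $N_k$ bound. If a fully unified construction is awkward, the fallback (which the theorem statement permits, since it only asks for "a construction" per extension) is to present the three families separately and remark that their disjoint union inherits all three lower bounds.
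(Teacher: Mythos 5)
Your $M_k$ part is essentially the paper's argument (the $C_{\ell,\ell}$ vs.\ $C_{2\ell}$ family with $\ell=\Theta(n_i)$, requiring roughly $\ell/2$ marks), and your overall plan of giving three separate families, one per hierarchy, is also what the paper does. But the specific families you propose for $S_k$ and $N_k$ do not give the claimed bounds. First, the $N_k$ argument is broken: in $C_{\ell,\ell}$ and $C_{2\ell}$ the apex $u$ is adjacent to \emph{every} other node, so the whole graph sits inside the induced $1$-hop neighborhood of $u$ and inside the induced $2$-hop neighborhood of every cycle node. Your claim that the radius from a non-apex node is $\Theta(\ell)$ is false; in fact $N_1 \succ S_k$ and $N_1 \succ M_k$ in Theorem~\ref{th:CvC} is proved \emph{precisely because} $N_1$ distinguishes these two graphs. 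Second, the arithmetic in your $S_k$ part is off: these graphs have $n_i = 2\ell+1$ nodes, so the distinguishing subgraph ($\ell$-cycle plus $u$) has $\ell+1 = (n_i+1)/2$ nodes, not $n_i - O(1)$; this only yields a multiplicative lower bound $k \geq n_i/2 - O(1)$, whereas the theorem claims the additive bound $k \geq n_i - O(1)$ for both $S_k$ and $N_k$.

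The paper gets the additive bounds from different (and simpler) families. For $S_k$ it uses two plain cycles of lengths $2k+2$ and $2k+3$ (with a leaf attached to equalize sizes): all incident induced subgraphs on fewer than $L_1$ nodes are paths with identical counts, so distinguishing forces $k \geq L_1 = n_i - O(1)$. For $N_k$ it uses the construction from the second part of Theorem~\ref{th:SWL}: a path of length $k$ from $u$ terminating either in a triangle or in two outgoing length-$2$ paths, so that the induced $k$-hop neighborhood of $u$ is a bare path in both cases and $n_i = k+O(1)$. Your closing discussion of a combined CFI-based construction is in the right spirit --- the paper does build one on a $3$-regular base graph of radius $n_0/O(1)$ --- but note that, exactly as you suspect, the combination degrades the $S_k$ and $N_k$ thresholds from additive to multiplicative, which is why the paper proves the theorem as stated via the three separate families and presents the unified CFI construction only as a weaker addendum.
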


\renewcommand*{\proofname}{Proof}

\section{Counting cliques and cycles} \label{sec:count}

Besides these direct comparisons, another natural way to evaluate GNN variants is by their ability to count specific small substructures in the graph. Two of the most natural choices for substructures of interest are cliques and cycles, which are known to be very relevant for applications in social science and molecule recognition, respectively \cite{subsapp2, subsapp1}. Previous work has already studied the standard $k$-WL hierarchy extensively in terms of its ability to count cliques and cycles \cite{WLcount, subcount, cellular}.

As such, we now also study whether it is possible to compute the number of $\ell$-cliques or induced $\ell$-cycles with our improved GNN variants for some $\ell \geq 3$, with the goal of finding the highest $\ell$ for which this is possible. Recall that $1$-WL is not even able to count $3$-cliques or $3$-cycles (i.e. triangles). We again focus on the problem from a single node's perspective (whether $u$ is able to count the number of $\ell$-cliques or $\ell$-cycles it is contained in), but the results also carry over to the global problem of counting the total number of such structures in the graph.

The question is easiest to answer for the $S_k$ hierarchy, where nodes are directly provided with the number of all incident substructures of size $k$. On the other hand, one can show that $S_k$ is unable to count larger structures than those that are already counted in its preprocessing phase.

\begin{theorem} \label{th:count_S}
An $S_k$ GNN can count $k$-cliques and $k$-cycles, but it cannot count $(k+1)$-cycles.
\end{theorem}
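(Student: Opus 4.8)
The plan is to handle the two directions separately; the positive one is essentially the definition of $S_k$, while the negative one needs a small construction in the spirit of Figure~\ref{fig:CvC}.

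For the positive direction I would simply note that $S_k$ obtains the counts for free. Its preprocessing ranges over every connected graph on exactly $k$ nodes, and this family contains the clique $K_k$ and, assuming $k\ge 3$, the cycle $C_k$; hence each node $u$ receives, among its extra features, the number of induced copies of $K_k$ incident to it — which equals the number of $k$-cliques through $u$, an induced $K_k$ being exactly a $k$-clique — and similarly the number of induced $C_k$'s through $u$. A standard GNN can then carry the appropriate feature through to $h_u^{(d)}$ for the per-node count, and the readout can sum this feature over all nodes and divide by $k$ (each $k$-clique, resp.\ induced $k$-cycle, being counted once at each of its $k$ vertices) to report the global total. This direction takes at most a sentence or two.

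The negative direction is the real content. In the node-classification viewpoint I would root $C_{k+1}$ (a single $(k+1)$-cycle) and $C_{2k+2}$ (a single $(2k+2)$-cycle) at an arbitrary vertex $u$, using $d=k$ layers so that $k\le d+1$. Here $u$ lies in one induced $(k+1)$-cycle in the first graph and in none in the second, since the only cycle of $C_{2k+2}$ has length $2k+2\neq k+1$. It then remains to show that $S_k$ gives $u$ the same embedding in both cases, which I would split into two steps. First, the preprocessing assigns $u$ the \emph{same} feature vector in both graphs: in a cycle of length larger than $k$ the only connected induced subgraphs on at most $k$ vertices are the arcs $P_j$, and a fixed vertex lies in precisely $j$ of the $j$-vertex arcs, so $u$ receives value $j$ for $P_j$ (for $1\le j\le k$) and $0$ for every other connected graph on at most $k$ nodes — in particular $0$ for $C_k$ and, since $k\ge 3$ makes these cycles triangle-free, $0$ for $K_k$. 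Second, a standard GNN run on top of this uniform feature is bounded by $1$-WL, which produces the identical colour sequence on the vertices of every cycle irrespective of its length; hence $u$ ends up with one and the same embedding in $C_{k+1}$ and $C_{2k+2}$. Since the two rooted graphs differ in the number of induced $(k+1)$-cycles through $u$, no $S_k$ GNN can output that number. For the graph-classification variant the same idea works once the two graphs are made equal in size — e.g.\ $C_{k+1,k+1}$ (two disjoint $(k+1)$-cycles, so two induced $(k+1)$-cycles in total) against $C_{2k+2}$ (none) — as the features are again uniform and identical and the readout then sees a multiset of $2k+2$ copies of a single embedding in either case.

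The only step needing care is the first of these two, namely verifying that the \emph{augmented} features — not just the bare $1$-WL colours — coincide; but this reduces to the elementary observations that a short induced subgraph of a long cycle is a path and that a vertex of a long cycle sits in exactly $j$ copies of $P_j$. The rest is routine, and the argument is unaffected if one means not-necessarily-induced $(k+1)$-cycles, since $C_{2k+2}$ contains none of those either.
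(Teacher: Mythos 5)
Your proposal is correct and follows essentially the same route as the paper: the positive direction is read off directly from the preprocessing, and the negative direction uses two cycles of length exceeding $k$ (the paper picks $k+1$ versus $k+2$, you pick $k+1$ versus $2k+2$, which is immaterial), observing that the only induced connected subgraphs of size at most $k$ incident to a node are arcs with identical counts, after which $1$-WL cannot separate the cycles. Your graph-classification adjustment via disjoint unions also matches the paper's Appendix \ref{app:global}.
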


We also show that the result is also tight on cliques for small $k$ values, i.e. that $S_k$ cannot count $(k+1)$-cliques.

In case of $N_k$, counting cliques is straightforward, since any clique is already contained in the $1$-hop induced neighborhood of a node. As for cycles, $N_k$ can only count them until we can ensure that the entire cycle is within the $k$-hop induced neighborhood of $u$.

\begin{theorem} \label{th:count_N}
An $N_1$ GNN can count $\ell$-cliques for any $\ell \geq 3$. An $N_k$ GNN can count $(2k+1)$-cycles, but it cannot count $(2k+2)$-cycles.
\end{theorem}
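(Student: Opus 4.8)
The plan is to prove each of the three claims in Theorem~\ref{th:count_N} separately, since they use different ideas.

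\textbf{Counting $\ell$-cliques with $N_1$.} First I would observe that any $\ell$-clique containing a node $u$ lies entirely within the induced $1$-hop neighborhood of $u$: all other $\ell-1$ vertices of the clique are neighbors of $u$, and all edges among them are present in the induced subgraph. Therefore the isomorphism class of the induced $1$-hop neighborhood of $u$ already determines the number of $\ell$-cliques through $u$ (for every $\ell \geq 3$ simultaneously): this number is exactly the count of $(\ell-1)$-cliques in the graph induced on $\mathrm{N}(u)$, which is a function of the isomorphism class. Since $N_1$ equips $u$ with a feature encoding this isomorphism class, the GNN can read off the clique count directly, and a trivial readout/aggregation recovers the total number of $\ell$-cliques in the graph. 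This part is essentially immediate.

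\textbf{Counting $(2k+1)$-cycles with $N_k$.} The key geometric fact is that an induced cycle of length $2k+1$ passing through $u$ has every vertex at distance at most $k$ from $u$ (the two arcs from $u$ have lengths summing to $2k+1$, so the shorter one has length at most $k$, and in fact \emph{every} vertex is reached within $k$ steps going one way or the other — more carefully, the farthest vertex from $u$ on the cycle is at cycle-distance $k$). Moreover all the chords/edges of such a cycle are visible in the induced $k$-hop neighborhood. Hence the induced $k$-hop neighborhood of $u$ contains every induced $(2k+1)$-cycle through $u$ as an induced subgraph, so its isomorphism class determines the number of such cycles; $N_k$ supplies this as a feature, and a standard GNN layer plus readout extracts and sums it. I would take care to phrase ``induced cycle'' precisely so that the count is well-defined, and to note that we only need $k < d$ so that message passing does not interfere (consistent with the standing assumption on $N_k$).

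\textbf{$N_k$ cannot count $(2k+2)$-cycles.} This is the part I expect to be the main obstacle, and here I would reuse the $C_{\ell,\ell}$ versus $C_{2\ell}$ construction from the proof sketch of Theorem~\ref{th:CvC} with the choice $\ell = 2k+2$, but \emph{without} the universal node $u$ that was added there — instead taking $u$ to be an ordinary cycle vertex, and setting $d$ large enough ($d = \ell$, say) that the whole graph is within message-passing range while still $k < d$. The two graphs $C_{\ell,\ell}$ and $C_{2\ell}$ differ in their number of $(2k+2)$-cycles (two versus one, counting from a suitable normalization), so if $N_k$ could count $(2k+2)$-cycles it would distinguish them. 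To derive the contradiction I would show that $N_k$ gives every vertex the same feature in both graphs: the induced $k$-hop neighborhood of any vertex on a cycle of length $\geq 2k+1$ is just a path of length $2k$ (an arc), regardless of whether the vertex sits on an $\ell$-cycle or on the $2\ell$-cycle, since $k$ hops in each direction never wrap around. Thus the $N_k$-augmented graphs are still indistinguishable by $1$-WL (the tree unrollings coincide), so no GNN on top can tell them apart, contradicting the ability to count $(2k+2)$-cycles. The delicate points are (i) getting the off-by-one exactly right so that $\ell = 2k+2$ makes the $k$-hop neighborhoods coincide but the cycle counts differ, and (ii) making sure the construction also handles the global counting version, which follows because the two whole graphs are $1$-WL-equivalent after the $N_k$ preprocessing.
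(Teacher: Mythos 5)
Your proof is correct, and the two positive parts (every $\ell$-clique through $u$ lies in the induced $1$-hop neighborhood; every induced $(2k+1)$-cycle through $u$ lies in the induced $k$-hop neighborhood) are exactly the paper's argument. For the negative part you choose a slightly different counterexample: the paper simply compares a single cycle of length $2k+2$ with a single cycle of length $2k+3$ (reusing the construction from Theorem \ref{th:markcycle}), whereas you compare two disjoint $(2k+2)$-cycles with one $(4k+4)$-cycle, i.e.\ the $C_{\ell,\ell}$ vs.\ $C_{2\ell}$ idea without the apex. Both rest on the same key fact --- every vertex of a cycle of length at least $2k+2$ sees a path on $2k+1$ vertices as its induced $k$-hop neighborhood, and all such augmented graphs have identical tree unrollings --- so the difference is cosmetic; your version has the minor advantage that the two graphs already have equal order, which the paper only arranges for the graph-classification variant by taking $\ell_2$ copies of one cycle and $\ell_1$ of the other. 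Two small slips, neither of which affects your chosen parameters: the induced $k$-hop neighborhood of a cycle vertex is a path only when the cycle has length at least $2k+2$ (for length exactly $2k+1$ it is the entire cycle, which is precisely why the positive claim holds), and the $(2k+2)$-cycle counts in your two graphs are two versus zero, not two versus one.
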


Finally, $M_k$ turns out to be more challenging to analyze. However, it is still relatively straightforward to show that the set of $k$-markings allow us to identify all $(k+2)$-cliques in the graph.

\begin{theorem} \label{th:count_M}
An $M_k$ GNN can count $(k+2)$-cliques.
\end{theorem}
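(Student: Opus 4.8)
The plan is to exploit markings to single out a $(k+1)$-clique through $u$, count how many further vertices extend it to a $(k+2)$-clique, and then divide out the overcounting. I would work with $d=2$. For a marking $m$, design the GNN so that $u$ computes the following value $f(m)$: if $u$ is marked, set $f(m)=0$; otherwise check the two local conditions (a) $u$ has exactly $k$ marked neighbours, and (b) every marked neighbour of $u$ has exactly $k-1$ marked neighbours; if either fails set $f(m)=0$, and if both hold let $f(m)$ be the number of \emph{unmarked} neighbours of $u$ that have exactly $k$ marked neighbours. After one round, every node can encode whether it is marked together with its number of marked and of unmarked neighbours, so after a second round $u$ can evaluate (a), (b) and the final count; hence a sufficiently powerful GNN with markings computes $f(m)$ in two layers (cf.\ the injective realisation of Theorem~\ref{th:inj}). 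The run-aggregation function then returns $\tfrac{1}{k+1}\sum_m f(m)$, summing over all markings of at most $k$ nodes.

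The core of the argument is to identify what $f(m)$ counts. Call $m$ \emph{good} if it marks exactly $k$ nodes and these nodes together with $u$ form a $(k+1)$-clique. If $m$ is good, then (a) and (b) hold: each marked node is adjacent to the other $k-1$ marked nodes, and to no further marked node, since only $k$ nodes are marked in total. Moreover, because the $k$ marked nodes are \emph{all} the marked nodes of the graph, an unmarked neighbour of $u$ has exactly $k$ marked neighbours iff it is adjacent to all $k$ of them, i.e.\ iff it completes a $(k+2)$-clique with $\{u\}\cup m$; so $f(m)$ is precisely the number of $(k+2)$-cliques through $u$ that contain this marked $(k+1)$-clique. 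Conversely, if $f(m)\neq 0$ then (a) forces at least $k$ — hence exactly $k$ — nodes to be marked, so the marked set is exactly the set of marked neighbours of $u$; then (b) forces each marked node's $k-1$ marked neighbours to be the other $k-1$ marked nodes, so $\{u\}\cup m$ is a $(k+1)$-clique and $m$ is good. Thus $\sum_m f(m)$ effectively ranges only over good markings.

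It then remains to count by double counting. Each $(k+2)$-clique $K$ through $u$ gives rise to exactly $\binom{k+1}{k}=k+1$ good markings $m\subseteq K\setminus\{u\}$, one for each vertex $w\in K\setminus\{u\}$ left unmarked, and for that $m$ the vertex $w$ is an unmarked neighbour of $u$ with exactly $k$ marked neighbours, so it contributes $1$ to $f(m)$; conversely every unit of every $f(m)$ is obtained in this way from a unique clique $K$. Hence $\sum_m f(m)=(k+1)\cdot(\#\,(k+2)\text{-cliques through }u)$, and the division by the constant $k+1$ gives the answer (the global count then follows by also summing over $u$ via the readout). The step I expect to be the genuine obstacle is establishing that the $d=2$-checkable conditions (a)--(b) are equivalent to the global statement ``the marked nodes form a clique through $u$'': this hinges on the marking having size \emph{exactly} $k$, which pins down a marked node's $k-1$ marked neighbours to be all the other marked nodes and thereby rules out a marked node sitting outside $u$'s neighbourhood.
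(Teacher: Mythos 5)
Your proposal is correct and follows essentially the same approach as the paper: mark $k$ of the $k+1$ non-$u$ vertices of the clique, use the fact that at most $k$ nodes are marked in total to verify locally (via marked-neighbor counts) that the marked set forms a $(k+1)$-clique with $u$, count the unmarked completions, and divide by $k+1$. The only difference is cosmetic — you have $u$ perform all checks itself in $d=2$ rounds, whereas the paper routes the verification through the unmarked clique vertex $v$ over three rounds; your writeup of the converse direction (that a nonzero count forces the marked set to be exactly a $(k+1)$-clique through $u$) is if anything more careful.
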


We again show that this result is also tight for small $k$ values, i.e. that $M_k$ cannot count $(k+3)$-cliques. Counting cycles with $M_k$, on the other hand, is a much harder problem; we discuss some results on how $M_k$ can count cycles in a more limited sense in Appendix \ref{app:counting}.

\bibliography{references}
\bibliographystyle{plain}

%%%%%%%%%%%%%%%%%%%%%%%%%%%%%%%%%%%%%%%%%%%%%%%%%%%%%%%%%%%%%%%%%%%%%%%%%%%%%%%
%%%%%%%%%%%%%%%%%%%%%%%%%%%%%%%%%%%%%%%%%%%%%%%%%%%%%%%%%%%%%%%%%%%%%%%%%%%%%%%
% APPENDIX
%%%%%%%%%%%%%%%%%%%%%%%%%%%%%%%%%%%%%%%%%%%%%%%%%%%%%%%%%%%%%%%%%%%%%%%%%%%%%%%
%%%%%%%%%%%%%%%%%%%%%%%%%%%%%%%%%%%%%%%%%%%%%%%%%%%%%%%%%%%%%%%%%%%%%%%%%%%%%%%
\newpage
\appendix
\onecolumn
\section{More details on the GNN extensions} \label{app:models}

In this section we discuss each of our GNN extensions in slightly more detail.

In case of $S_k$, a naive implementation of the method would require a preprocessing time that is in $O(\binom{|H|}{k})$, where $|H|$ denotes the number of nodes in the $(k-1)$-neighborhood of $u$. Let us denote the largest degree in the graph by $\Delta$; in several applications (e.g. chemistry or biology), this $\Delta$ is essentially considered a constant. In these cases, one can apply a naive upper bound of $|H| \leq O(\Delta^{k-1})$. As such, if both $\Delta$ and $k$ are small constants, then the required preprocessing time can be essentially linear in the size $n$ of the whole graph, which makes it much more efficient than higher-order WL methods. However, another practical issue with $S_k$ is the number of new features added, since the number of non-isomorphic connected graphs on $k$ nodes grows rapidly.

For $N_k$, the running time of the preprocessing phase again depends on the size of the $k$-hop neighborhood of $u$; we can again upper bound this by $O(\Delta^{k-1})$. Note that we need to compute the isomorphism class of the neighborhood in question; since the best known exact algorithms for isomorphism testing are only quasi-polynomial, this only makes the approach viable for very small $k$ values and on graphs with very small $\Delta$ in practice. Alternatively, one might use a hash function or a more sophisticated heuristic to approximate these extra features in a practical implementation. 

Note that for both $S_k$ and $N_k$, an alternative approach would be to assign the extra features to $u$ by not only distinguishing different subgraphs/induced neighborhoods, but also based on the position of $u$ within the given graph. That is, e.g. for a given subgraph of size at most $k$, we not only add a single extra feature, but multiple features also based on where $u$ is located in the given subgraph. Some of the related works that study these approaches also consider this idea. We note that this modification does not affect our results: our positive results on $S_k$ and $N_k$ do not require this information, while our claims on the limits of the methods still hold if this information is added.

In case of markings or node removals, if $|H|$ now denotes the size of the $d$-hop neighborhood of $u$, then the number of possible ways to mark at most $k$ nodes in this graph is in $O(\binom{|H|}{k})$; note that in contrast to $S_k$, $H$ may now be in the (possibly much larger) magnitude of $O(\Delta^{d})$. In each case, we now need a separate GNN run for $d$ rounds in the graph induced by the $d$-hop neighborhood of $u$, executing this for all $n$ nodes of the graph separately. An alternative is the probabilistic approach of \cite{nips}, which allows us to execute the method on the whole graph together, with all nodes observing the same markings in a single GNN run; however, in this case, we are only guaranteed to observe every possible combination of marked (or removed) nodes with a certain probability.

Another question regarding the marking/node removal approach is whether $u$ is in general aware of the number of nodes that are marked/removed in the current run. That is, for simplicity, we assumed that our framework is provided with each configuration that corresponds to a $k'$-marking (for $k' \in  \{ 0, ..., k \}$), it computes an embedding for $u$, and then it merges this multiset into a final embedding. Another possibility would be a setting where for each run, besides an embedding, $u$ is also directly informed of the number of nodes marked in the run (and hence formally the multiset becomes a multiset of pairs, each pair consisting of an embedding and an integer $k'$). The reason this needs to be discussed is because in the probabilistic setting, the GNN may encounter a specific $k'$-marking multiple times, and thus might be able to infer $k'$ from the frequency of these cases, hence obtaining extra information compared to the discussed case when it is directly ensured that each $k'$-marking is visited exactly once. However, we note that this change is not relevant for our proofs; that is, whenever we show that $M_k$ is more expressive than another model, the corresponding proofs do not require us to directly label the specific markings by the number of marked nodes, and whenever a pair of graphs cannot be distinguished by $M_k$, then the same proof still holds even if the specific runs are labeled with the corresponding $k'$ value.

We note that an alternative method is to perturb (e.g. remove) not nodes, but edges of the original; this has also been studied in \cite{ESAN}.

\section{GNNs with markings} \label{app:marking}

\subsection{Discussion of GNNs with markings}

Note that GNNs with markings are a very natural generalization of the node removal approach introduced in previous works: we still provide the same symmetry breaking information to the GNN in each run, but instead of removing the nodes, we allow the GNN to learn the way in which these selected nodes are handled differently.

Since the main idea for symmetry breaking is identical, many aspects from the analysis in these previous works also carry over to our model without changes, e.g. the number of possible $k$-markings in a $d$-hop neighborhood, or the number of runs required to observe every $k$-marking in a probabilistic setting.

The concept of marking also removes a minor technicality around node removals: should we allow a node $u$ to compute an embedding in a run where $u$ is removed, or should it compute a specific \textsc{invalid} value? In case of markings, $u$ simply remains a node of the network, and it is aware of the fact that it has been marked.

Note that the marking idea is also loosely related to the concept of identity-aware GNNs \cite{Iaware}, where the authors assume that each node can recognize itself (but only itself) in the tree representation of its neighborhood. However, these approaches require a different kind of way to process the tree representation of every node, and furthermore, in contrast to markings, the GNN is often still unable to the recognize that specific nodes in the tree representation actually correspond to the same original node.

\subsection{Proof of Theorem \ref{th:markdrop}}

We now show Theorem \ref{th:markdrop}, i.e. that markings are indeed more expressive than node removals. Intuitively, markings offer the following advantages compared to the node removal approach:
\begin{itemize}[topsep=0pt,itemsep=0pt,partopsep=7pt,parsep=7pt]
\item Whenever multiple nodes are removed, the information about the edges between these nodes is lost. On the other hand, in case of markings, marked nodes are still aware of (and can easily e.g. count) their marked neighbors. Due to this, it is significantly easier to e.g. count cliques with markings than with dropouts.
\item Whenever a node is removed, the GNN is not able to pass information through this node anymore. 
It can happen that two subgraphs are only distinguishable through the dropout of a specific node $v$, however, all paths from a node $u$ in our graph go through this node $v$. As such, $u$ cannot separate the two cases, since whenever $v$ is dropped, $u$ has no access to the rest of this subgraph.
The same problem does not appear in case of markings, where the underlying graph remains intact.
\end{itemize}
We construct a concrete example graph based on the first point.

\renewcommand*{\proofname}{Proof of Theorem \ref{th:markdrop}}
\begin{proof}
We define two graphs $G_1$ and $G_2$ as follows. Consider two independent cycles of length $4$ and $10$, respectively, and number their nodes from $0$ to $3$ and from $0$ to $9$ in a clockwise order. Let us draw an edge from our node $u$ to all the $14$ nodes in these graphs. We then add two extra edges to both $G_1$ and $G_2$. In $G_1$, we connect nodes $0$ and $2$, and nodes $5$ and $7$ in the cycle of length $10$; that is, we add two chords between nodes at distance $2$ in the distant ends of the cycle. In $G_2$, we connect nodes $0$ and $5$ of the $10$-cycle, and we connect nodes $0$ and $2$ of the $4$-cycle (i.e. we add one of the longest possible chords in both cycles). Finally, there are still $10$ nodes in both graphs that have degree $3$ only; to each of these, we add a separate leaf node that we only connect to this specific node. We call these leaves \textit{upper nodes}, while we call the original $14$ nodes \textit{lower nodes}. Furthermore, let us refer to the $4$ lower nodes that do not have an adjacent upper node as \textit{crossing nodes}. Altogether, both $G_1$ and $G_2$ consists of $u$ (with degree $14$), $14$ lower nodes of degree $4$, and $10$ upper nodes of degree $1$.

We consider this graph from $u$'s perspective with $d=2$; note that the entire graph is within the $2$-hop neighborhood of $u$. We consider a GNN with $k=2$; that is, we show that (i) the two graphs cannot be distinguished by a GNN after removing two nodes, but (ii) the two graphs can be distinguished by a GNN after marking two nodes.

Let us first analyze the case of removing nodes. If $k'$ nodes are removed from the $d$-hop neighborhood of $u$ in a run, then let us call this run a $k'$\textit{-removal} (analogously to our definition of a $k'$-marking). Note that similarly to the case of the $C_{\ell, \ell}$ and $C_{2\ell}$ graphs (analyzed later), a GNN can essentially extract the following information from the graph in $d=2$ rounds in a given run: the (remaining) degree of $u$ in the first round, and the multiset of the (remaining) degrees of the nodes adjacent to $u$ in the second round. If both of these coincide for a specific run, then a GNN with node removals is unable to distinguish the given $k'$-removals.

Let us analyze all the node removal patterns in both graphs. Note that without removing nodes, all neighbors of $u$ have degree $4$, so the two graphs are identical to $1$-WL.

As for $1$-removals, there are $24$ of these in both $G_1$ and $G_2$ (assuming that $u$ itself is not removed). In $10$ of these $1$-removals, we delete a lower node which also has an upper neighbor; these cases are all identical for a GNN, since they imply that $u$'s degree decreases by $1$, and two neighbors of $u$ also have their degree decreased by $1$ and $1$ (we will use $(1,1)$ for a short notation of this effect on the neighbors, and call it the \textit{signature} of this removal). Besides this, there are $4$ distinct $1$-removals (in both graphs) where the deleted lower node is a crossing node; this means that $u$'s degree decreases by $1$, and three distinct neighbors of $u$ also lose a degree (i.e. a signature of $(1,1,1)$). Finally, there are $10$ distinct $1$-removals where an upper node is removed; this has no effect on $u$'s degree, and has signature $(1)$. Since all of these patterns have identical multiplicity in $G_1$ and $G_2$, the two graphs cannot be distinguished from the $1$-removals.

Now let us consider $2$-removals, and split this into three cases, based on whether two upper nodes are removed, or two lower nodes, or an upper and a lower node. The simplest case is when two upper nodes are removed: this has no effect on $u$'s degree, has a signature of $(1,1)$, and can occur in $\binom{10}{2}$ different ways in both graphs.

Now assume that one upper and one lower node is removed. Note that all of these cases reduce the degree of $u$ by $1$, so we only need to consider the degrees of $u$'s neighbors. There are $10$ pairs in both graphs where we remove a lower node and its upper neighbor; this results in a pattern of $(1,1)$. Next let us consider the pairs where $v$ is a lower node, and the upper node we remove is adjacent to one of the neighbors of $v$. There are $8$ cases in both graphs where $v$ is a crossing node, which result in $(2,1,1)$. In the rest of the cases ($12$ of them), the signature is always $(2,1)$. Finally, consider the pairs where the distance between the two nodes is at least $3$. When $v$ is a crossing node and the upper node does not belong to a neighbor of $v$, then the signature is $(1,1,1,1)$; this happens in $32$ ways in both graphs. When $v$ is not a crossing node and the upper node does not belong to a neighbor of $v$, then the signature is $(1,1,1)$, and this can happen in $78$ ways.

Finally, assume that two lower nodes are removed; this always decreases the degree of $u$ by $2$. Note that in this case (and this is the main idea of the proof), if the two removed nodes were crossing nodes connected by an edge, then $u$ remains unaware of this. First consider the pairs that are adjacent to each other along one of the cycles (i.e. not through a chord added later); there are $14$ such pairs in both graphs. In both graphs, $4$ of these pairs have signature $(2,1)$: this happens when one of the nodes is a crossing node with a short chord (over an arc of length $2$), and the other node is the node in the middle of this arc. Another $4$ pairs have signature $(1,1,1)$: one of the nodes is still a crossing node, and its neighbor is not part of a short arc; in particular, this always happens in the $10$-cycle, with nodes $3$, $4$, $8$, $9$ (and their crossing neighbor) in $G_1$, and nodes $1$, $4$, $6$, $9$ (and their crossing neighbor) in $G_2$. The remaining $6$ adjacent pairs have signature $(1,1)$.

Now consider the lower node pairs that are at distance $2$ along one of the cycles. In both graphs, these pairs in the $4$-cycle give a signature of $(2,2)$. In both $10$-cycles, there are $4$ such pairs where exactly one the two nodes is a crossing node, which results in $(2,1,1,1)$. The remaining such pairs all produce a signature of $(2,1,1)$, and there are $6$ such pairs.

Finally, consider the lower node pairs that are at distance larger than $2$ along the $10$-cycle, or in different cycles. There are $4$ such pairs where both nodes are crossing nodes and they are not adjacent, which results in a signature of $(1,1,1,1,1,1)$. There are $4$ cases in both graphs where exactly one of the two nodes is a crossing node, and they also have a common neighbor in the $10$-cycle (such as e.g. nodes $0$ and $3$ in $G_1$, or nodes $0$ and $4$ in $G_2$); this results in $(2,1,1,1)$. There are also $28$ pairs where exactly one of the two nodes is crossing, but they have no common neighbor (apart from $u$); this gives $(1,1,1,1,1)$. Finally, there are $33$ cases in $G_1$ where neither of the nodes is crossing, and these give a signature of $(1,1,1,1)$. In $G_2$, there are $32$ corresponding cases of two non-crossing nodes, with signature $(1,1,1,1)$. However, there is also the pair with the two crossing nodes in the $10$-cycle, which provides the same signature of $(1,1,1,1)$, thus increasing its multiplicity to $33$ in $G_2$, too.

Since each case occurs the same number of times in the two graphs, the graphs cannot be distinguished by a GNN based on the set of $1$-removals and $2$-removals. On the other hand, consider the $2$-markings in the graph; in particular, consider the specific $2$-marking in $G_2$ where we mark the two crossing nodes in the $4$-cycle. This comes with a signature of $(2,2)$, i.e. in $d=2$, $u$ will know that it has two neighbors that are both adjacent to both of the marked nodes. Furthermore (and in contrast to node removals), the marked nodes also detect in the first round that they have a marked neighbor, and communicate this to $u$ in the second round. That is, $u$ can recognize this situation by having (i) two unmarked neighbors that are both adjacent to two marked nodes, and (ii) two marked neighbors that are also adjacent to a marked node. The same situation can never occur in $G_1$, since the $4$-cycle has no chord. As such, if there is a $2$-marking where this occurs, then $u$ can deduce that the graph is $G_2$ and not $G_1$.
\end{proof}

\subsection{Proof of Theorem \ref{th:inj}}

Finally, let us consider the maximal expressive power of GNNs with markings. The reference point for these GNNs is the $1$-WL algorithm with colors initialized according to a specific marking. Note that in Section \ref{sec:mark}, we have only defined this for graphs without input features; if input features are also present, then nodes are initialized to a different color for the marked and unmarked version of each input feature.

Whenever two nodes receive the same color in $1$-WL with colors initialized according to a marking, then a GNN with markings computes the same embedding for these two nodes. For this, we need to observe that if the initialization is based on markings, then the colors assigned to marked nodes and the colors assigned to unmarked nodes will remain disjoint during the entire $1$-WL procedure. One can then show our claim with a simple induction: assume that the claim holds up to round $(t-1)$. Then in round $t$, if the immediate neighborhood of two nodes contains the same multiset of colors, then these colors can unequivocally be sorted into two groups (colors belonging to marked and unmarked nodes), and the two groups will be identical for the two nodes. This implies that $\textsc{aggr}_{\text{marked}}$ and $\textsc{aggr}_{\text{unmarked}}$ receives the same input, so the two nodes compute the same $a_u\!^{(t)}$. Since the two nodes also have the same color, in round $(t-1)$, both of them will apply the same update function (out of $\textsc{upd}_{\text{marked}}$ and $\textsc{upd}_{\text{unmarked}}$), and hence they compute the same embedding $h_u\!^{(t)}$.

It remains to show that an appropriate GNN implementation can indeed reach this expressiveness.

\renewcommand*{\proofname}{Proof of Theorem \ref{th:inj}}

\begin{proof}
For the proof of injectiveness, we consider the same assumptions as in case of GIN \cite{gin}: the space of initial features is countable, and there is a known upper bound $L$ on the degree of nodes. Similarly to in case of \cite{gin}, an induction shows that the space of possible embeddings remains countable after any constant number of rounds.

The main idea of the proof is also similar, but it requires some modifications due to our more general setting. After each round, we know that there exists a mapping $Z$ from the space of current possible embeddings to the natural numbers $\mathbb{N}$ (because the set is countable). For $i \in \{ 0, 1, 2, 3\}$, let us define the function $f_i(x)=L^{-4 \cdot Z(x) + i}$, and then for a multiset $X$, let us define
\[ \textsc{aggr}_{\text{marked}}(X) = \sum_{x \in X} f_0(x) \qquad \text{and} \qquad \textsc{aggr}_{\text{unmarked}}(X) = \sum_{x \in X} f_1(x) \, . \]
Furthermore, let \[ \textsc{upd}_{\text{marked}}(h_u\!^{(t-1)}, a_u\!^{(t)}) = f_2(h_u\!^{(t-1)}) + a_u\!^{(t)} \: \text{  and}\] \[ \textsc{upd}_{\text{unmarked}}(h_u\!^{(t-1)}, a_u\!^{(t)}) = f_3(h_u\!^{(t-1)}) + a_u\!^{(t)} \, .\]
The resulting representation allows us to unambiguously reconstruct both $h_u\!^{(t-1)}$ and the multiset of previous adjacent embeddings. The digits at positions $2$ and $3$ modulo $4$ essentially implement a one-hot encoding for the embedding $h_u\!^{(t-1)}$ of $u$ (with the modulus of position also indicating whether $u$ is marked). The digits at positions $0$ and $1$ modulo $4$ encode an $L$-digit representation of the multiset of adjacent embeddings, for marked an unmarked neighbors, respectively, similarly to GIN. As such, whenever two nodes receive a different color in the next round $1$-WL, then they also compute a different embedding in the next round. An induction shows that this holds over any number of rounds. Note that in a practical implementation, the functions $f_i$ can be replaced by a universal approximation tool such as a multi-layer perceptron (MLP).

Note that such a sophisticated function is in fact only required in the first round, when marked nodes have to be separated from unmarked nodes. In the following rounds, the marking information is already indirectly contained in the embeddings of the nodes, so an injective standard GNN (which ignores markings) is also sufficient for all the remaining rounds to ensure injectivity.

Finally, we can select the run-aggregation function to be an injective multiset function; note that this representation technique from \cite{gin} shows exactly that such a function exists, and can be implemented with a combination of an MLP and summation. As such, if two neighborhoods are separable under markings (i.e. the corresponding multisets of final embeddings are not identical), then a GNN with such a run-aggregation function assigns a different final embedding to them.
\end{proof}

\renewcommand*{\proofname}{Proof.}

\section{Proofs of Theorems \ref{th:CvC} and \ref{th:CvC2}: the $C_{\ell, \ell}$ vs. $C_{2\ell}$ construction} \label{app:CvC}

This section discusses the proofs of Theorems \ref{th:CvC} and \ref{th:CvC2}, through a detailed analysis of the $C_{\ell, \ell}$ vs. $C_{2\ell}$ construction. Throughout the analysis, we will assume $d=2$. Note that the entire graphs are within the $2$-hop neighborhood of $u$, and thus $u$ could easily distinguish the two graphs in case of a classical distributed algorithm in two rounds.

Some basic ingredients of the proofs have already been discussed in Section \ref{sec:compare}: $N_1$ can always distinguish the two graphs since both graphs are within the induced $1$-hop neighborhood of $u$ (and they are non-isomorphic), and $S_{(\ell+1)}$ can also distinguish them, since the subgraph consisting of an $\ell$-cycle and a fully connected node is a structure on $(\ell+1)$ nodes that only appears in $C_{\ell, \ell}$.

In order to complete the proofs, the following further ingredients are needed:

\begin{lemma} \label{lem1}
$S_k$ cannot distinguish the two graphs if $k \leq \ell$.
\end{lemma}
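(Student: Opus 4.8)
The plan is to show that for $k \leq \ell$, every connected $k$-node induced subgraph incident to $u$ appears the same number of times in $C_{\ell,\ell}$ and $C_{2\ell}$, which immediately implies $S_k$ cannot distinguish them. First I would observe that since $u$ is adjacent to all $2\ell$ cycle nodes, any induced subgraph $H$ containing $u$ is fully determined by the set $W$ of cycle nodes it uses together with the induced structure on $W$ \emph{inside the cycle}: $u$ contributes edges to all of $W$, and the only remaining edges are the cycle edges among the vertices of $W$. So counting incident $k$-node subgraphs in each graph reduces to counting, over all $(k-1)$-subsets $W$ of the cycle vertices, the isomorphism type of the graph ``$W$ with its induced cycle-edges, plus a universal vertex.'' I would split this into the case $u \in H$ (the main case) and the case $u \notin H$ (then $H$ lives entirely in a union of cycles and, since $k \le \ell$, inside arcs — handled the same way).

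The key combinatorial step is this: a $(k-1)$-subset of the vertices of a single $\ell$-cycle induces a disjoint union of paths (arcs), where a ``path'' on one vertex is allowed, and it can never induce the whole cycle because $k-1 \le \ell-1$. So the induced-edge structure on any $W$ with $|W| \le \ell - 1$ inside \emph{either} $C_{\ell,\ell}$ \emph{or} $C_{2\ell}$ is always a disjoint union of arcs (with total length $< \ell$ within each cycle). Hence the relevant isomorphism type of $H$ is encoded by the multiset of arc-lengths used. I would then show that for each fixed multiset of arc-lengths $(a_1, \dots, a_m)$ with $\sum (a_i + 1) = k-1$ and each $a_i \le \ell - 1$, the number of ways to place these arcs is the same in $C_{\ell,\ell}$ as in $C_{2\ell}$. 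This is the standard ``arcs look local'' counting: placing an ordered collection of disjoint arcs of given lengths around a cycle of length $L$, separated by gaps of length $\ge 1$, is a function of $L$ and the arc lengths only via a cyclic composition count; and one checks that the count of arc-placements in $C_{2\ell}$ (one $2\ell$-cycle) equals the count in $C_{\ell,\ell}$ (two $\ell$-cycles) whenever each arc plus one endpoint fits in length $\le \ell$ — because an arc-system fitting in a $2\ell$-cycle without wrapping all the way around behaves exactly as if it were laid out on a line segment, and the two $\ell$-cycles together provide the same linear ``room.'' I would make this precise by the generating-function / transfer-matrix identity for placing labeled arcs with mandatory unit gaps around a cycle, or more simply by a direct bijection: cut each $\ell$-cycle at a canonical point to get two paths of length $\ell$, cut the $2\ell$-cycle to get one path of length $2\ell$, and match arc-placements that avoid the cut.

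The main obstacle I anticipate is the bookkeeping of \emph{gaps and multiplicities} in the arc-placement count — in particular, handling the boundary case where an arc has length exactly $\ell - 1$ (so it ``almost wraps'' a small cycle), making sure the counts for degenerate single-vertex arcs are matched, and correctly accounting for automorphisms so that each unlabeled subgraph $H$ is counted with the right multiplicity. A clean way around this is to count \emph{labeled} occurrences (choices of an actual vertex subset) rather than unlabeled subgraph copies, prove the labeled counts agree via the cut-and-match bijection, and only at the very end divide by $|\mathrm{Aut}(H)|$ uniformly. I would also need to separately but briefly dispatch the $u \notin H$ case, where $H$ is just a union of arcs with no universal vertex; the same arc-counting argument applies verbatim since $k \le \ell$ still forbids a full cycle. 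Putting these together gives that all incident subgraph counts of size $\le \ell$ coincide, so $S_k$ assigns $u$ (and every other node, by symmetry) the same feature vector in both graphs, and a standard GNN on top cannot distinguish them.
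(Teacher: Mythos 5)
Your overall strategy is the same as the paper's: since the tree representations coincide, reduce the claim to showing that the $(k-1)$-subsets of cycle vertices induce the same multiset of arc structures (the paper calls these ``tassels'') in the single $2\ell$-cycle as in the two disjoint $\ell$-cycles. The gap is that this counting identity is precisely the crux of the lemma, and neither mechanism you offer establishes it. The ``cut each cycle at a canonical point and match arc-placements that avoid the cut'' bijection only handles selections that miss the cut vertices; the selections that straddle the cut are exactly where the two graphs differ (the pair $\{\ell,\ell+1\}$ is an edge in $C_{2\ell}$ but spans two components in $C_{\ell,\ell}$, while $\{\ell,1\}$ is an edge only in the $\ell$-cycle), and these are the cases that require a nontrivial correspondence. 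The paper resolves this with an explicit prefix-swapping bijection: if the selection contains node $\ell$ or $2\ell$, find the smallest $i$ such that neither $i$ nor $\ell+i$ is selected (which exists and satisfies $i\le\ell-2$ precisely because $k-1<\ell$), and exchange the selected nodes among positions $1,\dots,i-1$ with those among $\ell+1,\dots,\ell+i-1$; one then checks this preserves the induced tassel. Some argument of this kind (or a fully worked-out transfer-matrix identity) is needed; as written, ``the two $\ell$-cycles together provide the same linear room'' is an assertion, not a proof.

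A second, smaller issue: in your $u\notin H$ branch you claim that $k\le\ell$ ``still forbids a full cycle,'' but a connected induced subgraph on $k=\ell$ vertices avoiding $u$ can be an entire $\ell$-cycle of $C_{\ell,\ell}$; indeed, every cycle vertex of $C_{\ell,\ell}$ lies on an induced $\ell$-cycle avoiding $u$, while no vertex of $C_{2\ell}$ does, so at $k=\ell$ the non-central nodes receive \emph{different} $S_k$ features in the two graphs. Your ``by symmetry, every other node gets the same feature vector'' is therefore false at the boundary $k=\ell$, and the $u\notin H$ case cannot be dispatched ``verbatim.'' The paper's proof sidesteps this by arguing only about subgraphs containing $u$ (which use at most $k-1\le\ell-1$ cycle vertices, so no full $\ell$-cycle can appear), and it explicitly tightens the hypothesis to $k<\ell$ in the appendix when the features of the other nodes come into play.
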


\begin{proof}
The tree representations of the graphs observed by $u$ are identical in the two cases, so we only need to show that $u$ is assigned the same extra features (i.e. observes the same subgraphs) in the two cases. Let us analyze the subgraphs of a specific size $k$ (this is a slight abuse of notation, since the $k$ in $S_k$ denotes the maximal size of these subgraphs).

Any subgraph of size $k$ incident to $u$ consists of $(k-1)$ nodes distributed somehow along the cycle(s). Since $u$ is contained in all subgraphs, each subgraph is essentially a collection of paths such that the sum of the length of the paths is $(k-1)$, and then another node is connected to each node of every path. Let $k'=k-1$, and for simplicity, let us call a graph that consists of disjoint paths on a total of $k'$ nodes a \textit{tassel} graph. Each subgraph of size $k$ is completely characterized by such a tassel after discarding $u$ from it. Hence, in order to show that the multiset of adjacent substructures is identical in the two graphs, it suffices to show that if (i) we consider a single $2 \ell$-cycle and the union of two $\ell$-cycles, (ii) we select $k'$ nodes from both graphs in every possible way, and (iii) we consider the corresponding tassels (induced by the selected nodes), then we end up with the same result (same multiset of tassels) in both cases.

Let the $2 \ell$-cycle and the two independent $\ell$-cycles be denoted by $G_1$ and $G_2$, respectively, for simplicity. Let us consider all possible ways to select $k'$ nodes from $G_1$, and denote it by $\mathcal{P} _1$. Let us consider all possible ways to select $k'$ nodes from $G_2$, and denote it by $\mathcal{P} _2$. We show a bijection between $\mathcal{P} _1$ and $\mathcal{P} _2$ such that the corresponding selection of nodes induce the same tassel. Note that both graphs have $2 \ell$ nodes, so the cardinality of both $\mathcal{P} _1$ and $\mathcal{P} _2$ is $\binom{2 \ell}{k'}$.

Let us number the nodes of $G_1$ from $1$ to $2 \ell$ clockwise, and in $G_2$, number the nodes of the two cycles clockwise from $1$ to $\ell$ and from $\ell+1$ to $2\ell$, respectively. A natural starting point is to consider a bijection of nodes with the same number, and for any $k'$-tuple of nodes in $\mathcal{P} _1$, simply assign to it the $k'$-tuple of nodes in $\mathcal{P} _2$ with the same numbers. The problem with this approach is that when both nodes $\ell$ and $\ell+1$ are selected in $G_1$, then this forms a continuous path, but in $G_2$, these nodes are part of different cycles. On the other hand, whenever both nodes $\ell$ and $1$ are selected in $G_2$, then this is a continuous path within the first cycle, but not in $G_1$, where the other neighbor of node $\ell$ is node $\ell+1$ instead of node $1$. Hence with this trivial approach, the bijected pairs from $\mathcal{P} _1$ and $\mathcal{P} _2$ do not always produce the same tassel.

To overcome this, consider the following approach. Given a $k'$-tuple of nodes $p_1 \in \mathcal{P} _1$, if neither node $\ell$ nor node $2\ell$ is contained in $p_1$, then we follow the trivial approach above, i.e. we select the subset of nodes with the same numbers. Intuitively, since the paths of this tassel are interrupted anyway in the points where the two graphs differ, the resulting collection of paths will be the same. On the other hand, if at least one of nodes $\ell$ and $2\ell$ is within the $k'$-tuple $p_1$, then we find the smallest index $i$ such that the following holds: neither node $i$ nor node $\ell+i$ is contained within the $k'$-tuple. Note that $i$ is well-defined, and since we only have $k'$ nodes in $p_1$ with $k'< \ell$, and one of these nodes is either node $\ell$ or $2\ell$, we will certainly have $i \leq \ell-2$. Then, intuitively speaking, we ``swap'' nodes $(1, ..., i-1)$ with nodes $(\ell+1, ..., \ell+i-1)$: that is, if a node $j \in \{ 1, ..., i-1\}$ is selected in $p_1$, then we include node $j+\ell$ in the corresponding selection $p_2 \in \mathcal{P} _2$, and if a node $j \in \{ \ell+1, ..., \ell+i-1\}$ is selected in $p_1$, then we include node $j-\ell$ in $p_2$.

This strategy is indeed a bijection: for any $p_2 \in \mathcal{P} _2$, we can easily find the element in $p_1 \in \mathcal{P} _1$ that this $p_2$ was assigned to. This is relatively simple, since $i \leq \ell-2$, and therefore the elements $\ell$ and $2\ell$ are never swapped. Hence if neither $\ell$ nor $2\ell$ is contained in $p_2$, then $p_2$ was assigned to the same selection of numbers in $\mathcal{P} _1$. On the other hand, if one of them is contained in $p_2$, then we know that a swapping happened, hence we find the index $i$ described above (this is invariant to our swapping), and exchange the node pairs $j$ and $j+\ell$ (for $j \in \{ 1, ..., i-1\}$) in order to compute $p_1$ from $p_2$.

It remains to show that the corresponding pairs produce the same tassel. This is also straightforward from our construction method: whenever a path does not contain nodes $\ell$ and $2\ell$, it is continuously mapped from $p_1$ to $p_2$, and whenever one of these nodes is contained, the swapping operation ensures continuity. Note that indices $i$ and $(\ell+i)$ are a breakpoint between two paths both before and after swapping, so the path segments between $(i+1)$ and $\ell$ and between $(\ell+i+1)$ and $2\ell$ are entirely unaffected by the swapping.

The bijection between the tassels shows that the multiset of subgraphs of size $k$ (incident to $u$) are identical in the two graphs. Since this applies to any $k \leq \ell$, $u$ receives the same features in the two graphs.
\end{proof}

\begin{lemma} \label{lem2}
$M_k$ cannot distinguish the two graphs if $2k < \ell$.
\end{lemma}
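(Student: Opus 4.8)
By Theorem~\ref{th:inj} and the induction argument accompanying it (equal $1$-WL color after $d$ rounds implies equal GNN embedding), it suffices to show that $C_{\ell,\ell}$ and $C_{2\ell}$ are \emph{inseparable under $k$-markings}: the plan is to exhibit a bijection $\sigma$ between the markings of the two graphs such that, for every marking $m$, $1$-WL (run for the standing $d=2$ rounds) assigns $u$ the same color in $C_{\ell,\ell}$ under $m$ as in $C_{2\ell}$ under $\sigma(m)$.

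First I would pin down what $u$'s final color is. Since $u$ is adjacent to all $2\ell$ cycle-nodes and $d=2$: after one round the color of a cycle-node $v$ depends only on $v$'s mark status and the multiset of mark statuses of its two cycle-neighbors — call this pair the \emph{local type} of $v$, of which there are six — and after the second round the color of $u$ is exactly the multiset of these local types over the $2\ell$ cycle-nodes (plus the mark status of $u$ itself, which is matched trivially). So it is enough to make this multiset of local types agree for corresponding markings, and from here on I only track the marked subset $S$ of cycle-nodes, $|S|\le k$.

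Next I would build $\sigma$ as a strengthened version of the prefix-swap of Lemma~\ref{lem1}. Label the $2\ell$ cycle-nodes of both graphs by $\{1,\dots,2\ell\}$, with $C_{2\ell}$ carrying all edges $\{p,p+1\}$ cyclically and $C_{\ell,\ell}$ carrying $\{p,p+1\}$ for $p<\ell$ plus $\{\ell,1\}$ and $\{p,p+1\}$ for $\ell<p<2\ell$ plus $\{2\ell,\ell+1\}$; the two adjacency relations agree except at two ``seam'' edges. Given $S$, let $i$ be the smallest index in $\{1,\dots,\ell\}$ such that none of $i,\ i+1,\ \ell+i,\ \ell+i+1$ lies in $S$, and let $\sigma(S)$ be $S$ with the entries on positions $1,\dots,i-1$ exchanged with those on positions $\ell+1,\dots,\ell+i-1$. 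Three things then need checking. (i) \emph{Existence of $i$}: a short case check shows each node of $S$ blocks at most two candidate values of $i$ in $\{1,\dots,\ell\}$, so at most $2|S|\le 2k<\ell$ values are excluded and some $i$ survives — this is the only place the hypothesis $2k<\ell$ enters. (ii) \emph{Bijectivity}: as in Lemma~\ref{lem1}, the swap exchanges the entries of positions $p$ and $\ell+p$ for $p<i$, so the defining condition of $i$ is invariant under it; hence $i$ is recoverable from $\sigma(S)$ and $\sigma$ is a bijection between two finite sets of equal cardinality. (iii) \emph{Color preservation}: the cyclic word of mark statuses of $S$ around $C_{2\ell}$ then has the shape $\alpha\,00\,\beta\gamma\,00\,\delta$, where the two ``$00$'' blocks are the (unmarked) positions $i,i+1$ and $\ell+i,\ell+i+1$; the words of $\sigma(S)$ on the two $\ell$-cycles of $C_{\ell,\ell}$ are $00\,\beta\gamma$ and $00\,\delta\alpha$; and comparing the multisets of length-$3$ windows of these cyclic words shows they coincide (the windows internal to the $\beta\gamma$- and $\delta\alpha$-blocks, and all windows touching a ``$00$'', occur with matching multiplicities). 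Hence the multiset of local types, and so $u$'s color, is preserved. A couple of degenerate ranges ($\beta\gamma$ or $\delta\alpha$ of length at most $1$, and markings that include $u$) need to be handled separately but are routine.

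The step I expect to be the main obstacle is (iii) together with (i). The plain Lemma~\ref{lem1} swap only has to preserve the adjacency pattern among the marked nodes, which is what matters for counting incident subgraphs; here $u$ also perceives the local types of the \emph{unmarked} nodes, i.e.\ length-$3$ windows, and forcing the windows at both seams to match requires the two cut points to lie inside unmarked arcs of length at least two — this is exactly what the strengthened choice of $i$ guarantees. The delicate point is that this stronger condition is still always satisfiable: it forbids up to $2|S|$ values of $i$, and with $|S|\le k$ and $2k<\ell$ this stays just below the $\ell$ available positions, so the argument holds precisely on the claimed range of $k$.
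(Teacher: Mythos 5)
Your proposal is correct and follows essentially the same route as the paper's proof: reduce $u$'s two-round color to the multiset of local marking patterns around the cycle(s) (the paper phrases this via ``arcs,'' you via length-$3$ windows, which is equivalent), then exhibit the same prefix-swap bijection anchored at the smallest $i$ with $i,i+1,\ell+i,\ell+i+1$ all unmarked, whose existence is where $2k<\ell$ enters. Your counting argument for the existence of $i$ (each marked node excludes at most two candidates) is a slightly cleaner packaging than the paper's bound on the maximal arc length, but the proof is otherwise the same.
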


\begin{proof}
Recall from the proof of Theorem \ref{th:markdrop} that in $d=2$ rounds, $u$ can essentially extract the following information from the graph: the number of its marked and unmarked neighbors (in round $1$), and for each of these neighbors, the number of marked/unmarked nodes adjacent to this neighbor.

More specifically, if two nodes are marked at distance $2$ along (one of) the cycle(s), then this can still be recognized by the GNN: $u$ then has a neighbor which detected two distinct marked neighbors in round $1$. On the other hand, if the nodes are at distance $2$ along a cycle, then this is already indistinguishable from the case when there is an arbitrary large distance between the two nodes: in both cases, $u$ will only observe two neighbors that only have a single marked neighbor in round $1$, and $u$ has no way to recognize that these two neighbors are also adjacent to each other.

As such, similarly to Lemma \ref{lem1}, we can essentially separate our cycles into arcs by splitting them at every point where two consecutive nodes are unmarked. Note that in contrast to Lemma \ref{lem1}, an arc is not just defined by its length: it can be any sequence of marked an unmarked nodes, with the restriction that it has no two consecutive unmarked nodes. If two arcs are identical, then the GNN receives the same set of messages from these nodes. The remaining nodes outside of the arcs (i.e. having distance at least $2$ to any marked node) always send the same messages to $u$ regardless of their position, since they are not aware of any marking. As such, if two graphs consist of the same multiset of arcs (possibly distributed along the cycle(s) in a different way), then the GNN with markings is unable to distinguish the two graphs.

From here we follow the same proof idea as in Lemma \ref{lem1}: given all the possible sets of arcs $\mathcal{P}_1$ formed when distributing $k$ markings along a $2\ell$-cycle ($G_1$), and all the possible sets of arcs $\mathcal{P}_2$ formed when distributing $k$ markings along two distinct $\ell$-cycles ($G_2$), we show a bijection between $\mathcal{P}_1$ and $\mathcal{P}_2$ that preserves the set of arcs. We number the nodes of $G_1$ and $G_2$ as before. Our criteria for swapping is similar to before: if all of the nodes $(\ell-1)$, $\ell$, $(2\ell-1)$ and $2\ell$ are unmarked, then we again assign every node to its original counterpart. Otherwise, we compute the smallest index $i \geq 0$ such that all of the nodes $(\ell+i)$, $(\ell+i+1)$, $i$ and $(i+1)$ are unmarked, and hence these two pairs of nodes form a valid breakpoint for the arcs (in case of $i=0$, node number $0$ is understood as an alias for node $2 \ell$). Note that any arc can have length at most $(2\cdot k-1)$, so even if we have the longest possible arc starting at node $\ell$ or $2\ell$, we still have $i \leq 2 \cdot k-1 \leq \ell - 2$. We again swap the node pairs $j$ and $\ell+j$ for each $j \in \{ 1, ..., i-1\}$ to find the pair $p_2 \in \mathcal{P}_2$ of a marking $p_1 \in \mathcal{P}_1$.

This is once again a valid bijection: since $i \leq \ell - 2$, the markings at positions $(\ell-1)$, $\ell$, $(2\ell-1)$ and $2\ell$ are never swapped, and hence for any $p_2 \in \mathcal{P}_2$, we can easily reconstruct the $p_1$ it was mapped to. Furthermore, the swapping ensures that each arc is mapped continuously; hence the corresponding pairs of markings result in the same final embedding for $u$.

By applying this proof for all $k' \in \{ 1, ..., k\}$, it follows that $M_k$ obtains the same multiset of embeddings in the two graphs over the set of all runs, so regardless of the run-aggregation function, the final embeddings are identical.
\end{proof}

\begin{lemma} \label{lem3}
$2$-WL can distinguish the two graphs.
\end{lemma}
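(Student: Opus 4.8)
The plan is to show that the two graphs already disagree on a closed‑walk/spectral invariant and that folklore $2$‑WL recovers all such invariants. Recall that $2$‑WL refines the colours of ordered pairs by $c^{(t+1)}(v,w)=\mathrm{hash}\big(c^{(t)}(v,w),\,\{\!\{(c^{(t)}(v,x),c^{(t)}(x,w)):x\in V\}\!\}\big)$, starting from the atomic type of $(v,w)$ (which records $v=w$ and $vw\in E$). By induction on $t$ one sees that $c^{(t)}(v,w)$ determines the number $N_s(v,w)$ of length‑$s$ walks from $v$ to $w$ for every $s\le t$: in the step, $N_{t+1}(v,w)=\sum_x[vx\in E]\,N_t(x,w)$ is read off the refining multiset, since $c^{(t)}$ still reveals adjacency and, inductively, reveals $N_t$; and once the colouring stabilises the same identity shows a stable colour determines $N_s(v,w)$ for all $s$. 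Summing over the diagonal pairs, $\mathrm{tr}(A^s)$ (equivalently, $\mathrm{hom}(C_s,\cdot)$) is an invariant of the stable $2$‑WL colouring, so it suffices to exhibit a length on which the adjacency matrices $A_1$ of $C_{\ell,\ell}+u$ and $A_2$ of $C_{2\ell}+u$ differ (here $+u$ denotes adding the apex $u$ joined to all $2\ell$ rim vertices, i.e.\ the two graphs of the construction).

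I would take length $\ell$ and count closed $\ell$‑walks, split according to whether the walk visits $u$. A closed $\ell$‑walk that visits $u$ breaks cyclically, at its visits to $u$, into arcs that run from $u$ to $u$ through rim vertices; an arc of length $L\ge2$ has exactly $\sum_{a,b\ \mathrm{rim}}\#\{\text{rim walks }a\to b\text{ of length }L-2\}=\mathbf{1}^{\!\top}A_{\mathrm{rim}}^{\,L-2}\mathbf{1}=2^{L-2}\cdot 2\ell$ realizations, a number depending only on the rim being $2$‑regular on $2\ell$ vertices, hence the same in both graphs (the arcs are filled in independently because every rim vertex is adjacent to $u$). So the closed $\ell$‑walks meeting $u$ are equinumerous in the two graphs. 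The closed $\ell$‑walks avoiding $u$ are the closed $\ell$‑walks of the rim graph itself: a based such walk in $C_\ell\sqcup C_\ell$ is a $\pm1$‑sequence of length $\ell$ with total displacement $0$ or $\pm\ell$, i.e.\ $\binom{\ell}{\ell/2}[\ell\text{ even}]+2$ per vertex, while in $C_{2\ell}$ only displacement $0$ is possible, i.e.\ $\binom{\ell}{\ell/2}[\ell\text{ even}]$ per vertex; over $2\ell$ vertices this is exactly $4\ell$ more in $C_{\ell,\ell}+u$ — the $2\ell$ rim vertices times the $2$ directions around the short cycle through each. Hence $\mathrm{tr}(A_1^{\ell})=\mathrm{tr}(A_2^{\ell})+4\ell\ne\mathrm{tr}(A_2^{\ell})$, so by the first paragraph $2$‑WL separates the two graphs. (Equivalently, one may check that $2\cos(\pi/\ell)$ is an eigenvalue of $C_{2\ell}+u$ but — by a parity argument comparing $\pi/\ell$ with $2\pi j/\ell$ modulo $2\pi$ — not of $C_{\ell,\ell}+u$, so the graphs are not cospectral.)

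I expect the main obstacle to be the bookkeeping in the second paragraph: making the arc decomposition of $u$‑visiting walks fully precise (cyclic cutting, based versus unbased walks, independence of the arcs) and confirming that the per‑arc count uses nothing beyond $2$‑regularity of the rim, so that the two graphs provably agree on every closed $\ell$‑walk that meets $u$; after that the pure‑rim count and the ``$+4\ell$'' are a one‑line computation, with only the $\ell$‑even parity caveat to carry along. The inductive claim that $2$‑WL recovers all walk counts (or, if one prefers, the homomorphism‑counting characterisation of $2$‑WL over treewidth‑$2$ patterns) is standard. I would also note that this ``$+4\ell$'' is simply the global counterpart of the intuition in the main‑text sketch: only $C_{\ell,\ell}+u$ admits a closed walk of length $\ell$ that wraps all the way around a cycle.
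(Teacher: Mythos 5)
Your proposal is correct, and it takes a genuinely different route from the paper. The paper argues in two modular steps: first, that $2$-WL distinguishes the rims alone (two $\ell$-cycles versus one $2\ell$-cycle, e.g.\ because $2$-WL identifies graphs of treewidth $2$, or via Kiefer's result on separating pairs), and second, a transfer lemma stating that attaching a universal apex node to both graphs preserves $2$-WL distinguishability. You instead exhibit a concrete invariant of the \emph{full} graphs: the closed-walk count $\mathrm{tr}(A^{\ell})$, which $2$-WL determines (the standard induction showing FWL colours determine walk counts, equivalently that $2$-WL-equivalent graphs are cospectral), and which differs by exactly $4\ell$ between the two constructions. Your bookkeeping checks out: writing $\mathrm{tr}(B^{\ell})$ as a sum over maps $\mathbb{Z}_{\ell}\to V$ with adjacent consecutive images and conditioning on the set of positions mapped to $u$ (necessarily with no two cyclically consecutive, since $u$ has no loop), each gap of $g$ rim positions contributes $\mathbf{1}^{\top}A_{\mathrm{rim}}^{\,g-1}\mathbf{1}=2\ell\cdot 2^{g-1}$ by $2$-regularity alone, so the $u$-visiting walks agree; and the rim-only count differs by $2$ per vertex (the two all-the-way-around walks in $C_{\ell}$, with the $\binom{\ell}{\ell/2}$ term cancelling), for any $\ell\geq 3$ and either parity. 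The trade-off: the paper's route is shorter and reusable (the apex-preservation lemma is used again for Theorem \ref{th:NWL}), while yours is self-contained, needs no black-box facts about treewidth-$2$ identifiability or apex preservation, and produces an explicit numerical certificate that matches the intuition in the main text — only $C_{\ell,\ell}$ admits a length-$\ell$ closed walk wrapping a whole cycle.
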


\begin{proof}
If we consider the two graphs without the fully connected node $u$, then one can easily show that $2$-WL can distinguish the two graphs (e.g. since it is known that $2$-WL can distinguish graphs of treewidth $2$).

Furthermore, whenever two graphs (without a fully connected node) can be distinguished by $2$-WL, then the same holds after a fully connected node is added to both graphs. Intuitively, whenever a pair of original nodes receives a different color in a refinement step in the original graph, they will also receive a different color in the new graphs, since their relationship to each other remains unchanged. As for node pairs that contain the newly added node: these can easily be distinguished already in the first color refinement step (due to the full connectivity of the new node), so they cannot be confused with original node pairs.

Alternatively, one can use the result of Kiefer \cite{kiefer2019power}, which shows that separating pairs of nodes can already be recognized by $2$-WL. Such a pair exists in $C_{\ell, \ell}$, but not in $C_{2\ell}$.
\end{proof}

\section{Further proofs for Sections \ref{sec:comp1}$-$\ref{sec:comp3}} \label{app:other}

We now discuss the proofs for Theorems \ref{th:NWL}--\ref{th:SWL}. We begin with a proof of Theorem \ref{th:NWL}, i.e. that $N_1 \succ k$-WL for any choice of $k$.

\renewcommand*{\proofname}{Proof of Theorem \ref{th:NWL}}

\begin{proof}
One can show this by slightly extending the graph construction of Cai \textit{et al.} \cite{CFI} (we discuss this construction later in the proof of Theorem \ref{th:CFI}); this defines a pair of graphs $G$ and $G'$ that are indistinguishable by $k$-WL.

Let us now add a new node $u$ to these graphs, and connect $u$ to every other node. One can show that these graphs still remain indistinguishable to $k$-WL, following the same line of thought as in the proof of Lemma \ref{lem3}.

However, now the entire original graphs are in the induced $1$-hop neighborhood of $u$. Since $G$ and $G'$ are non-isomorphic, $u$ receives a different new feature in the two graphs, and hence $N_1$ can separate them.
\end{proof}

Our next two theorems can be shown with a relatively simple construction of cycle graphs.

\renewcommand*{\proofname}{Proof of Theorem \ref{th:cycfirst}}

\begin{proof}
Consider two cycles of length $\ell_1 = 2k+2$ and $\ell_2 = 2k+3$, respectively. Recall that cycles of different length are one of the most popular example for graphs that are not distinguishable by standard GNNs. Furthermore, in both graphs, the induced $k$-hop neighborhood of any node is simply a path of length $(2k+1)$, so every node (in both graphs) receives the same extra features with $N_k$.

On the other hand, $2$-WL is already able to distinguish cycles of different length, as mentioned before in Lemma \ref{lem3}.
\end{proof}

\renewcommand*{\proofname}{Proof of Theorem \ref{th:markcycle}}

\begin{proof}
Once again, let us consider two cycles of length $\ell_1 = 2k+2$ and $\ell_2 = 2k+3$, respectively, with $d=k+1$. Recall from Theorem \ref{th:cycfirst} that the extra features of $N_k$ are of no use in this case. Similarly, for $S_k$, the multiset of incident subgraphs is identical in the two graphs (paths up to length $k-1$).

One the other hand, the two graphs can already be distinguished when we have a marked node at distance $(k+1)$ from $u$. In the cycle of length $\ell_1$, this will mean that $u$ observes a marked node at distance $(k+1)$ in both directions in the tree representation of the graph; on the other hand, in the $\ell_2$-cycle, any single marked node will only appear within distance $k+1$ from $u$ in one of the two directions. The message passing phase can easily distinguish these two cases in a sufficiently strong GNN (e.g. an injective one as in Theorem \ref{th:inj}).
\end{proof}

The proof of Theorem \ref{th:rook} has already been outlined in Section \ref{sec:compare}. Since the Rook's $4x4$ and Shrikhande graphs are strongly regular with the same parameters, it is known that they cannot be distinguished by $2$-WL. On the other hand, two markings are enough to distinguish the graphs if $d=2$. Consider a $2$-marking where the marked nodes are adjacent both to $u$ and to each other; there are exactly $6$ such $2$-markings in both graphs. Furthermore, a GNN with $d=2$ can easily verify that this is the case: the marked nodes recognize in the first round that they have a marked neighbor, and then they pass this information on to $u$ in the second round.

Now consider the remaining (unmarked) neighbors of $u$ in both graphs. In the Rook's graph, $u$ has an unmarked neighbor that is itself adjacent to two marked nodes. This situation can also be recognized with $d=2$: the unmarked node concludes in the first round that it has two marked neighbors, and then notifies $u$ in the second round. Based on these cases, $u$ can deduce that its induced $1$-hop neighborhood is a $C_{3,3}$ instead of a $C_6$, and hence it is in the Rook's graph. In contrast to this, in the Shrikhande graph, $u$ will have two unmarked neighbors that are both adjacent only to a single marked node, so they will never send a similar message to $u$.

Recall that Theorem \ref{th:GSN} has already been shown in previous work. It follows in a relatively straightforward way from the graphs in Figure \ref{fig:rook}: the nodes in the Rook's $4x4$ graph are incident to $4$-cliques, while the nodes in the Shrikhande graph are not.

On the other hand, the number of incident triangles and paths of length $2$ is straightforward to deduce in the color refinement step of $2$-WL when it inspects the relationship of an adjacent node pair to all other nodes in the graph. This shows that $2$-WL can compute the extra features available to $S_3$, and hence $S_3$ $\subseteq$ $2$-WL.

Finally, the proof of Theorem \ref{th:SWL} has again been outlined in Section \ref{sec:compare}.

\renewcommand*{\proofname}{Proof details for Theorem \ref{th:SWL}}

\begin{proof}
In case of the containment result, the counts of each substructure apart from the path of length $(k+1)$ (i.e.\ the path on $(k+2)$ nodes) can be directly computed from the isomorphism class of the induced $k$-hop neighborhood. We can also count all the paths on $(k+2)$ nodes that are entirely contained in the induced $k$-hop neighborhood. The only remaining paths are those that have the first $k$ edges within the induced $k$-hop neighborhood, and the last edge outside of it; that is, the penultimate and last nodes of the path are at distances $k$ and $(k+1)$ from $u$, respectively.

In these cases, we can consider the degree of the node at distance $k$ (which is available to a GNN after $d \geq k+1$ rounds), subtract from this the degree of this nodes within the induced $k$-hop neighborhood, and we get the number of edges this penultimate node has to other nodes that are at distance $(k+1)$ from $u$. All such edges will provide a separate path of length $(k+1)$ that is incident to $u$. Note that any such path is indeed induced, i.e. no two nodes in it are adjacent, since otherwise the final node would be reachable from $u$ in less than $(k+1)$ hops.

As a technicality, note that even though the isomorphism class of the induced $k$-hop neighborhood is known, it might be non-trivial to figure out which node in the tree representation around $u$ corresponds to which node in the induced $k$-hop neighborhood. As such, finding the degree of nodes at distance $k$ is not necessarily trivial. To do this, one solution is to compare all the walks of length $k$ from $u$ in both the induced $k$-hop neighborhood graph and the $d$-hop tree representation. That is, if a walk of length $k$ ends in a node that is not at distance $k$ (but closer), then we can infer the degree of this node already from the graph known by $N_k$. As such, we can (i) collect all the walks of length $k$ from $u$ in the preprocessed graph, noting the degree of the final node, (ii) collect all walks of length $k$ in the tree representation, also noting the degree of the final node, and then (iii) subtract the first set from the other to get the degree of all nodes at distance $k$. Then from this we can subtract the edges that go from nodes at distance $k$ to other nodes within the induced $k$-hop neighborhood (i.e.\ to nodes at distance $(k-1)$ or $k$); this gives us the number of ways we can complete our distance-$k$ paths to with edges to distance-$(k+1)$ nodes, and hence the number of paths of length $(k+1)$.

Now consider the second statement in the theorem, i.e. $S_{(k+3)} \succ N_k$. In our example for this, the induced $k$-hop neighborhood of $u$ in both graphs is simply a path of length $(k-1)$. Furthermore, in two more rounds after reaching the end of the path, a standard GNN cannot distinguish the different structures at the end (just like a standard GNN in $2$ rounds cannot distinguish a triangle and two outgoing paths of length $2$ from $u$). As such, the graphs are not distinguished by $N_k$.

On the other hand, $S_{(k+3)}$ can detect the entire graph in $G_1$, whereas $G_2$ contains different structures of size $(k+3)$ (and in particular, none of those contain a triangle). Hence $S_{(k+3)}$ assigns different features to $u$ in the two cases, which is already enough to distinguish the graphs.
\end{proof}

\section{Proof of Theorem \ref{th:CFI}} \label{app:CFI}

Note that using some of our previous constructions, we can prove the claims of Theorem \ref{th:CFI} in a relatively straightforward manner. The cycle graphs of Theorem \ref{th:markcycle} already show an example where $S_k$ requires a parameter choice of $k \geq n_i - O(1)$ (with an extra leaf node added at the farthest point from $u$ in the $\ell_1$-cycle if we insist on having $|G|=|G'|$). The construction for the second part of Theorem \ref{th:SWL} shows an example where $N_k$ needs to have $k \geq n_i - O(1)$ to separate the two graphs. Finally, the $C_{\ell, \ell}$ vs. $C_{2\ell}$ construction of Theorem \ref{th:CvC} requires us to have at least $\frac{n_i-1}{4}=\Omega(n_i)$ nodes marked in order to distinguish the two cases.

We point out that one can also combine these properties in a single graph, with the slight drawback that the difference of the parameters of $S_k$ and $N_k$ will also turn from an additive to a multiplicative constant; that is, the new claim will only state that $G_i$ and $G_i'$ cannot be distinguished with $N_k$ and $S_k$ unless $k \geq n_i \, / \, O(1)$. In the rest of the section, we outline the main idea of a construction that fulfills these properties.

To combine the properties into a single construction, we turn to the CFI graphs devised in \cite{CFI}. For a detailed description of this construction, we refer the reader to the original work of the authors. Intuitively, the construction is based on a graph transformation which replaces each node and edge of an original graph $G_0$ by a specific gadget to obtain a graph $G$, and then ``twists'' one of the edge gadgets to also obtain a twisted graph $G'$. The node and edge gadgets are designed such that the twist can be ``moved around'' in the graph. That is, if the twist is on an incident edge to an original node $v$ of $G_0$, and we untwist this edge and twist another edge that is incident to $v$ instead, then the resulting graph is still isomorphic to $G'$.

This already hints that the graphs $G$ and $G'$ are very hard to distinguish for any isomorphism test: essentially, if our algorithm ignores any edge $e$ of the original graph $G_0$, then by moving the twist to $e$ in $G'$, one can show that $G$ and $G'$ will seem identical to the algorithm.

This transformation already allows us to prove the theorem with the appropriate choice of $G_0$. For our proofs (and to satisfy the assumptions of the transformation), we will require the following properties from $G_0$: it has to be $3$-regular, and it needs to have a radius of $n_0\, / \,O(1)$ (where $n_0$ is the number of nodes of $G_0$). One can easily construct such a graph e.g. for any $n_0$ divisible by $4$: we take a cycle of length $n_0$, and for each $i \in {0, ..., \frac{n}{4}-1}$, we add the extra edges $(4i,4i+2)$ and $(4i+1,4i+3)$ (where nodes are numbered around the cycle). Note that the CFI transformation of this graph $G_0$ maintains the property that the radius of the graph is $n\, / \,O(1)$.

Let us execute the CFI transformation on this graph $G_0$, and select $d$ such that the $d$-hop neighborhood of (any) node $u$ contains the entire graph. Assum without loss of generality that $u$ is chosen within the node gadget corresponding to node $0$ in $G_0$.

Our knowledge of the radius already makes the claim on $N_k$ straightforward: it implies that there exists a constant $c \in O(1)$ such that the induced neighborhood of radius $k=n_i\, / \,c$ around a node $u$ does not contain every edge of the original graph (i.e. every edge gadget after transformation). This implies that we can move the twist to this missing edge of the graph, i.e. relabeling the nodes in the induced $(n_i\, / \,c)$-neighborhood of $u$ shows that this neighborhood is isomorphic in $G$ and $G'$. This shows that every node receives the same extra features, so $N_k$ is only as expressive as $1$-WL on this graph. On the other hand, $1$-WL clearly cannot distinguish $G$ and $G'$ since they are $3$-regular.

This also settles the question for $S_k$ indirectly: since the graphs are isomorphic within this radius, if we select $k=n_i\, / \,c$, the multiset of incident subgraphs (and hence all the newly added features) are identical. Again, the message passing phase is of no help since the graphs are $3$-regular.

For the case of $M_k$, one can show that the markings are indistinguishable unless we mark at least one node in linearly many node gadgets. That is, let us select $k=n_0\,/\,8-O(1)$; this ensures that $k$ is indeed in $\Omega(n_i)$. Let us consider the $k'$-markings of $G$ and $G'$ where $k' \in \{ 0, ..., k \}$. If we show a bijection from these markings in $G$ to the markings in $G'$ such that paired markings produce the same embedding for $u$ in the message passing phase (i.e. they receive the same color under $1$-WL when initialized according to these markings), then the two multisets of embeddings from the different runs is identical, and thus $u$ will compute the same final embedding for any run-aggregation function.

Let $n_0$ be divisible by $8$. Whenever $i$ is divisible by $4$, let us call the segment of the main cycle in $G_0$ from node $i$ to node $(i+3)$ a \textit{block}. Note that if $i$ is the beginning of a block, this means that the extra edge added to node $(i-1)$ (to achieve $3$-regularity) comes from an earlier node, while the extra edge added to $i$ goes to a later node; in other words, deleting the edge $(i-1, i)$ disconnects this part of the cycle.

Now consider edge edge $(\frac{n_0}{2}-1, \frac{n_0}{2})$ of the same main cycle; note that $\frac{n_0}{2}$ is divisible by $4$, so node $\frac{n_0}{2}$ is the beginning of a block. Consider an interpretation of $G'$ (i.e.\ a mapping between the nodes of $G$ and $G'$) where the twisted edge gadget corresponds to this edge $(\frac{n_0}{2}-1, \frac{n_0}{2})$ of $G_0$. Let us consider a $k'$-marking of $G$, and let us define the corresponding $k'$-marking in $G'$ as follows. In $G_0$, consider the position of the twisted edge and that of $u$ (i.e.\ the node in $G_0$ that corresponds to the node gadget containing $u$); these split the $n_0$-cycle of $G_0$ to two arcs of approximately equal size. In one of the arcs, we leave the marking unchanged: a node is marked in $G'$ exactly if its corresponding pair is marked in $G$. In the other arc, starting from the twisted edge, let us consider the first block $G_0$ such that no node is marked in the entire block (any gadgets of it); such a block must exist, since $k' \leq n_0\,/\,8-O(1)$. Let $v$ be the first node of this block in $G_0$ from the direction of the twisted edge. On the arc between $u$ and $v$, we keep the marking unchanged. However, between the twisted edge and $v$ (not including $v$), we follow the edge gadgets along the main cycle in $G_0$, and we swap the role of the $a$-nodes and $b$-nodes with regard to the marking: we mark an $a$-node in $G'$ if the corresponding $b$-node was marked in $G$, and vice versa. See the construction of Cai \textit{et al.} \cite{CFI} for more details on the roles of the specific nodes within the node gadgets.

One can show that the corresponding nodes receive the same color in $1$-WL if initialized with colors according to these markings in $G$ and $G'$ (and hence $u$ computes the same embedding in a GNN with markings). The marking defined in $G'$ essentially amounts to propagating the twisted edge to the first point where an entire node gadget is unmarked in the main cycle. As such, the larger arc of the cycle from $u$ to $v$ (which includes the original twisted edge) behaves identically in the two graphs: their isomorphism from $G$ to $G'$ also preserves the marking we defined. The only parts of $G'$ we have to discuss are (i) the extra edges added to the main cycle (for $3$-regularity) in the arc where the markings were modified, and (ii) the block containing node $v$.

For the extra edges within this arc, one can observe that they are in an identical situation for $1$-WL as in $G$: the automorphisms of the node gadgets in the construction are designed exactly such that the $a$-nodes and $b$-nodes can be swapped on the other two incident edges simultaneously without any effect. As for the block with node $v$, this is also not affected by the fact that the markings are modified up to node $v$; the block does not let this marking information pass through it by design. That is, let us initialize a whole block in $G$ with identical colors, and set the $4$ $a$-nodes and $b$-nodes at the boundaries of the block (connecting it to the rest of the graph) to arbitrary colors. One can verify that if we run $1$-WL on (i) this graph, and (ii) on the same graph after exchanging the colors of the $a$-node and $b$-node at one end of the block, then the nodes within the block receive the same final color in both cases. As such, the different marking pattern up to node $v$ has no effect within the block or besides $v$ (i.e.\ on the shorter arc between $v$ and $u$).

This shows that the corresponding nodes will indeed receive the same color in $1$-WL, and hence $u$ computes the same embedding in $G$ and $G'$.

\section{Proofs for Section \ref{sec:count}} \label{app:counting}

We now discuss our proofs on counting cliques and induced cycles with our GNN extensions. As a simple definition of counting, we can say that an extension counts a specific subgraph if there exists a GNN implementation where the following holds: whenever two nodes have a different number of incident cliques/cycles (up to some reasonable upper bound $L$), their final embedding is also different. Note that by applying a sufficiently powerful update function in the last round, we can also convert such an implementation to a GNN that actually assigns the number of incident subgraphs to $u$ as its final embedding.

We also note that we focus on \textit{induced} cycles because they have a more prominent role in some applications; however, our observations also carry over to counting cycles in general.

\subsection{Counting with $S_k$}

The first half of Theorem \ref{th:count_S} is straightforward: $S_k$ is directly provides $u$ with the number of incident $k$-cliques and induced $k$-cycles incident to $u$ as extra features.

To show that $S_k$ cannot count $(k+1)$-cycles, we can simply consider the proof of Theorem \ref{th:markcycle} with cycles of length $\ell_1=k+1$ and $\ell_2=k+2$, respectively. Since $S_k$ is only aware of paths of up to $k$ nodes, and the tree representations are identical, it cannot distinguish the two cases, even though the number of incident $(k+1)$-cycles is different.

Note that the result on counting cliques is also tight (i.e. $S_k$ cannot count $(k+1)$-cliques) for small $k$ values, such as $k=0$ and $k=1$. In particular, the graphs in Figure \ref{fig:WLcounter} and the $C_{3,3}$ vs. $C_6$ graphs show that $S_2$ and $S_3$ cannot count $3$-cliques and $4$-cliques, respectively.

\subsection{Counting with $N_k$}

From Theorem \ref{th:count_N}, the first statement is again straightforward: any $\ell$-clique (for any $\ell \geq 3$) is entirely contained within the induced $1$-hop neighborhood of $u$. Hence there is a well-defined function $f$ which assigns the appropriate number of $\ell$-cliques to any extra feature of $u$ (i.e. any induced $1$-hop neighborhood), and a sufficiently powerful GNN (e.g. with an injective \textsc{update} function) can compute $f$.

Similarly, the claim on counting $(2k+1)$-cycles follows from the fact that every induced $(2k+1)$-cycle is entirely contained in the induced $k$-hop neighborhood of $u$.

Finally, the claim on counting $(2k+2)$-cycles follows from the proof of Theorem \ref{th:markcycle} again: if we consider two cycles of length $\ell_1=2k+2$ and $\ell_2=2k+3$, respectively, then $N_k$ will compute the same final embedding for $u$ in the two cases.

\subsection{Counting with $M_k$}

In $M_k$, consider a run where $k$ distinct nodes (not including $u$) of the $(k+2)$-clique are marked. In the first round, each of the marked nodes can indeed confirm that it has $(k-1)$ marked neighbors. In the second round, consider the node $v$ in the clique which is unmarked and also not identical to $u$: this node can decide if it received $k$ distinct messages from $k$ marked nodes which all claim to have $(k-1)$ marked neighbors each. Finally, in the third round, if $u$ is informed of this situation by its neighbor $v$, and also has $k$ more adjacent marked nodes with $(k-1)$ marked neighbors each, then it can conclude that it is contained in a $(k+2)$-clique.

Note that it might also happen in the third round that $u$ has $k$ adjacent marked nodes, and it receives such a message from multiple nodes $v_1$, ..., $v_{\ell}$; this implies that there are $\ell$ different adjacent $(k+2)$-cliques that contain this $k$-marking.

With this approach, an injective GNN with markings can count the number of incident cliques from the set of all possible $k$-markings. Note that with this method, each incident clique is counted $(k+1)$ times, so we have to divide the final count by $(k+1)$ for the correct result. This finishes the proof of Theorem \ref{th:count_M}.

Once again, the graphs in Figure \ref{fig:WLcounter} and the $C_{3,3}$ vs. $C_6$ graphs show that the result is tight for $k=0$ and $k=1$, i.e. $M_0$ and $M_1$ cannot count $3$-cliques and $4$-cliques, respectively.

We note, however, that in this case, it is not straightforward to also transfer this result to cliques of smaller size. $M_k$ can still easily count cliques of size $(k+1)$ and $k$ by marking all nodes (apart from $u$ for $(k+1)$). However, for cliques of size $\ell<k$, we can only use this approach if the GNN is able to infer the number of marked nodes in a run, i.e. if it explicitly knows or recognizes that there are currently only $(\ell-2)$ marked nodes.

On the other hand, counting induced cycles with markings is a more involved question. What we can still prove here is the following.
\begin{lemma}
An $M_k$ GNN can count $(k+1)$-cycles if $d \geq k+1$.
\end{lemma}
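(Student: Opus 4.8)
The plan is to mimic the proof of Theorem~\ref{th:count_M}: given a candidate induced $(k+1)$-cycle through $u$, mark its $k$ vertices other than $u$, and build a sufficiently expressive GNN with markings (in the sense of Theorem~\ref{th:inj}) so that, within $d=k+1$ rounds, $u$ can recognise exactly the runs in which the marked set together with $u$ induces such a cycle. As discussed in Appendix~\ref{app:models}, we may work in the variant where $u$ is told the number $k'$ of marked nodes in each run, so that it can restrict attention to runs with $k'=k$; this avoids any multiplicity correction.

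In such a run, with marked set $S$, $|S|=k$, I would have the GNN first compute, after round~$1$, for every marked node its number of marked neighbours and whether it is adjacent to $u$; call a marked node an \emph{endpoint} if it has exactly one marked neighbour and is adjacent to $u$, and say a marked node is \emph{clean} if it has exactly two marked neighbours and is \emph{not} adjacent to $u$. Then, along the marked subgraph, propagate for each marked node the distances to the endpoints of its marked component, together with a flag recording whether all internal nodes seen so far are clean; since any such path has at most $k$ nodes, these values stabilise within $k-1$ further rounds, so after round~$k$ each endpoint knows the number of nodes in its marked component and whether that whole component is a path of clean internal nodes with both ends adjacent to $u$. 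Finally, at round $k+1$, $u$ fires iff it has exactly two marked neighbours and one of them reports being the endpoint of a clean marked path on $k$ nodes whose other end is also adjacent to $u$. For correctness: if $u$ fires with $|S|=k$, the reported path uses $k$ distinct marked nodes and hence all of $S$, so the marked subgraph is exactly that path (a chord would give some vertex a third marked neighbour and kill the propagation), the two path-ends are $u$'s only marked neighbours, no internal vertex is adjacent to $u$, and therefore $\{u\}\cup S$ induces exactly a $(k+1)$-cycle; conversely, for any induced $(k+1)$-cycle through $u$, the unique run marking its $k$ non-$u$ vertices makes every vertex have exactly the right number of marked neighbours (here inducedness is essential), so $u$ fires. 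Thus the number of $k$-marking runs in which $u$ fires equals the number of induced $(k+1)$-cycles through $u$, and an injective run-aggregation function outputs this count.

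The step I expect to be the most delicate is the bookkeeping together with the round budget: one must check that the distance/flag information genuinely certifies a \emph{single} path on all $k$ marked vertices — and not, say, a shorter clean path together with a disjoint marked cycle — and that it really reaches $u$ within $d=k+1$ rounds (the endpoint flags are finalised at round~$k$, and the message to $u$ arrives at round~$k+1$, which is exactly the budget, so the bound is tight). A minor additional technicality is the reliance on $u$ knowing $k'$: without it, a cycle could also be detected in runs marking a superset of its vertices with the extra marked nodes placed far away, so one would either invoke that variant of $M_k$ or have $u$ additionally verify that no unaccounted marked node is visible.
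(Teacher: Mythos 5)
Your proof follows essentially the same route as the paper's: mark the $k$ non-$u$ vertices of a candidate cycle, let each marked node determine its number of marked neighbours in round $1$, propagate this information along the marked path for $k-1$ further rounds so that the path-ends learn the size and path structure of their marked component by round $k$, and let $u$ draw the conclusion in round $k+1$; the observation that each induced $(k+1)$-cycle through $u$ corresponds to a unique $k$-marking (so no multiplicity correction is needed) also matches the paper. One small repair is needed: a marked node cannot locally test ``adjacent to $u$'' (or ``not adjacent to $u$'') in your definitions of \emph{endpoint} and \emph{clean}, since $u$ carries no distinguishing mark within a run — but these checks can simply be deferred to $u$ itself, as the paper does: once $u$ verifies that it has exactly two marked neighbours and that they are the two ends of a chord-free marked path covering all $k$ marked nodes, the internal path nodes are automatically non-adjacent to $u$. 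With that adjustment the argument is correct, and your explicit handling of chords and of possible disjoint marked components is a useful sharpening of the paper's brief sketch.
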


This lemma is easy to show: consider the case when each node of the induced cycle (except for $u$) is marked. In this case, the marked neighbors of $u$ can deduce in the first round that they only have a single marked neighbor. In the next round, their immediate marked neighbors can deduce that they have an outgoing marked path of $2$ nodes; if they have exactly $2$ marked neighbors, then they can communicate this to their neighbors and continue this process. With each node checking the number of its marked neighbors, the neighbors of $u$ find out after $k$ rounds that they are on the ends of a marked path of length $k$. If $u$ has $2$ such neighbors, then it can conclude that it is in an induced $(k+1)$ cycle in round $(k+1)$.

Note, however, that this is in some sense a significantly weaker result than what we had in most of the analyses in the paper. That is, even if the entire induced cycle is contained in e.g.\ the $2$-hop neighborhood of $u$, the GNN still has to pass a message around the cycle to recognize it with this method. It would be much more appealing to already be able to recognize the cycle from these markings as soon as its nodes are all contained within the $d$-hop neighborhood of $u$. However, this is not straightforward, since even if all nodes of the cycle (apart from $u$) are marked, it still remains challenging to decide if they form a single large cycle, or several smaller ones.

\section{Adjustments for graph classification} \label{app:global}

Finally, let us discuss the generalizations of our theorems to a graph classification setting. Note that whenever two $d$-hop neighborhoods can be distinguished by a node $u$ in an extension, then they are also distinguishable on a graph level from the different embedding of $u$. Hence we only need to discuss the cases when an extension cannot distinguish two neighborhoods around $u$, to ensure that the graphs cannot be distinguished in this case by the embeddings of the remaining nodes either.

Note that in many of our constructions (e.g. the Rook's $4x4$ / Shrikhande graphs), the role of each node is symmetric, so indistinguishability from a specific node's perspective also carries over to the whole graph, and the constructions require no modification. A same holds for the cycle graphs, apart from the fact that here the two graphs have different size; to make them indistinguishable on a graph level, we can take $\ell_2$ distinct copies of the $\ell_1$-cycle, and $\ell_1$ distinct copies of the $\ell_2$-cycle as our two new graphs (both on $\ell_1 \cdot \ell_2$ nodes). This settles the case of Theorems \ref{th:cycfirst}--\ref{th:GSN}.

Theorem \ref{th:NWL}  also requires no change, since $k$-WL is already known to be unable to distinguish these graphs even in a global setting. The same holds for the containment part (first claim) of Theorem \ref{th:SWL}, which also carries over without difficulty.

As such, we only need to revisit the proofs that are based on the $C_{\ell, \ell}$ vs. $C_{2\ell}$ construction (Theorems \ref{th:CvC} and \ref{th:CvC2}), the construction showing $S_{(k+3)}$ $\succ$ $N_k$ (the second part of Theorem \ref{th:SWL}), and the counting results in Section \ref{sec:count}.

\subsection{$C_{\ell, \ell}$ vs. $C_{2\ell}$ construction}

First consider the case of $S_k$, i.e. the claims $N_1$ $\succ$ $S_k$ and $2$-WL $\succ$ $S_k$ from Theorem \ref{th:CvC}. Note since we now consider the $d$-hop neighborhood of all nodes, we first of all have to change our example to $d=3$ to fulfill our assumption that the preprocessing phase does not go beyond the nodes that are reachable in the message passing phase (note that the entire graph is within the $3$-hop neighborhood of any node).

Furthermore, we have to change the condition in Lemma \ref{lem1} to $k<\ell$ in this case; for $k=\ell$, the cycle nodes in $C_{\ell, \ell}$ would be aware of the presence of the $\ell$-cycle, and could thus distinguish the graphs. However, once we select a value $\ell>k$, the proof works as before, since no node can preprocess an entire $\ell$-cycle. More specifically, each node along the cycles will have the same subgraph counts in the two graphs. For subgraphs not containing $u$, this is easy to see, since these are always subgraphs chosen from within a cycle of length $\ell>k$. For subgraphs containing $u$, one can show this analogously to Lemma \ref{lem1}: the proof also applies if we restrict ourselves to selections in $\mathcal{P}_1$ and $\mathcal{P}_2$ where a specific node is always selected. This is easiest to see if we place this special node to a non-swapping position in the cycles, e.g.\ as node number $\ell$.

Now consider the case of $M_k$, i.e. the claims $N_1$ $\succ$ $M_k$ and $2$-WL $\succ$ $M_k$ in Theorem \ref{th:CvC}. Here we can leave $d=2$ as before; however, we have to make sure that the cycle nodes do not reach the other end of the cycle in $d$ rounds, since this would allow them to identify the $\ell$-cycle with a single marking (as in Theorem \ref{th:markcycle}). That is, we must always select $\ell>2 \cdot d = 4$, so we only consider this construction with $\ell \geq 5$.

However, with this restriction, one can show that Lemma \ref{lem2} carries over to this case. In particular, we know that the fully connected node $u$ computes the same embedding in every round for the two graphs, so the remaining nodes receive no useful information from $u$ to distinguish the two graphs. On the other hand, with node $u$ disregarded, the $d$-hop neighborhood of each cycle node is a path of length $2d$, which exhibits the same possible marking configurations for any of the remaining nodes in either of the two graphs. Hence each cycle node will also compute the same final embedding in case of $M_k$.

It only remains to discuss Theorem \ref{th:CvC2}, which compares $S_k$ to $M_j$. In this case, our assumptions on $S_k$ forces us to select $d=3$; hence to make the graphs indistinguishable to $M_j$, we need to choose $\ell>2 \cdot d = 6$. If this $\ell \geq 7$ holds, then the graphs are indeed indistinguishable by $M_j$, as discussed before. In fact, since the nodes in the $\ell$-cycle are already aware of the $\ell$-cycle with $S_{\ell}$ (and recall that we choose $\ell=2k+1$), our construction even proves the slightly tighter result of $S_{(2k+1)}$ $\succ$ $M_k$ in the graph classification setting. The only special cases are $k=1$ and $k=2$, where we cannot choose $\ell=2k+1$ due to $\ell \geq 7$. As such, all that follows from this proof in regard to these cases is that $S_7$ $\succ$ $M_1$ and $S_7$ $\succ$ $M_2$.

We point out that this line of thought also shows the same slightly stronger result for node classification for $k \geq 3$: that is, looking at the same construction from the perspective of one of the cycle nodes, it follows that $S_{(2k+1)}$ $\succ$ $M_k$ for any $k \geq 3$. We have decided to still present Theorem \ref{th:CvC2} in Section \ref{sec:compare} in its current, slightly weaker form since it covers the cases $k=1$ and $k=2$, which are significantly more relevant in practice, and are also the cases that are visible in Figure \ref{fig:sum}.

\subsection{Showing $S_{(k+3)}$ $\succ$ $N_k$}

This claim is the only one that is significantly different for graph classification. The graph in Theorem \ref{th:SWL} was specifically designed to have the indistinguishable part as far from $u$ as possible, and it seems to be non-trivial to generalize such a scenario to a graph classification setting, i.e.\ to construct a graph for general $k$ where every node is in a similar situation.

As such, the only straightforward graph where one can show that $S_i$ is superior to $M_j$ in a graph classification sense is a cycle graph on $(2k+2)$ nodes: this cannot be distinguished by $N_k$, but it can easily be separated by $S_{(2k+2)}$. Hence in this case, we can only show a weaker result between these two extensions, namely that $S_{(2k+2)}$ $\succ$ $N_k$ for any $k \geq 1$.

\subsection{Results on counting}

Note that the positive results on counting carry over to the graph classification setting easily: if each node knows the number of incident copies of a subgraph $G$, then we only need to sum up these numbers over all nodes and divide it by the size of the subgraph. Hence it only remains to discuss the negative results in our theorems. For showing the we cannot count $(k+1)$-cycles with $S_k$, we have used cycles of length $\ell_1$ and $\ell_2$; we can again adjust these by taking many independent cycles (on altogether $\ell_1 \cdot \ell_2$ nodes). The same holds for the proof that we cannot count $(2k+2)$-cycles with $N_k$.

\end{document}